\documentclass{article}

 \usepackage[preprint]{neurips_2026}

\usepackage[utf8]{inputenc} 
\usepackage[T1]{fontenc}    
\usepackage{hyperref}       
\usepackage{url}            
\usepackage{booktabs}       
\usepackage{amsfonts}       
\usepackage{nicefrac}       
\usepackage{microtype}      
\usepackage{xcolor}         

\usepackage{amsmath,amssymb,amsthm}
\usepackage{mathtools}
\usepackage{enumitem}
\usepackage{hyperref}
\usepackage{xcolor}
\usepackage{algorithm}
\usepackage{algpseudocode}
\usepackage{subcaption}
\usepackage{graphicx}
\usepackage{wrapfig}

\usepackage{array}
\usepackage{colortbl}
\usepackage{longtable}
 
\definecolor{headerblue}{RGB}{41, 65, 122}
\definecolor{lightgray}{RGB}{245, 245, 245}
\definecolor{correctgreen}{RGB}{34, 120, 60}
\definecolor{incorrectred}{RGB}{180, 30, 30}

\newtheorem{definition}{Definition}

\newtheorem{lemma}{Lemma}
\newtheorem{theorem}{Theorem}

\newtheorem{remark}{Remark}

\newtheorem{atheorem}{Theorem}[section]
\newtheorem{aproposition}{Proposition}[section]
\newtheorem{aremark}{Remark}[section]

\usepackage{relsize}
\newcommand{\std}[1]{{\color{gray}{\scriptsize $\pm$#1}}}

\title{Hidden States Know Where Reasoning Diverges: Credit Assignment via Span-Level Wasserstein Distance}

\author{%
\textbf{Xinzhu Chen$^{1}$\thanks{Equal contribution.}, Wei He$^{2}$\footnotemark[1], Huichuan Fan$^2$, Wenzhe Niu$^3$, Zhongxiang Sun$^4$, Xuanru Wang$^5$,} \\ 
\textbf{Jiuchong Gao$^2$, Jinghua Hao$^2$, Renqing He$^2$, Weijie Yu$^6$\thanks{Corresponding author.}} \\ 
$^1$Beijing University of Posts and Telecommunications,
$^2$Meituan, $^3$Tianjin University,\\ 
$^4$Renmin University of China, $^5$The University of Melbourne,\\ 
$^6$University of International Business and Economics\\ 
\texttt{c1456355244@gmail.com}, \texttt{yu@uibe.edu.cn}\\
}

\begin{document}

\maketitle

\begin{abstract}
Group Relative Policy Optimization (GRPO) performs coarse-grained credit assignment in reinforcement learning with verifiable rewards (RLVR) by assigning the same advantage to all tokens in a rollout. Process reward models can provide finer-grained supervision, but they require step-level annotation or additional reward modeling.
We show that hidden-state distributions contain a useful signal for local reasoning quality that can be extracted using only outcome-level correctness labels available in RLVR. 
Specifically, within each GRPO group, the Wasserstein distance between span-level hidden state distributions of correct and incorrect rollouts increases around regions where their local reasoning quality diverges. 
This association holds both across examples and within individual trajectories, suggesting that hidden-state distributional divergence can serve as a self-supervision signal for fine-grained credit assignment.
We formalize this observation with a separation theorem showing that, under mild structural assumptions, post-divergence spans have larger Wasserstein distances than pre-divergence spans whenever the population-level distributional gap exceeds finite-sample noise. 
Motivated by this result, we propose \textbf{S}pan-level \textbf{H}idden state \textbf{E}nabled \textbf{A}dvantage \textbf{R}eweighting (SHEAR), which modifies GRPO by using span-level Wasserstein distances to scale token-level advantages, amplifying updates on tokens whose hidden states are more separated from the opposing group.
The method requires no additional model and only minimal changes to the training pipeline.
Experiments on five mathematical reasoning benchmarks and five code generation
benchmarks show improvements over standard GRPO and strong performance relative to supervised process reward models, while requiring no additional annotation or reward model training.

\end{abstract}

\section{Introduction}\label{sec:intro}
Reinforcement learning with verifiable rewards (RLVR) is a widely used paradigm for improving reasoning in large language models~\cite{yan2025rlvr_2,zhao2025rlvr_1}, with Group Relative Policy Optimization (GRPO)~\cite{shao2024grpo} being one of its most commonly adopted algorithms. In GRPO, the model samples multiple reasoning trajectories for each problem, receives outcome-level correctness feedback, and updates its policy based on the relative quality of these trajectories. A central challenge in this setting is credit assignment, that is, identifying which parts of a reasoning chain are responsible for success or failure. GRPO assigns the same advantage to every token in a rollout, leaving this finer-grained problem unresolved. This limitation may become more severe on the complicated tasks, which require more thinking and execution steps~\cite{lightman2023math500}. A single erroneous step can derail an otherwise sound trajectory, yet all tokens receive the same gradient signal regardless of their contribution to the final outcome. This is especially problematic in long-form mathematical reasoning\cite{ai_mo_aime2024,math_ai_amc23,opencompass_aime2025} and code generation~\cite{he2024OlympiadBench,jain2024livecodebench,liu2023Humaneval_mbpp}, where chains routinely span hundreds of tokens.

The dominant approach to fine-grained credit assignment is the \emph{process reward model} (PRM)~\cite{cheng2025pure,cui2025prime,lightman2023math500}, which assigns rewards to individual reasoning steps. PRMs have shown clear advantages over outcome-level supervision, but they rely on costly step-level supervision. Training a PRM typically requires either human-annotated intermediate labels or an auxiliary reward model trained on such annotations, making the resulting credit signal dependent on the quality of the supervision. 
This issue is particularly acute on complicated tasks, where the correctness of intermediate steps is often ambiguous even to experts~\cite{cui2025prime}. These considerations motivate a natural question. Can we obtain process-level credit signals without step-level annotation or an additional reward model?

We approach this question from a representational perspective. Instead of training an external model to score individual reasoning steps, we examine whether the language model's own hidden states contain a signal correlated with local reasoning quality. Hidden states summarize the evolving reasoning context, so reasoning errors may induce systematic distributional differences between correct and incorrect trajectories. If such differences are detectable at the span level, they could provide a self-supervised signal for process-level credit assignment using only the outcome-level correctness labels available in RLVR. We investigate this possibility by measuring the Wasserstein distance~\cite{arjovsky2017wasserstein,peyre2019computational} between span-level hidden state distributions of correct and incorrect rollouts within each GRPO group.

Our empirical analysis suggests that such a signal indeed exists. By comparing span-level hidden state distributions between correct and incorrect rollouts within the same GRPO group, we find that the Wasserstein distance provides an effective proxy for local reasoning quality. It remains low where correct and incorrect trajectories are distributionally similar and increases where they diverge, with the two trends exhibiting aligned transition zones (Section~\ref{sec:observation}). Building on this finding, we propose a simple modification to GRPO in which span-level Wasserstein distances are used as token-level weights in the advantage function, amplifying the existing advantage direction at positions where correct and incorrect trajectories are most distinct. The method requires no additional model, no step-level labels, and only minimal changes to the existing training pipeline (Section~\ref{sec:method}). We further formalize the underlying phenomenon with a separation theorem showing that, under mild structural assumptions, post-divergence spans have larger Wasserstein distances than pre-divergence spans whenever the population-level distributional difference exceeds finite-sample noise (Section~\ref{sec:theory}).

Our contributions are three fold:
\begin{enumerate}[nosep, leftmargin=1.5em]
    \item \textbf{Empirical finding.} We show that span-level Wasserstein distance between hidden state distributions of correct and incorrect rollouts serves as a self-supervised signal that tracks reasoning quality without requiring any step-level annotation. This finding suggests that the model's own representations carry process-level information that can be exploited without training a separate reward model (Section~\ref{sec:observation}).
    
    \item \textbf{Theoretical foundation.} We prove a separation theorem establishing that post-error spans exhibit larger Wasserstein distances than pre-error spans whenever the population-level distributional difference exceeds finite-sample noise, which is a condition we verify empirically. We further extend this result to the group-level minimum distance used in practice and provide a principled justification for using Wasserstein distance as a credit assignment signal (Section~\ref{sec:theory}).
    
    \item \textbf{Method and empirical validation.} We propose \textbf{S}pan-level \textbf{H}idden state \textbf{E}nabled \textbf{A}dvantage \textbf{R}eweighting (SHEAR),
    a simple modification to GRPO that uses span-level Wasserstein distances to weight token-level advantages. Experiments on five mathematical reasoning benchmarks and five code generation benchmarks show consistent improvements over standard GRPO and strong performance relative to supervised process reward models, while requiring no additional annotation or reward model training (Section~\ref{sec:experiments}).
\end{enumerate}


\section{Distributional Divergence in Hidden States}\label{sec:observation}
 
Before presenting our method, we establish an empirical observation that motivates and grounds the entire approach--- in a controlled diagnostic setting, hidden-state distributional structure is strongly associated with local reasoning quality, and Wasserstein distance provides a useful way to measure this structure.
 
\paragraph{Setup.}
We use MATH500~\cite{lightman2023math500} as the evaluation benchmark with Qwen2.5-Math-7B~\cite{yang2024qwen2.5} as the base model. For each problem, we sample $G{=}8$ rollouts and retain only problems where the model produces a mix of correct and incorrect answers. This filtering is required because our diagnostic compares correct and incorrect trajectories within the same rollout group and it excludes uniformly easy or uniformly hard problems where no opposing group is available. The retained subset comprises $83.8\%$ of MATH500 problems, covering a range of difficulty levels (Please refer to Appendix~\ref{app:acc_distribution} for details).
 
For each retained rollout, we extract prefixes at varying truncation points using a sliding window (starting length $w{=}100$ tokens, stride $s{=}25$ tokens), excluding spans that contain the final answer to avoid trivial leakage from answer-revealing tokens. At each truncation point, we estimate a \emph{continuation success rate} by letting the model complete the reasoning 16 times from that prefix and recording the fraction of correct completions. We use this quantity as a proxy for local reasoning quality where prefixes that preserve a valid reasoning trajectory should admit a higher probability of correct completion.
Independently, at each prefix position we collect the hidden states of the most recent $w$ tokens and compute the Wasserstein distance between this local hidden state distribution and the closest matching distribution from the opposing group (correct vs.\ incorrect). We adopt Wasserstein distance as our primary metric due to its ability to capture distributional differences beyond mean shift. The comparison with alternative metrics is provided in Section~\ref{sec:exp-ablation}. Full details of the distance computation are described in Section~\ref{sec:algorithm}.

\begin{figure}[t]
\centering
\begin{subfigure}[b]{0.48\textwidth}
    \includegraphics[width=\textwidth]{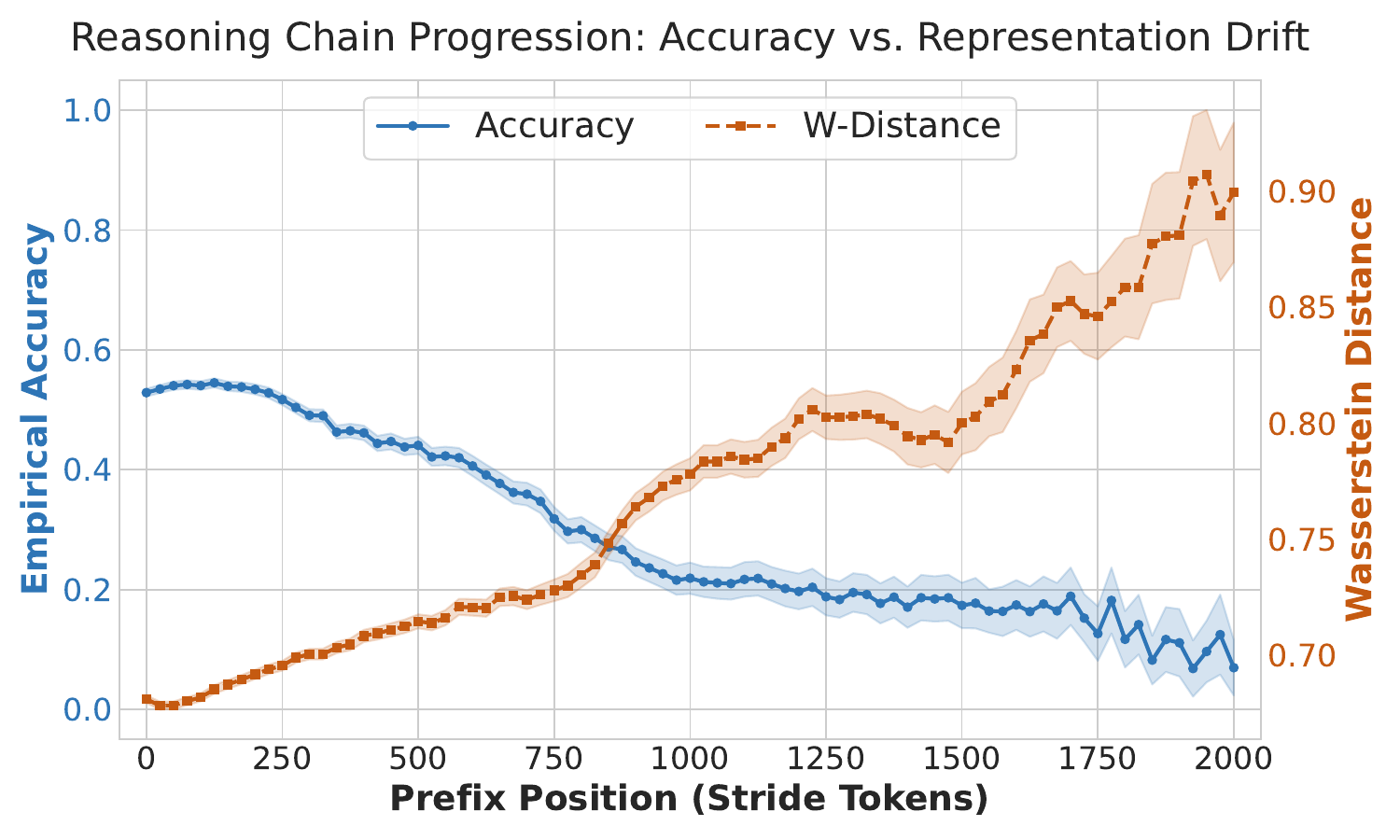}
    \caption{Aggregate trend along the reasoning chain}
    \label{fig:aggregate_trend}
\end{subfigure}
\hfill
\begin{subfigure}[b]{0.48\textwidth}
    \includegraphics[width=\textwidth]{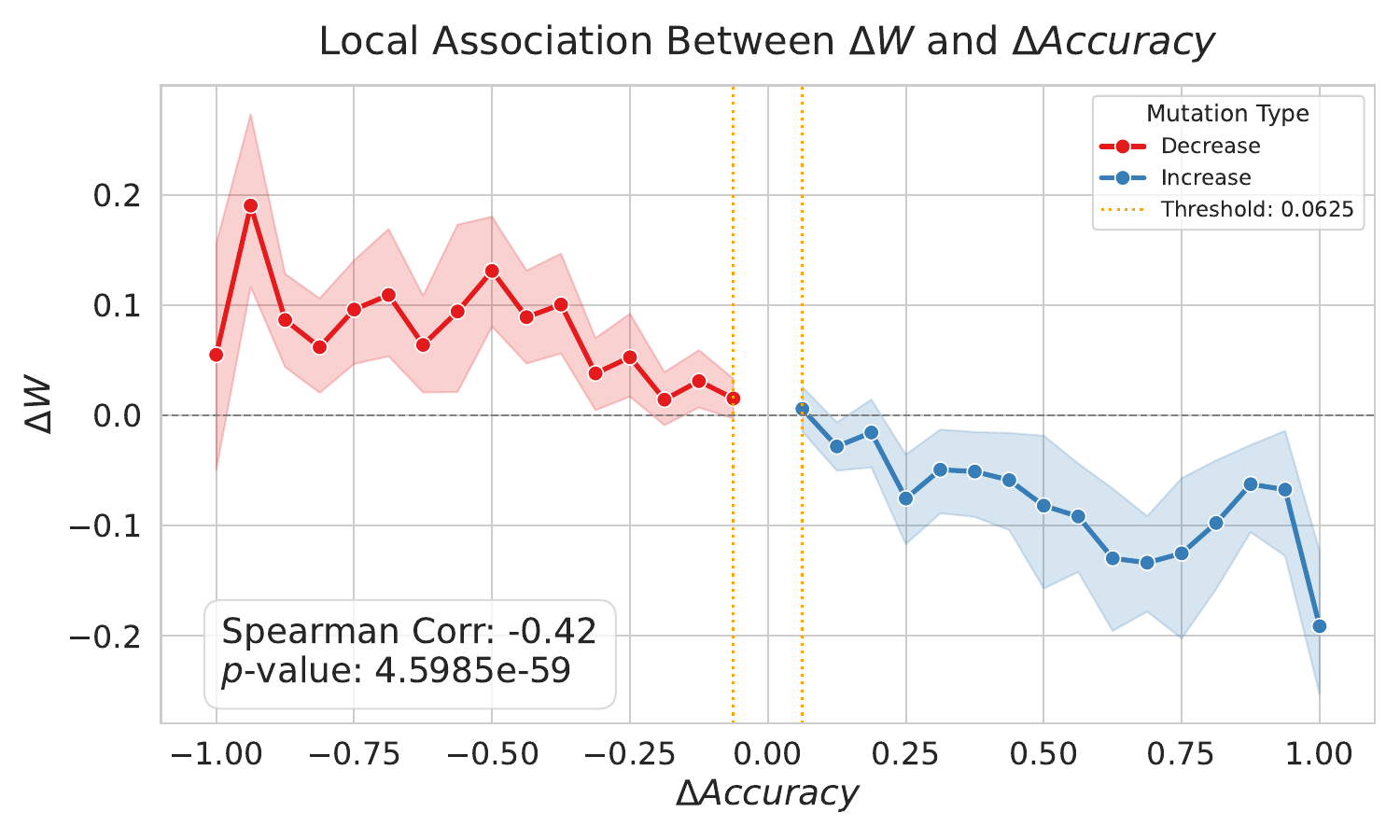}
    \caption{Stride-wise local association} 
    \label{fig:local_association}
\end{subfigure}
\caption{\textbf{Hidden state divergence tracks reasoning quality 
at both aggregate and local levels.} (a)~As reasoning progresses, 
continuation success (blue, left axis) declines while Wasserstein 
distance to the opposing group (brown, right axis) rises, with 
closely aligned transition zones with Spearman's $\rho = -0.96$. (b)~At positions where continuation success changes by 
at least one completion step ($|\Delta\text{Accuracy}| \geq 
0.0625$), the concurrent change in Wasserstein distance is 
negatively correlated (Spearman's $\rho = -0.42$). This within-trajectory association 
confirms that the signal in~(a) reflects genuine local 
representational dynamics. Results are on MATH500 with Qwen2.5-Math-7B.}
\label{fig:observation}
\end{figure}

\paragraph{Aggregate trend.}
Figure~\ref{fig:aggregate_trend} plots both quantities as a function of prefix position, aggregated across all retained problems. As prefixes extend deeper into the reasoning chain, continuation success tends to decline while Wasserstein distance tends to increase, and the transition zones of the two curves are closely aligned. The region where continuation success begins to decline coincides with the region where Wasserstein distance begins to rise.

\paragraph{Local association.}
To examine whether the association holds at the local level, we analyze stride-wise changes in both quantities within each trajectory. For each pair of consecutive prefix positions within the same rollout, we compute the change in continuation success ($\Delta\text{Accuracy}$) and the concurrent change in Wasserstein distance ($\Delta W$). Since each continuation success estimate is based on 16 completions, the minimum detectable change is $1/16 = 0.0625$; we use this as the threshold for identifying meaningful local shifts, retaining only positions where $|\Delta\text{Accuracy}| \geq 0.0625$.
Figure~\ref{fig:local_association} plots $\Delta W$ against $\Delta\text{Accuracy}$ for these local shift events. We can observe that when continuation success locally decreases (negative $\Delta\text{Accuracy}$), Wasserstein distance tends to locally increase (positive $\Delta W$), and vice versa. 
This within-trajectory association suggests that the distributional divergence signal contains a genuine local component, such that local deterioration in reasoning is accompanied by local divergence of hidden states from the opposing group.

\paragraph{Implications.}
These two analyses provide complementary evidence that hidden state distributions carry a signal associated with reasoning quality. The aggregate trend (Figure~\ref{fig:aggregate_trend}) shows that Wasserstein distance and continuation success move in opposite directions along the reasoning chain. The local analysis (Figure~\ref{fig:local_association}) further shows that this association is not solely an aggregate phenomenon but reflects within-trajectory dynamics. At positions where continuation success locally shifts, Wasserstein distance shifts concurrently in the opposite direction. If this distributional divergence signal can be reliably detected, it could serve as a self-supervised proxy for process-level credit without any step-level annotation.

This observation raises two questions that the remainder of the paper addresses. First, can we exploit this signal to improve credit assignment during training? We describe our method in Section~\ref{sec:method}. Second, under what conditions is this 
distributional separation theoretically guaranteed? We formalize a separation theorem in Section~\ref{sec:theory} and validate the full approach empirically in Section~\ref{sec:experiments}.

\section{Method}\label{sec:method}
\subsection{Preliminaries: GRPO}\label{sec:grpo}

We consider reinforcement learning with verifiable rewards (RLVR) for reasoning tasks. Given an input $x$ (e.g., a math problem), a policy $\pi_\theta$ generates a rollout $y^{(i)} = (y_1^{(i)}, \ldots, y_{T_i}^{(i)})$ and receives a binary reward $r^{(i)} \in \{0, 1\}$ indicating correctness. In GRPO, a group of $G$ rollouts $\mathcal{G}(x) = \{y^{(1)}, \ldots, y^{(G)}\}$ is sampled for each input, and the policy gradient takes the form
\begin{equation}\label{eq:grpo}
    \nabla_\theta J = \mathbb{E}_{y^{(i)} \sim \pi_\theta} \left[\sum_{t=1}^{T_i} A^{(i)} \nabla_\theta \log \pi_\theta(y_t^{(i)} \mid y_{<t}^{(i)}, x)\right],
\end{equation}
where $A^{(i)} = \frac{r^{(i)} - \bar{r}_{\mathcal{G}}}{\sigma_{\mathcal{G}} + \epsilon}$ is the group-normalized advantage, with $\bar{r}_{\mathcal{G}}$ and $\sigma_{\mathcal{G}}$ denoting the mean and standard deviation of rewards within the group.

A fundamental limitation of Eq.~\ref{eq:grpo} is that every token in a rollout receives the \emph{same} advantage $A^{(i)}$. For example, if an incorrect rollout contains mostly correct reasoning but deviates at some step $\tau$, then tokens before $\tau$ are penalized in the same way as those after $\tau$.

\subsection{Span-Level Wasserstein Distance}\label{sec:span-w}

Our method builds on the observation from Section~\ref{sec:observation} that correctness-relevant local differences induce detectable shifts in the model's hidden-state distributions. We measure these shifts using the Wasserstein distance between local spans of hidden states.

\paragraph{Span decomposition.} Given a rollout $y^{(i)}$ of length $T_i$, we partition it into overlapping spans using a sliding window of size $w$ and stride $s$ ($s \leq w$):
\begin{equation}
    \mathcal{S}^{(i)} = \left\{ S_k^{(i)} = [a_k^{(i)}, b_k^{(i)}] \;\middle|\; a_k^{(i)} = 1 + (k-1)s,\;\; b_k^{(i)} = \min(a_k^{(i)} + w - 1, T_i) \right\}.
\end{equation}
Each span $S_k^{(i)}$ defines an empirical distribution over hidden states
$\widehat{P}_k^{(i)} = \frac{1}{|S_k^{(i)}|} \sum_{t \in S_k^{(i)}} \delta_{h_t^{(i)}}$, 
where $h_t^{(i)} \in \mathbb{R}^d$ is the hidden state at the last transformer layer before LM-head for token $y_t^{(i)}$
~\cite{yu2026latentSurvey,zhang2025Relax}, and $\delta_{h_t^{(i)}}$ denotes the point mass at $h_t^{(i)}$.

\paragraph{Wasserstein distance.} The 1-Wasserstein distance between two distributions $P$ and $Q$ on $\mathbb{R}^d$ is
\begin{equation}\label{eq:w1}
    W_1(P, Q) = \inf_{\gamma \in \Gamma(P, Q)} \int \|x - y\|_2 \, d\gamma(x, y),
\end{equation}
where $\Gamma(P, Q)$ is the set of all couplings with marginals $P$ and $Q$. Intuitively, $W_1$ measures the minimum cost of transporting one distribution into the other. As observed in Section~\ref{sec:observation}, correctness-relevant local differences are reflected not only in shifts of hidden-state means but also in broader changes in span-level distributional shape. We therefore adopt $W_1$ as a distribution-level discrepancy rather than a point-estimate similarity such as cosine distance between span means (see Section~\ref{sec:theory} for theoretical justification and Section~\ref{sec:exp-ablation} for empirical comparison with alternative metrics).

In practice, we compute the entropically regularized approximation $W_\epsilon$ via the Sinkhorn algorithm~\cite{cuturi2013sinkhorn}, which is efficient to compute on GPUs. The approximation satisfies $W_1 \leq W_\epsilon \leq W_1 + \epsilon \log n$ for discrete measures on $n$ points (Appendix~\ref{app:sinkhorn}).
\subsection{Wasserstein-Weighted Token-Level Advantage}\label{sec:algorithm}
\begin{figure}[t]
    \centering
    \includegraphics[width=0.95\linewidth]{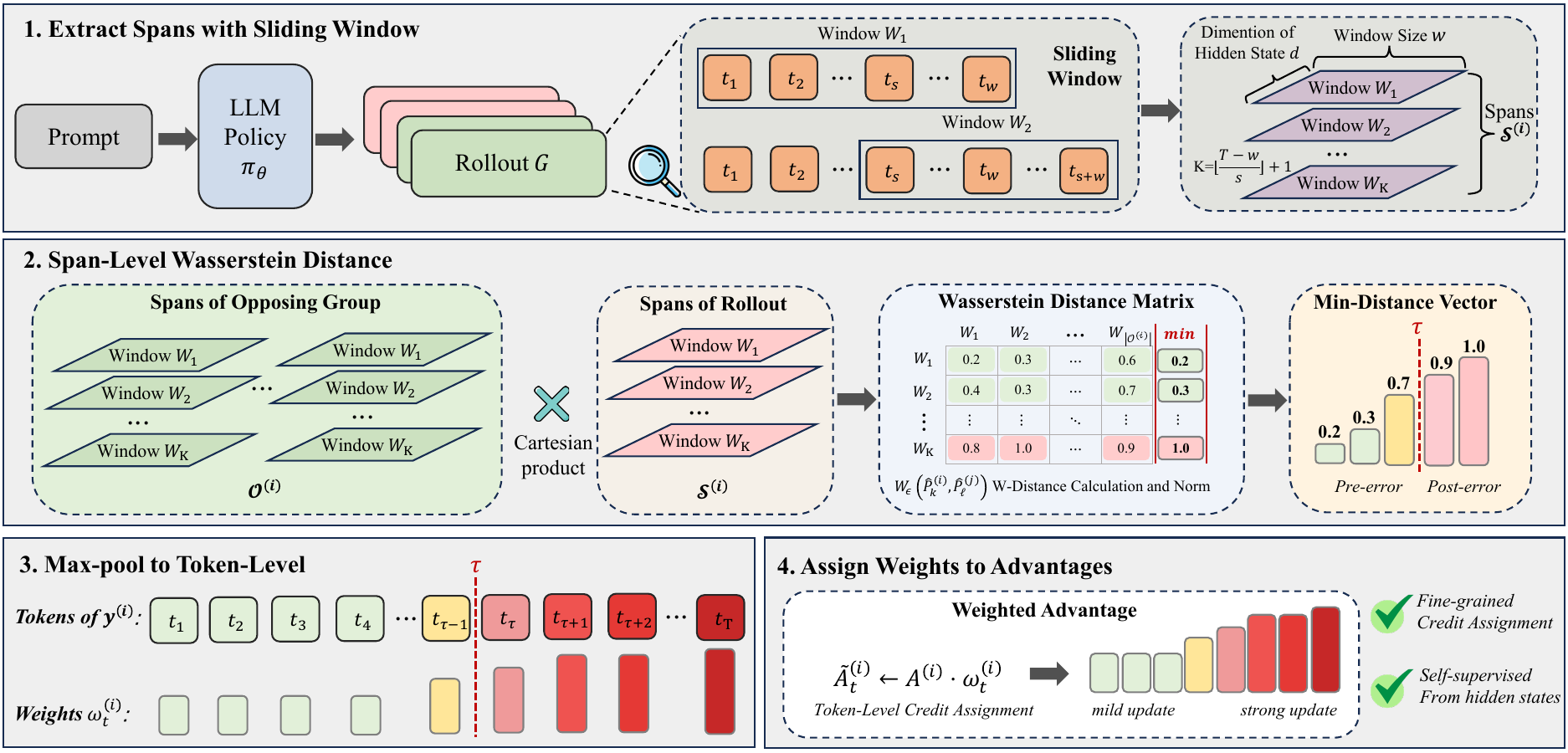}
    \caption{The overview of SHEAR. For each rollout group, we partition each trajectory into overlapping spans, compute the minimum Wasserstein distance from each span to the opposing rollout set, aggregate span-level discrepancies back to token-level weights through max-pooling over overlapping spans, and use these weights to rescale the rollout-level GRPO advantage at each token.}
    \label{fig:main_fig}
\end{figure}

Conceptually, our proposed SHEAR first extracts a process-level signal from span-level hidden-state divergence and then uses this signal to rescale token-level policy-gradient updates in GRPO. We use spans to estimate local discrepancy relative to the opposing rollout set, and then project this span-level signal back to individual tokens through overlapping-window aggregation.

We partition the rollout group into correct rollouts $\mathcal{G}^+$ and incorrect rollouts $\mathcal{G}^-$. For each rollout $y^{(i)}$, we define its opposing set as
\begin{equation}
    \mathcal{O}^{(i)} =
    \begin{cases}
        \mathcal{G}^- , & r^{(i)} = 1,\\
        \mathcal{G}^+ , & r^{(i)} = 0.
    \end{cases}
\end{equation}
If $\mathcal{O}^{(i)}$ is empty, we fall back to the standard GRPO update for that rollout. The procedure of SHEAR in Algorithm~\ref{alg:method} has four steps.
\begin{algorithm}[t]
\caption{SHEAR}
\label{alg:method}
\begin{algorithmic}[1]
\Require Rollout group $\mathcal{G}(x)$, window size $w$, stride $s$, Sinkhorn parameter $\epsilon$
\For{each rollout $y^{(i)} \in \mathcal{G}(x)$}
    \If{$\mathcal{O}^{(i)} \neq \emptyset$}
        \For{each span $S_k^{(i)} \in \mathcal{S}^{(i)}$}
            \State $d_k^{(i)} \gets \min\limits_{y^{(j)} \in \mathcal{O}^{(i)}} \; \min\limits_{S_\ell^{(j)} \in \mathcal{S}^{(j)}} W_\epsilon\!\left(\widehat{P}_k^{(i)}, \widehat{P}_\ell^{(j)}\right)$\Comment{Min distance to opposing group}
        \EndFor
    \EndIf
\EndFor
\State $N_{\mathcal{G}} \gets \sum_{i=1}^{G} T_i$
\State $\bar{n} \gets \frac{1}{N_{\mathcal{G}}}\sum_{i=1}^{G}\sum_{t=1}^{T_i} \|h_t^{(i)}\|_2$ \Comment{Global mean norm over all hidden states in group}
\For{each rollout $y^{(i)} \in \mathcal{G}(x)$}
    \For{each token position $t = 1, \ldots, T_i$}
        \If{$\mathcal{O}^{(i)} \neq \emptyset$}
            \State $\omega_t^{(i)} \gets \frac{1}{\bar{n}} \max\limits_{k : t \in S_k^{(i)}} d_k^{(i)}$ \Comment{Max-pool \& normalize}
        \Else
            \State $\omega_t^{(i)} \gets 1$
        \EndIf
        \State $\widetilde{A}_t^{(i)} \gets A^{(i)} \cdot \omega_t^{(i)}$ \Comment{Weighted advantage}
    \EndFor
\EndFor
\end{algorithmic}
\end{algorithm}

\textbf{Step 1: Span-level minimum distance.} For each span $S_k^{(i)}$ in rollout $y^{(i)}$, we compute its minimum Sinkhorn distance to all spans in the opposing set $\mathcal{O}^{(i)}$ (line 4 in Algorithm~\ref{alg:method}). This minimum makes the signal conservative with respect to multiple valid solution paths so that a span receives a large discrepancy only if it remains dissimilar to all candidate alternatives in the opposing set. We compare this choice against alternative aggregation rules in Section~\ref{sec:exp-ablation}.

\textbf{Step 2: Global norm normalization.} To stabilize magnitudes across varying latent scales, we divide all span-level discrepancies by the global mean hidden-state norm
$\bar{n} = \frac{1}{N_{\mathcal{G}}}\sum_{i=1}^{G}\sum_{t=1}^{T_i} \|h_t^{(i)}\|_2$, 
where $N_{\mathcal{G}} = \sum_{i=1}^{G} T_i$ is the total number of tokens in the rollout group. Since $\bar{n}$ is a positive constant shared across the entire group, this rescaling preserves the ordering of span discrepancies within the group, and the ranking-based separation established for $W_1$ in Section~\ref{sec:theory} transfers to the normalized weights. A side effect is that different rollouts may receive different total gradient magnitudes, introducing a form of cross-rollout reweighting. We isolate this effect in the ablation study (Section~\ref{sec:exp-ablation}).

\textbf{Step 3: Token-level aggregation via max-pooling.} Considering that each token may belong to multiple overlapping spans, we assign token $t$ in rollout $y^{(i)}$ the maximum normalized discrepancy among all spans that cover it (line 10 in Algorithm~\ref{alg:method}), i.e., $\omega_t^{(i)} = \frac{1}{\bar{n}} \max_{k : t \in S_k^{(i)}} d_k^{(i)}$. 
This choice emphasizes the strongest local discrepancy signal associated with each token, which is especially useful in local transition regions where overlapping spans may mix low- and high-discrepancy patterns. We compare max-pooling against mean-pooling in Appendix~\ref{app:design}.

\textbf{Step 4: Weighted advantage.} The token-level weighted advantage is obtained by multiplying the rollout-level GRPO advantage by the token weight
$    \widetilde{A}_t^{(i)} = A^{(i)} \cdot \omega_t^{(i)}$. 
For incorrect rollouts ($A^{(i)} < 0$), tokens in high-discrepancy regions receive larger negative updates. For correct rollouts ($A^{(i)} > 0$), tokens whose hidden-state distributions are more strongly separated from the opposing set receive larger positive updates.

\section{Theoretical Analysis}\label{sec:theory}

The observations in Section~\ref{sec:observation} suggest that span-level Wasserstein distances align empirically with local reasoning quality. In this section, we ask under what conditions such a separation can be expected theoretically. Our goal is not to model the full complexity of real reasoning trajectories, but to isolate a minimal stylized setting in which post-divergence spans become distinguishable from pre-divergence spans once the population-level distributional signal exceeds finite-sample noise. Under this view, the theory provides conditional support for the weighting mechanism used in Section~\ref{sec:algorithm}. In particular, the global normalization by $\bar{n}$ preserves the ordering of span-level discrepancies within a rollout group, so the separation established below is consistent with the intended weighting pattern.

For readability, the theorems are stated for the exact Wasserstein distance $W_1$, whereas the algorithm uses the entropically regularized approximation $W_\epsilon$. As discussed in Appendix~\ref{app:sinkhorn}, the approximation satisfies$W_1 \leq W_\epsilon \leq W_1 + \epsilon \log n$, for discrete measures on $n$ points. 
In our experiments, this approximation gap is small relative to the empirical separation margins observed between pre- and post-divergence spans, provided the separation gap exceeds $\epsilon \log n$.

\paragraph{Assumptions.}
We consider a correct rollout $y^+$ and an incorrect rollout $y^-$ generated for the same input. Let $h_t^+, h_t^- \in \mathbb{R}^d$ denote their token-level hidden states. The analysis is based on the following stylized assumptions.

\emph{(A1) Divergence structure.}
There exists a divergence point $\tau$ such that, for $t \leq \tau$, the hidden states $h_t^+$ and $h_t^-$ are independent draws from the same marginal distribution $P_t$, while for $t > \tau$, the correct rollout remains distributed as $P_t$ and the incorrect rollout is distributed as $Q_t \neq P_t$. This assumption requires distributional equivalence rather than token-level alignment: two rollouts may differ in surface form while still sharing the same hidden-state distribution at the same reasoning stage. As with any stylized model, exact distributional equivalence is an idealization. In practice a small residual gap may persist before $\tau$. Figure~\ref{fig:aggregate_trend} provides empirical support for treating this gap as negligible where the mean Wasserstein distance in the first quartile of prefix positions ($\approx 0.67$) is smaller than in the last quartile ($\approx 0.75$), and the separation theorems below remain valid whenever this residual pre-divergence gap is dominated by the post-divergence signal $D(S)$---a condition verified empirically in Section~\ref{sec:exp:1}.

\emph{(A2) Bounded support.}
For all positions $t$, the hidden states satisfy $\|h_t\|_2 \leq M$.
This is satisfied in practice by modern transformer architectures, which apply LayerNorm~\cite{lei2016layernorm} or RMSNorm~\cite{zhang2019RMSnorm} at each layer, bounding the magnitude of hidden state vectors.

\emph{(A3) Finite-sample concentration.}
For independent samples $X_t \sim P_t$ with $\|X_t\|_2 \leq M$ (in this assumption, $P_t$ denotes a generic bounded-support distribution and is not restricted to the correct-rollout marginals in (A1), in particular, the bound applies equally when $P_t$ is replaced by $Q_t$), let

\[
\widehat{P}_n = \frac{1}{n}\sum_{t=1}^n \delta_{X_t},
\qquad
\bar{P}_n = \frac{1}{n}\sum_{t=1}^n P_t .
\]
We assume
\[
\mathbb{E}\!\left[ W_1(\widehat{P}_n, \bar{P}_n) \right]
\leq \tilde{C}_d M n^{-1/d}.
\]
For i.i.d.\ samples, bounds of this form are classical~\cite{fournier2015rate}. Our setting is independent but not necessarily identically distributed, so we adopt this rate as an assumption capturing the finite-sample noise floor relevant to our analysis and Appendix~\ref{app:concentration} discusses why this assumption is plausible in our setting.

We define the corresponding noise floor
\begin{equation}
    \eta(n,d) := 2\tilde{C}_d M n^{-1/d},
\end{equation}
which bounds the expected Wasserstein distance between two empirical measures drawn from the same span-level population mixture up to sampling variability. The practical role of (A3) is to separate finite-sample noise from population-level signal. Whether $\eta(n,d)$ is small relative to the observed discrepancy is evaluated empirically in Section~\ref{sec:exp:1}.

\paragraph{Pairwise separation.}
Let $S = [a,b]$ be a span of length $n=b-a+1$. Define the corresponding span-level population mixtures
\begin{equation}
    \bar{P}_S^+ := \frac{1}{n}\sum_{t=a}^{b} P_t,
    \qquad
    \bar{P}_S^- := \frac{1}{n}\sum_{t=a}^{b} Q_t,
\end{equation}
where, in the pre-divergence regime, $\bar{P}_S^- = \bar{P}_S^+$. We define the population distance
\begin{equation}\label{eq:pop-distance}
    D(S) := W_1(\bar{P}_S^-, \bar{P}_S^+).
\end{equation}
Let $\widehat{P}_S^-$ and $\widehat{P}_S^+$ denote the corresponding empirical span distributions.

\begin{theorem}[Pairwise Separation]\label{thm:separation}
Under assumptions (A1)--(A3), for a span $S=[a,b]$ of length $n$ in $y^-$ paired with the corresponding span in $y^+$, the following hold:
\begin{enumerate}[label=(\roman*), nosep]
    \item \textbf{Pre-divergence} ($b \leq \tau$):
    \[
    \mathbb{E}\!\left[W_1(\widehat{P}_S^-, \widehat{P}_S^+)\right] \leq \eta(n,d).
    \]
    
    \item \textbf{Post-divergence} ($a > \tau$):
    \[
    \mathbb{E}\!\left[W_1(\widehat{P}_S^-, \widehat{P}_S^+)\right] \geq D(S) - \eta(n,d).
    \]
    
    \item \textbf{Strict separation.} If $D(S) > 2\eta(n,d)$, then the expected post-divergence distance is strictly larger than the expected pre-divergence distance.
\end{enumerate}
\end{theorem}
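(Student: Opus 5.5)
The whole argument rests on one tool: the triangle inequality for $W_1$, which is a metric on probability measures with bounded support in $\mathbb{R}^d$. We apply it with the population span mixtures $\bar{P}_S^-$ and $\bar{P}_S^+$ as intermediate points, and then take expectations using (A3). Under (A1) the hidden states within each rollout are independent across positions, so (A3) applies to each empirical span measure separately. For $\widehat{P}_S^+$ the marginals are the $P_t$, so
\[
\mathbb{E}\, W_1(\widehat{P}_S^+, \bar{P}_S^+) \le \tilde{C}_d M n^{-1/d}.
\]
For $\widehat{P}_S^-$ the marginals are the $P_t$ pre-divergence and the $Q_t$ post-divergence. Since (A3) is stated for a generic bounded-support family, it also gives
\[
\mathbb{E}\, W_1(\widehat{P}_S^-, \bar{P}_S^-) \le \tilde{C}_d M n^{-1/d}.
\]
Each of these is half of $\eta(n,d)$, which is why $\eta$ was defined with the factor $2$.

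For (i), take $b\le\tau$. By (A1), every position in $S$ has the same marginal $P_t$ in both rollouts, so $\bar{P}_S^- = \bar{P}_S^+ =: \bar{P}_S$. The triangle inequality gives
\[
W_1(\widehat{P}_S^-, \widehat{P}_S^+) \le W_1(\widehat{P}_S^-, \bar{P}_S) + W_1(\bar{P}_S, \widehat{P}_S^+).
\]
Taking expectations and applying the two (A3) bounds yields $\le \eta(n,d)$.

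For (ii), take $a>\tau$. The reverse triangle inequality gives
\[
W_1(\widehat{P}_S^-, \widehat{P}_S^+) \ge W_1(\bar{P}_S^-, \bar{P}_S^+) - W_1(\widehat{P}_S^-, \bar{P}_S^-) - W_1(\widehat{P}_S^+, \bar{P}_S^+).
\]
The first term on the right is $D(S)$, which is deterministic. Taking expectations and using linearity with the two (A3) bounds gives $\ge D(S)-\eta(n,d)$.

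For (iii), chain the two parts. If $D(S)>2\eta(n,d)$, then the expected post-divergence distance is at least $D(S)-\eta > \eta$, and $\eta$ bounds the expected pre-divergence distance. Comparing a post-divergence span $S'$ with a pre-divergence span $S$ of the same length $n$ gives strict separation. Equal lengths matter because $\eta$ depends only on $n$ and $d$.

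I expect the main obstacle to be bookkeeping rather than analysis, in two places. First, we must check that (A3) legitimately applies to each empirical measure in turn, including $\widehat{P}_S^-$ with its shifted marginals $Q_t$. Second, we must check that the pre- and post-divergence spans being compared have the same $n$, so that a single $\eta(n,d)$ serves both. Boundedness of support from (A2) ensures every $W_1$ above is finite, which the triangle-inequality manipulations and the expectations need.
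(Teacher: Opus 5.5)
Your proposal is correct and matches the paper's proof step for step: (i) uses the triangle inequality through the common mixture $\bar{P}_S$, with (A3) applied to each empirical measure; (ii) uses the reverse triangle inequality through $\bar{P}_S^-$ and $\bar{P}_S^+$; and (iii) subtracts the two bounds. You also note that the compared pre- and post-divergence spans must have the same length $n$ so that one $\eta(n,d)$ serves both, which the paper leaves implicit; the only fix is that the hypothesis should read $D(S') > 2\eta(n,d)$ for the post-divergence span $S'$, not $D(S)$ for the pre-divergence span $S$.
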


\begin{proof}[Proof sketch]
For (i), both empirical measures are drawn from the same span-level population mixture, so by the triangle inequality and assumption (A3),
\[
\mathbb{E}\!\left[W_1(\widehat{P}_S^-, \widehat{P}_S^+)\right] \leq \eta(n,d).
\]
For (ii), the reverse triangle inequality gives
\[
W_1(\widehat{P}_S^-, \widehat{P}_S^+) \geq
W_1(\bar{P}_S^-, \bar{P}_S^+) -
W_1(\widehat{P}_S^-, \bar{P}_S^-) -
W_1(\widehat{P}_S^+, \bar{P}_S^+).
\]
Taking expectations and applying (A3) to the two deviation terms yields
\[
\mathbb{E}\!\left[W_1(\widehat{P}_S^-, \widehat{P}_S^+)\right]
\geq D(S) - \eta(n,d).
\]
Part (iii) follows by subtracting the pre-divergence bound from the post-divergence lower bound. Full proofs are given in Appendix~\ref{app:proof-separation}.
\end{proof}

The quantity $D(S)$ captures distributional differences beyond first-moment shifts. By the Kantorovich--Rubinstein duality,
\[
D(S)
=
\sup_{\|f\|_{\mathrm{Lip}}\leq 1}
\left|
\frac{1}{n}\sum_{t=a}^{b}
\left(
\mathbb{E}_{Q_t}[f] - \mathbb{E}_{P_t}[f]
\right)
\right|.
\]
Restricting $f$ to linear functions recovers first-moment differences, whereas the supremum over all 1-Lipschitz functions also captures changes in variance, shape, and other aspects of span-level distributional structure. In this sense, Wasserstein distance is strictly more general than first-moment comparisons based only on span means (Appendix~\ref{app:why-w}; see also Section~\ref{sec:exp-ablation}).

\paragraph{Group-level separation.}
Algorithm~\ref{alg:method} uses the minimum span-level discrepancy to the opposing set rather than a single paired comparison. To analyze this construction, consider an incorrect rollout $y^-$ and a set of correct rollouts $\{y^{+,1}, \ldots, y^{+,K^+}\}$. For each correct rollout $y^{+,k}$, let $\tau_k$ denote its divergence point relative to $y^-$. Define the latest divergence point
\[
\tau^* := \max_k \tau_k,
\]
that is, the largest position up to which $y^-$ remains distributionally aligned with at least one correct rollout.

For a span $S$ in $y^-$, define the group-level empirical minimum discrepancy
\begin{equation}
    d_{\min}(S)
    :=
    \min_{k,j}
    W_1\!\left(\widehat{P}_S^-, \widehat{P}_{S_j}^{+,k}\right),
\end{equation}
where $\widehat{P}_{S_j}^{+,k}$ denotes the empirical distribution of span $S_j$ from correct rollout $y^{+,k}$. Define the corresponding population quantity
\begin{equation}
    D^*(S)
    :=
    \min_{k,j}
    W_1\!\left(\bar{P}_S^-, \bar{P}_{S_j}^{+,k}\right).
\end{equation}

\begin{theorem}[Group-Level Separation]\label{thm:group}
Under assumptions (A1)--(A3), the following hold:
\begin{enumerate}[label=(\roman*), nosep]
    \item If $b \leq \tau^*$, then
    \[
    \mathbb{E}[d_{\min}(S)] \leq \eta(n,d).
    \]
    
    \item If $a > \tau^*$, then
    \[
    \mathbb{E}[d_{\min}(S)] \geq D^*(S) - \eta(n,d).
    \]
    
    \item If $D^*(S) > 2\eta(n,d)$, then the expected post-divergence minimum discrepancy is strictly larger than the expected pre-divergence minimum discrepancy.
\end{enumerate}
\end{theorem}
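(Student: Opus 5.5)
The plan is to reduce each part of Theorem~\ref{thm:group} to Theorem~\ref{thm:separation}, using two elementary facts. First, a minimum is bounded above by any single term it contains. Second, the triangle inequality holds uniformly over all pairs $(k,j)$ at once, so it survives taking the minimum.

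\emph{Part (i).} Let $b\le\tau^*$. Pick $k^*$ with $\tau_{k^*}=\tau^*$, and take $j$ to be the index of the span in $y^{+,k^*}$ occupying the same positions $[a,b]$. Since $d_{\min}(S)$ is a minimum over all $(k,j)$,
\[
d_{\min}(S)\le W_1\bigl(\widehat{P}_S^-,\widehat{P}_{S_j}^{+,k^*}\bigr).
\]
Because $b\le\tau_{k^*}$, assumption (A1) applied to the pair $(y^-,y^{+,k^*})$ places this span in the pre-divergence regime. Taking expectations and invoking Theorem~\ref{thm:separation}(i) gives $\mathbb{E}[d_{\min}(S)]\le\eta(n,d)$. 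One detail needs care: the matching span must have the same length $n$ in $y^{+,k^*}$, meaning $b\le T_{k^*}$. Since $b\le\tau^*$, this holds implicitly, because the divergence point lies inside the rollout.

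\emph{Part (ii).} Let $a>\tau^*$. For every pair $(k,j)$, the reverse triangle inequality gives
\[
W_1\bigl(\widehat{P}_S^-,\widehat{P}_{S_j}^{+,k}\bigr)\ge W_1\bigl(\bar{P}_S^-,\bar{P}_{S_j}^{+,k}\bigr)-W_1\bigl(\widehat{P}_S^-,\bar{P}_S^-\bigr)-W_1\bigl(\widehat{P}_{S_j}^{+,k},\bar{P}_{S_j}^{+,k}\bigr).
\]
Taking the minimum over $(k,j)$ yields
\[
d_{\min}(S)\ge D^*(S)-W_1\bigl(\widehat{P}_S^-,\bar{P}_S^-\bigr)-\max_{k,j}W_1\bigl(\widehat{P}_{S_j}^{+,k},\bar{P}_{S_j}^{+,k}\bigr).
\]
The first deviation term has expectation at most $\tilde C_d M n^{-1/d}=\eta/2$ by (A3). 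The obstacle is the maximum over $(k,j)$: (A3) controls each term only in expectation, and the expectation of a maximum can exceed the maximum of the expectations.

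I would handle this as follows. Note that the $\bar{P}_{S_j}^{+,k}$ are deterministic population mixtures. Read (A3) as the per-comparison noise floor, which is implicitly how the statement's constant is meant, and bound the minimizing pair's deviation. Concretely, let $(k^\dagger,j^\dagger)$ be the random minimizer of the empirical objective. Then
\[
d_{\min}(S)=W_1\bigl(\widehat{P}_S^-,\widehat{P}_{S_{j^\dagger}}^{+,k^\dagger}\bigr)\ge W_1\bigl(\bar{P}_S^-,\bar{P}_{S_{j^\dagger}}^{+,k^\dagger}\bigr)-(\text{deviations})\ge D^*(S)-(\text{deviations}).
\]
Applying (A3) to the two deviations then gives $D^*(S)-\eta$. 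If the randomness of the minimizer has to be treated rigorously, I would note that it contributes at most a union-type factor. The fallback is to state the bound with $\eta$ replaced by $\eta$ plus a $\log(K^+ \cdot \#\text{spans})$ term via bounded-differences concentration, using $\|h\|\le M$ from (A2). I expect this step to be the main place where care, or a mild reinterpretation of (A3), is needed.

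\emph{Part (iii).} Subtract: the bound from (ii) minus the bound from (i) is $D^*(S)-2\eta>0$, so the expected post-divergence minimum discrepancy strictly exceeds the expected pre-divergence minimum discrepancy, exactly as in Theorem~\ref{thm:separation}(iii).
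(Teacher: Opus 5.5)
Your argument follows the paper's proof step for step. In (i) you bound $d_{\min}(S)$ by the single pair formed by $y^{+,k^*}$ and its span at positions $[a,b]$, then invoke Theorem~\ref{thm:separation}(i). In (ii) you apply the reverse triangle inequality pairwise and take the minimum, which leaves the deviation terms $W_1(\widehat{P}_S^-,\bar{P}_S^-)$ and $\max_{k,j}W_1(\widehat{P}_{S_j}^{+,k},\bar{P}_{S_j}^{+,k})$. Part (iii) is subtraction.

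Part (i) is sound. Your remark that $b\le\tau_{k^*}$ already forces $y^{+,k^*}$ to be long enough is cleaner than the paper's footnote about comparable rollout lengths.

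The genuine gap is the one you flagged in (ii), and your random-minimizer repair does not close it. Assumption (A3) bounds $\mathbb{E}[W_1(\widehat{P},\bar{P})]$ for a fixed span. But $(k^\dagger,j^\dagger)$ is selected by the data, and it is selected precisely in favor of an opposing span whose empirical measure happened to fluctuate toward $\widehat{P}_S^-$. Hence $\mathbb{E}[W_1(\widehat{P}_{S_{j^\dagger}}^{+,k^\dagger},\bar{P}_{S_{j^\dagger}}^{+,k^\dagger})]$ is controlled only by $\mathbb{E}[\max_{k,j}W_1(\widehat{P}_{S_j}^{+,k},\bar{P}_{S_j}^{+,k})]$, which is exactly the quantity you were trying to avoid.

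This is a real selection effect, not a formality. The minimum of many noisy distance estimates is biased downward even when each one individually has expectation at least $D^*(S)-\eta(n,d)$. Compare (i), where $k^*$ and the matching span are fixed by the population structure rather than by the sample. That is exactly why bounding the minimum by one term is legitimate there and not in (ii).

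The paper's own proof takes the same step without comment. After passing to the $\max$, it says ``taking expectations and applying (A3) to each deviation term.'' This tacitly reads (A3) as the uniform bound $\mathbb{E}[\max_{k,j}W_1(\widehat{P}_{S_j}^{+,k},\bar{P}_{S_j}^{+,k})]\le\tilde{C}_dMn^{-1/d}$. You therefore have two options:
\begin{enumerate}
\item To get (ii) with the stated constant $\eta(n,d)$, state that strengthened assumption explicitly.
\item Carry out your fallback precisely, accepting a weaker result. Replacing one sample moves the empirical measure by at most $2M/n$ in $W_1$, so McDiarmid makes each deviation sub-Gaussian with variance proxy $M^2/n$. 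With $m$ the number of opposing spans, this gives $\mathbb{E}[\max_{k,j}W_1(\widehat{P}_{S_j}^{+,k},\bar{P}_{S_j}^{+,k})]\le\tilde{C}_dMn^{-1/d}+M\sqrt{2\log m/n}$, hence $\mathbb{E}[d_{\min}(S)]\ge D^*(S)-\eta(n,d)-M\sqrt{2\log m/n}$.
\end{enumerate}
The second statement is correct but weaker than (ii) as written. Part (iii) then requires $D^*(S)>2\eta(n,d)+M\sqrt{2\log m/n}$. Note that the correction is of order $M\sqrt{\log m/n}$, not an additive $\log(K^+\cdot\text{spans})$ term as your proposal suggests.
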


\begin{proof}[Proof sketch]
If $b \leq \tau^*$, then there exists at least one correct rollout with a matching pre-divergence span, so the minimum discrepancy admits an expected upper bound of $\eta(n,d)$ by the pairwise argument. If $a > \tau^*$, then for every opposing span the reverse triangle inequality implies a lower bound of the form
\[
W_1(\widehat{P}_S^-, \widehat{P}_{S_j}^{+,k})
\geq
W_1(\bar{P}_S^-, \bar{P}_{S_j}^{+,k}) - \eta(n,d),
\]
and taking the minimum over all opposing spans yields the stated result. Full proofs are provided in Appendix~\ref{app:group}.
\end{proof}

The condition $D^*(S) > 2\eta(n,d)$ asks whether a post-divergence span is distributionally distinguishable from every opposing span up to finite-sample noise. A small $D^*(S)$ does not indicate failure of the method; rather, it means that the span is not globally distinctive from correct reasoning in the model's representation space, in which case assigning a low weight is appropriate. We do not claim that all post-divergence spans are separable. The theorem is explicitly conditional on the population-level separation exceeding the finite-sample noise floor. Extensions to multiple divergence points and the symmetric analysis for correct rollouts are provided in Appendices~\ref{app:multi-div} and~\ref{app:symmetric}.

\section{Experiments}

\label{sec:experiments}
\subsection{Analysis of the Wasserstein Signal}
\label{sec:exp:1}

\paragraph{Setup.}We adopt the same prefix-completion protocol as in Section~\ref{sec:observation} (MATH500, $G=8$ rollouts from Qwen2.5-Math-7B, sliding window with $w=100$, $s=25$, final-answer spans excluded), with one key difference: whereas Section ~\ref{sec:observation} analyzed the correlation between Wasserstein distance and empirical accuracy as a motivating observation, here we additionally leverage the ground-truth correctness labels of the original rollouts to directly evaluate whether Wasserstein distance can discriminate between correct and incorrect reasoning—the property that the separation theorem predicts. This yields $23,704$ spans from originally correct rollouts and $21,352$ from originally incorrect ones.
\begin{figure}[t]
    \centering
    \begin{subfigure}{0.48\linewidth}
        \centering
        \includegraphics[width=1\linewidth]{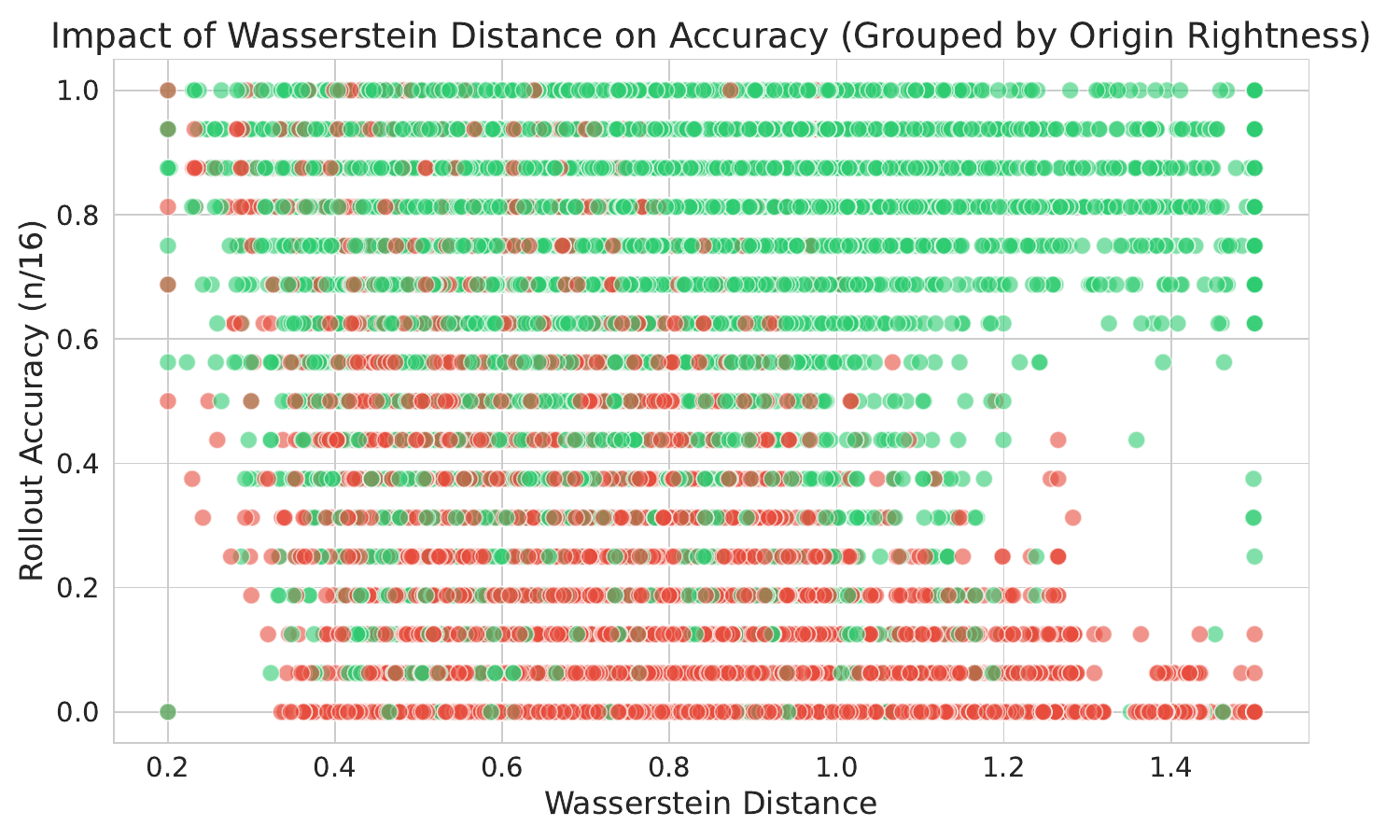}
    \end{subfigure}
    \hfill
    \begin{subfigure}{0.48\linewidth}
        \centering
        \includegraphics[width=1\linewidth]{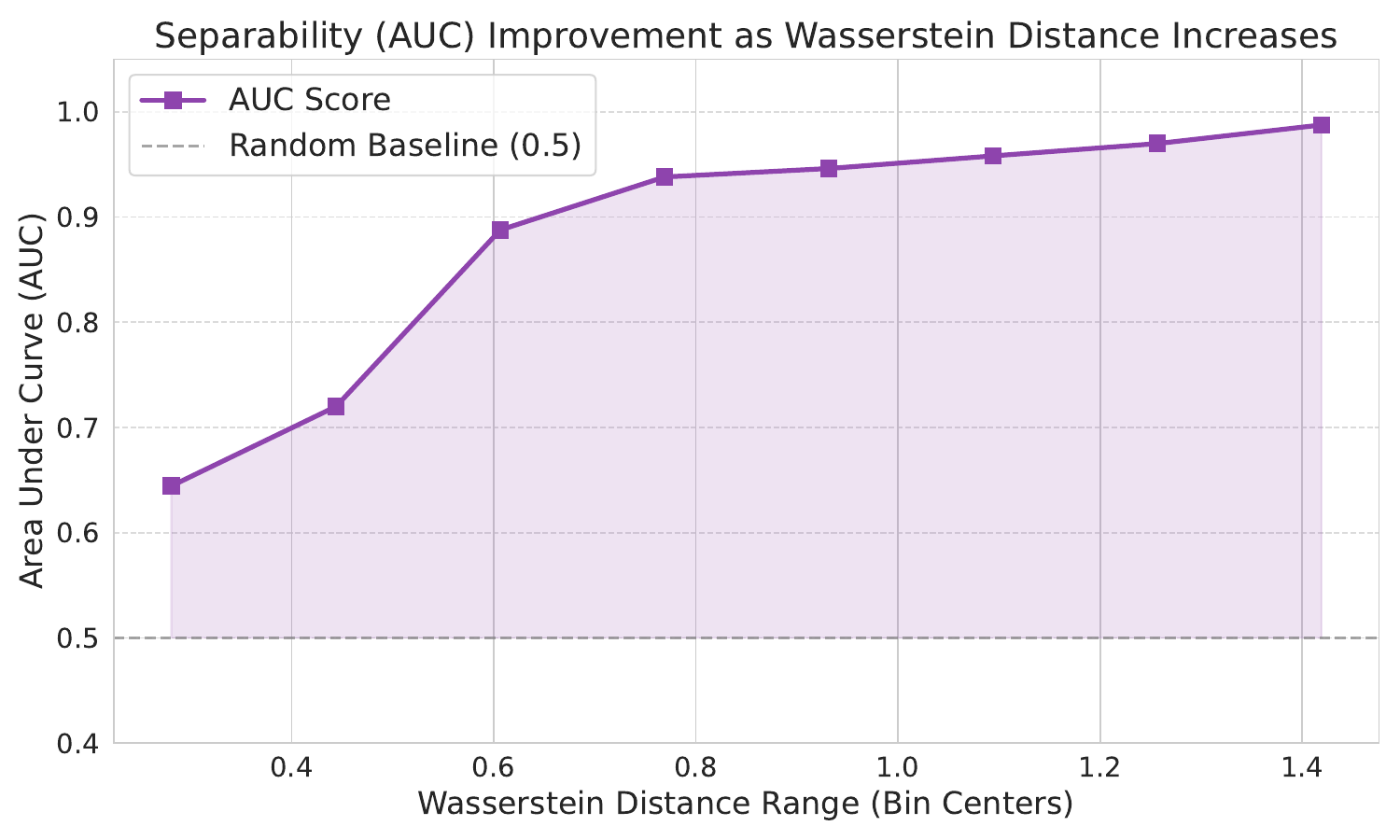}
    \end{subfigure}
    \hfill
    \begin{subfigure}{1\linewidth}
        \centering
        \includegraphics[width=1\linewidth]{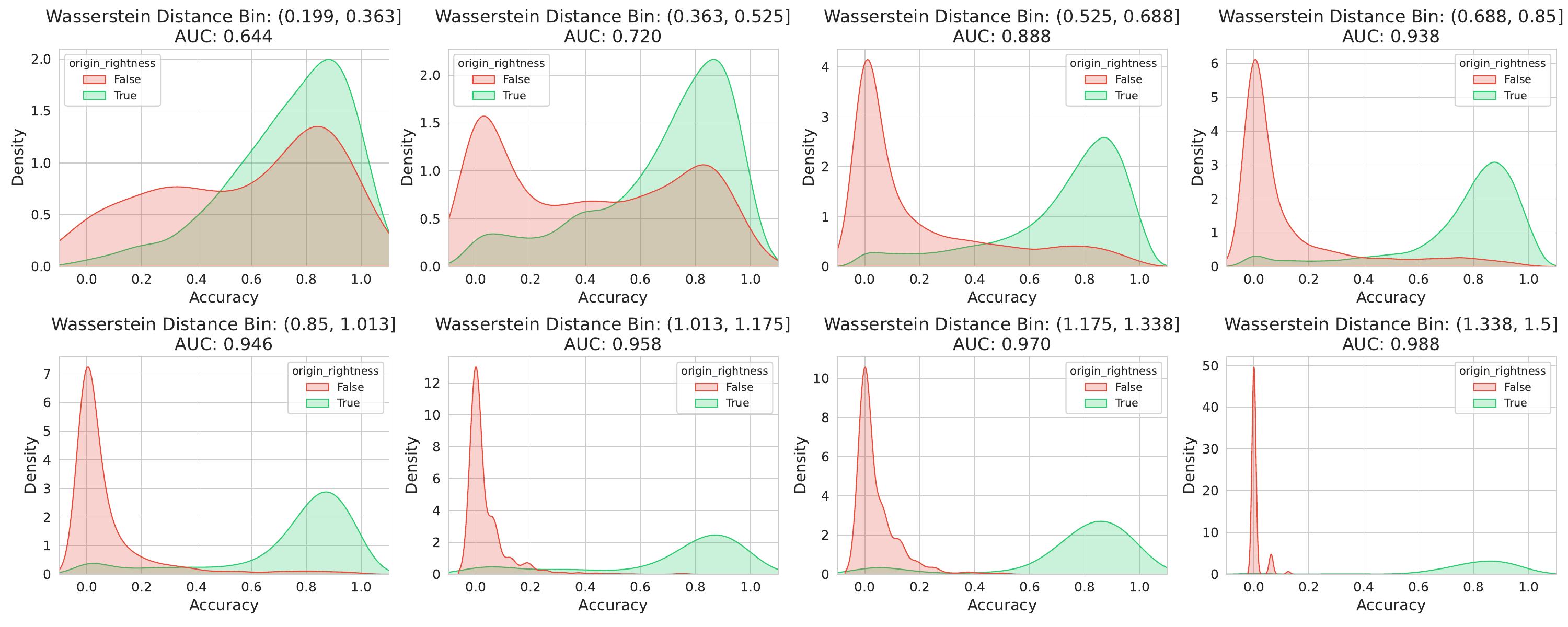}
    \end{subfigure}
    
    \caption{Empirical verification of the separation conditions. (Top Left) Empirical accuracy stratified by Wasserstein distance, grouped by original rollout correctness. (Top Right) Monotonic improvement in separability (AUC) between correct and incorrect spans as the Wasserstein distance magnitude increases. (Bottom) Per-bin density plots of accuracy distributions, visualizing the transition from highly overlapping to well-separated as the Wasserstein distance increases.}
    \label{fig:Empirical_Validation}
\end{figure}

Figure~\ref{fig:Empirical_Validation} (top left) plots each span as a dot, showing its Wasserstein distance to the opposing group versus its empirical continuation accuracy ($n/16$). Green and red dots denote spans from originally correct and incorrect rollouts, respectively. Within every Wasserstein distance bin, spans from correct rollouts achieve consistently higher empirical accuracy than those from incorrect rollouts, confirming that ground-truth correctness information persists through truncation and resampling. More importantly, higher Wasserstein distances are systematically associated with lower accuracy in both groups, extending the anti-correlation of Section~\ref{sec:observation} to a setting conditioned on the original outcome.

The critical question for our theory is whether this distributional signal is strong enough to reliably separate the two classes. Figure~\ref{fig:Empirical_Validation} (right) answers affirmatively: the AUC between correct and incorrect spans increases monotonically with the Wasserstein distance magnitude, rising from approximately 0.65 at low Wasserstein distances to above 0.97 at high Wasserstein distances. The per-bin density plots (Figure~\ref{fig:Empirical_Validation}, bottom row) visualize this transition—at low Wasserstein distances the accuracy distributions of the two groups overlap almost entirely, while at high Wasserstein distances they become well-separated. This monotonic growth in separability directly confirms that the population-level distributional difference $D(S)$ exceeds the finite-sample noise $\eta$ by a widening margin as the Wasserstein distance increases, providing concrete empirical support for the condition $D(S) > 2\eta(n,d)$ required by Theorem ~\ref{thm:separation}.

These results show that the theoretical separation is not merely a limiting guarantee but manifests at finite sample sizes with practical discriminative power. Since this discriminative power is derived entirely from the model's own hidden states, it validates the use of Wasserstein distance as a principled token-level weight in the GRPO objective without step-level annotations or an external reward model.
\subsection{Experimental Setup}
\paragraph{Backbones.} 
We evaluate on four backbones from two model families: Qwen2.5-Math-7B, Qwen2.5-Coder-7B, and Qwen2.5-14B-Base~\cite{yang2024qwen2.5}, and Llama3.1-8B-Instruct~\cite{Grattafiori2024llama}. For mathematical reasoning, we train Qwen2.5-Math-7B, Qwen2.5-14B-Base, and Llama3.1-8B-Instruct; for code generation, we train Qwen2.5-Coder-7B, Qwen2.5-14B-Base, and Llama3.1-8B-Instruct. The two task settings are trained independently.
\paragraph{Training datasets and settings.} 
For mathematical reasoning, we perform RL training on the MATH dataset~\cite{hendrycks2021Math} following the practice of~\cite{liu2025dr_grpo}. MATH contains 8.5K problems spanning algebra, geometry, counting, probability, number theory, and related areas. 
For code generation, we perform RL training on Eurus-RL-Code~\cite{cui2025prime} following~\cite{yang2026g_opd}. Eurus-RL-Code contains 25K samples. Detailed training settings are provided in Appendix~\ref{sec:setting_details}.

\paragraph{Evaluation.}
For mathematical reasoning, we evaluate on five competition-level benchmarks: AIME24~\cite{ai_mo_aime2024}, AIME25~\cite{opencompass_aime2025}, AMC23~\cite{math_ai_amc23}, MATH500~\cite{lightman2023math500}, and OlympiadBench~\cite{he2024OlympiadBench}. These benchmarks span a broad range of difficulty, from high-school competition problems (AMC23, MATH500) to olympiad-level problems (AIME24/25, OlympiadBench), and all require multi-step reasoning with a verifiable final answer. 
We report \texttt{avg@32} on AIME24 and AIME25, \texttt{avg@8} on AMC23, and \texttt{avg@1} on MATH500 and OlympiadBench. Here \texttt{avg@k}~\cite{wang202580/20,yang2025DCPO} denotes the average correctness over \(k\) independently sampled completions per problem under temperature sampling (\(T=0.1\), top-\(p=1.0\)). 

For code generation, we evaluate on five benchmarks: HumanEval, HumanEval+, MBPP, MBPP+~\cite{liu2023Humaneval_mbpp}, and LiveCodeBench (v6, February 2025 to May 2025)~\cite{jain2024livecodebench}. 
HumanEval+ and MBPP+ extend the original test suites with substantially more unit tests, thereby providing a stricter functional correctness signal and reducing false positives caused by under-specified tests. 
LiveCodeBench is included to probe generalization on contamination-free problems released after the base model's training cutoff. 
We report \texttt{avg@4} on all code benchmarks, where \texttt{avg@4} is defined analogously as the average correctness over four independently sampled completions per problem under temperature sampling.

\paragraph{Checkpoint Selection.}
Aligning with existing studies~\cite{zhang2025Critic-grpo,zhao2026opsd}, we use the following evaluation protocols.
We assess each run every 5 training steps and apply the same benchmark-based checkpoint selection protocol to all compared methods. 
For mathematical reasoning, we run each experiment with three random seeds. For each seed, we record the highest benchmark-averaged score across the five math benchmarks over all evaluated checkpoints, and report the mean and standard deviation of this best-observed score across seeds. For code generation, we analogously report the highest benchmark-averaged score over the five code benchmarks. This protocol measures best-observed performance under a uniform model-selection rule shared by all methods. 

\paragraph{Baselines.}
We compare our proposed SHEAR against three categories of baselines.
\begin{itemize}[leftmargin=1.5em]
\item \textbf{Base RLVR algorithm.} GRPO~\citep{shao2024grpo}, the canonical multi-sample policy-gradient method and the underlying backbone of many reasoning RL pipelines.
    \item \textbf{Intrinsic-signal-guided credit assignment.} 
    \textit{Entropy Advantage Reshaping}~\citep{cheng2025Entropy_adv}, a non-PRM approach that reshapes token-level advantages using entropy signals derived from the policy model itself.
    \item \textbf{PRM-based credit assignment.} 
    \textit{PURE}~\citep{cheng2025pure}, which leverages process reward models to improve reasoning and mitigate reward hacking, and \textit{PRM-based Advantage Reshaping}, which enhances GRPO with backward-filled PRM signals for finer-grained token-level credit assignment. 
    PRM-based baselines are evaluated only on mathematical reasoning tasks, for which we use Qwen2.5-Math-PRM-7B~\cite{zhang2025qwen2.5_math_prm} as the process reward model.
\end{itemize}

\subsection{Main Results}
\paragraph{Overall performance.}
Table~\ref{tab:overall} and Table~\ref{tab:code} report results on five mathematical reasoning benchmarks and five code generation benchmarks across three backbones.
On mathematical reasoning, SHEAR achieves the strongest average performance among the compared methods on all three backbones, consistently improving over standard GRPO. The gains are especially visible on AIME24 and AMC23, two benchmarks that require sustained multi-step reasoning. On code generation, SHEAR also improves over GRPO on all three backbones, although the margins are smaller than those observed on mathematical reasoning.
\paragraph{Comparison with PRM-based methods.} 
A central question is whether a self-supervised hidden-state signal can compete with explicit step-level supervision from a trained process reward model. 
Empirically, the answer is yes on the mathematical reasoning benchmarks considered here. As shown in Table~\ref{tab:overall}, SHEAR outperforms both PRM(Reshape adv.) and PRM(PURE) on all three backbones in terms of average score. The contrast is particularly clear on Qwen2.5-14B-Base, where both PRM variants fall below vanilla GRPO by a substantial margin. 
We attribute this to a distribution mismatch between the externally PRM and the evolving policy: as the policy drifts during training, the PRM's step-level scores become increasingly miscalibrated, injecting noise that can outweigh the benefit of denser supervision. This effect is amplified on the 14B backbone which is reflected in the larger standard deviations of PRM variants. By deriving the credit signal directly from the policy model's own hidden-state representations, SHEAR avoids dependence on a separately trained reward model.
\paragraph{Comparison with Entropy adv.} 
Entropy adv. is the most relevant baseline conceptually, as it also performs token-level advantage reweighting without external supervision. SHEAR outperforms it on every backbone for mathematical reasoning. This supports our hypothesis that distributional divergence in hidden states carries strictly richer process-level information than per-token entropy, which conflates uncertainty with reasoning quality and treats high-entropy exploratory tokens identically to high-entropy erroneous ones.
\paragraph{Code generation.} As shown in Table~\ref{tab:code}, SHEAR yields modest but consistent improvements over GRPO across all three backbones, with the most visible gains appearing on LiveCodeBench, the longest and most complex test set in the suite. The smaller margins on HumanEval and MBPP are consistent with the relatively short solution length of these benchmarks, which limits the resolution at which span-level credit assignment can act. These results suggest that SHEAR's benefits are most pronounced when reasoning chains are long enough for localized errors to be distinguishable from globally erroneous trajectories—precisely the regime motivating our method.

\paragraph{Training dynamics.} 
Figure~\ref{fig:avg_score_training} plots the benchmark-averaged score over the five mathematical reasoning benchmarks throughout training for the three backbones. Solid lines denote the mean across three random seeds, and shaded regions denote $\pm 1$ standard deviation. 
While SHEAR and the GRPO show similar in early progress, SHEAR consistently pulls ahead after the initial training phase (typically around 100-150 steps) and maintains a higher overall accuracy towards the end, with no signs of instability or collapse. This sustained and robust improvement under SHEAR is consistent with the intuition that concentrating gradient updates on tokens whose hidden states are distributionally distinctive reduces the noise introduced by uniformly weighted credit.

\begin{table*}[t]
\centering
\setlength{\tabcolsep}{4pt}
\small
\caption{
Overall performance on mathematical reasoning benchmarks. We report avg@32 on AIME24 and AIME25, avg@8 on AMC23, and avg@1 on MATH500 and OlympiadBench. Entries report mean ± std over three random seeds. 
}
\label{tab:overall}
\begin{tabular}{lccccccc}
\toprule
Method & AIME24 & AIME25 & AMC23 & MATH500 & Olympiad & Avg \\
\midrule
\multicolumn{7}{c}{\textbf{Qwen2.5-Math-7B}}\\
\midrule
GRPO   & 33.6 \std{1.9} & 15.5 \std{0.6} & 66.5 \std{1.2} & 84.9 \std{1.7} & 47.8 \std{0.1} & 49.6 \std{0.8} \\
Entropy adv. & 32.8\std{1.8} & 16.7\std{1.8} & 66.0\std{0.6} & 83.4\std{0.3} & 48.6\std{0.8} & 49.5\std{0.5}   \\
PRM(Reshape adv.) & 34.1\std{3.2} & 15.2\std{1.5} & 66.8\std{1.7} & 83.1\std{1.7} & 48.1\std{1.8} & 49.6\std{1.3}   \\
PRM(PURE)& 24.5\std{0.6} & 16.2\std{2.7} & 70.0\std{2.5} & 82.5\std{1.1} & 47.6\std{1.2} & 48.2\std{0.6}   \\
SHEAR & 35.2 \std{2.3} & 16.8 \std{0.5} & 70.3 \std{3.2} & 83.3 \std{0.8} & 48.7 \std{1.0} & 51.2 \std{0.8} \\
\midrule
\multicolumn{7}{c}{\textbf{Llama3.1-8B-Instruct}}\\
\midrule
GRPO   & 7.9 \std{0.7}  & 0.7 \std{0.2}  & 32.7 \std{1.2} & 57.7 \std{1.5} & 24.5 \std{1.3} & 24.7 \std{0.4} \\
Entropy adv. & 11.7\std{2.2} & 1.5\std{1.3} & 33.1\std{2.1} & 57.0\std{2.5} & 22.1\std{0.1} & 25.1\std{0.0}   \\
PRM(Reshape adv.) & 12.3\std{2} & 1.1\std{0.4} & 33.4\std{1.5} & 55.2\std{0.5} & 23.4\std{1.5} & 25.1\std{0.9}   \\
PRM(PURE)& 8.1\std{5.1} & 1.2\std{1.4} & 27.9\std{3.4} & 54.6\std{1.7} & 20.5\std{1.1} & 22.5\std{0.3}   \\
SHEAR & 12.7 \std{1.8} & 2.1 \std{1.4}  & 37.2 \std{0.2} & 56.3 \std{1.3} & 22.8 \std{0.1} & 26.2 \std{0.1} \\
\midrule
\multicolumn{7}{c}{\textbf{Qwen2.5-14B-Base}}\\
\midrule
GRPO   & 16.5 \std{1.8} & 15.0 \std{1.6} & 62.4 \std{1.3} & 83.3 \std{1.8} & 49.9 \std{0.5} & 45.4 \std{0.4} \\
Entropy adv. & 18.5\std{1.5} & 16.4\std{1.7} & 64.8\std{1.8} & 82.2\std{0.4} & 49.4\std{1.5} & 46.3\std{0.1}   \\
PRM(Reshape adv.) & 12.8\std{3.1} & 10.6\std{2.8} & 58.5\std{3.2} & 79.7\std{2.4} & 44.1\std{2.7} & 41.2\std{2.3}   \\
PRM(PURE)& 13.0\std{1.6} & 8.0\std{1.8} & 49.2\std{1.2} & 76.1\std{0.1} & 40.7\std{0.1} & 37.4\std{0.1}   \\
SHEAR & 18.7 \std{1.5} & 16.9 \std{1.1} & 62.2 \std{3.9} & 84.5 \std{1.1} & 52.0 \std{1.1} & 46.9 \std{1.0} \\
\bottomrule
\end{tabular}
\end{table*}
\begin{figure}[ht]
    \centering
    \begin{subfigure}{0.32\linewidth}
        \centering
        \includegraphics[width=1\linewidth]{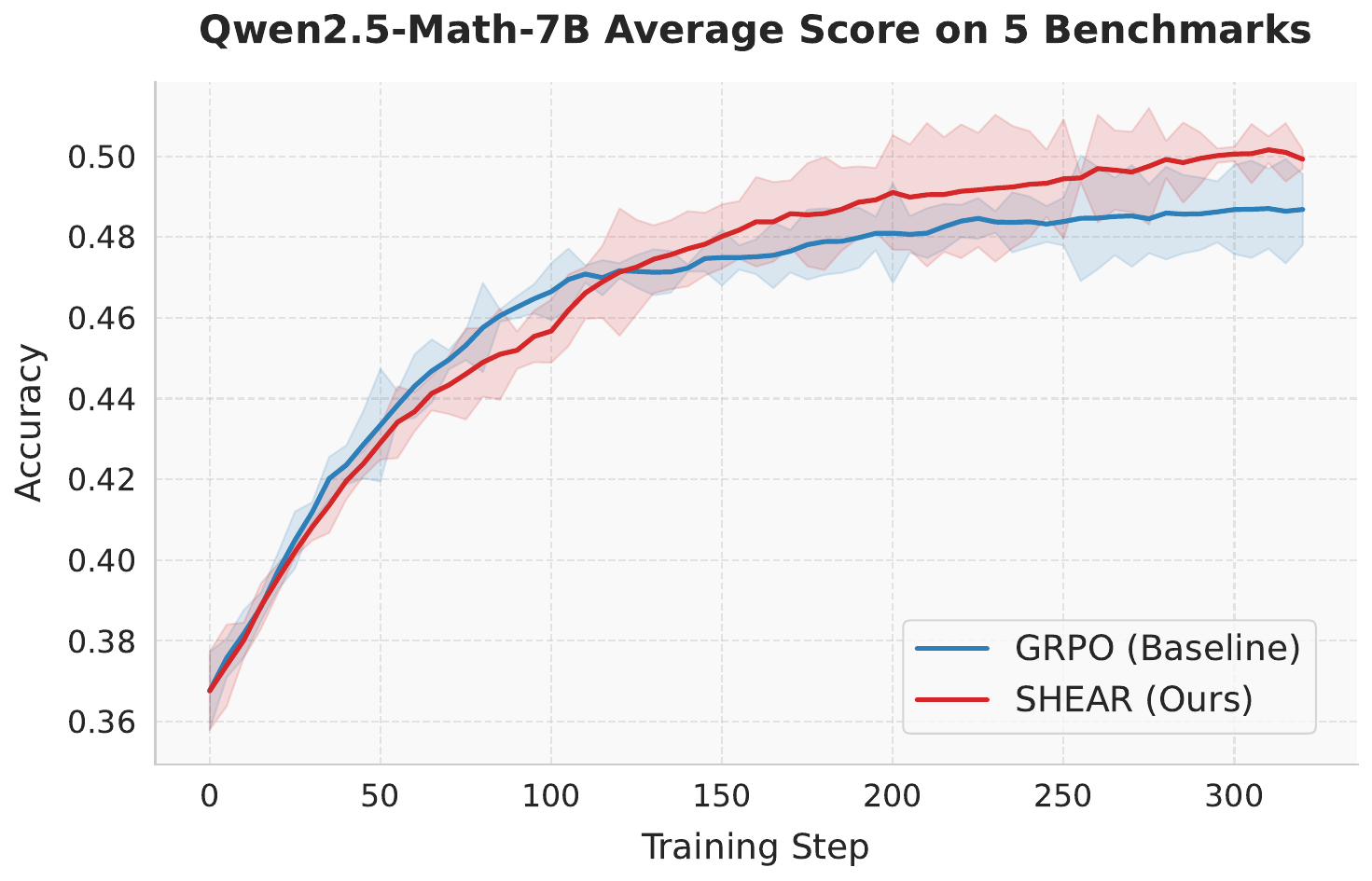}
    \end{subfigure}
    \hfill
    \begin{subfigure}{0.32\linewidth}
        \centering
        \includegraphics[width=1\linewidth]{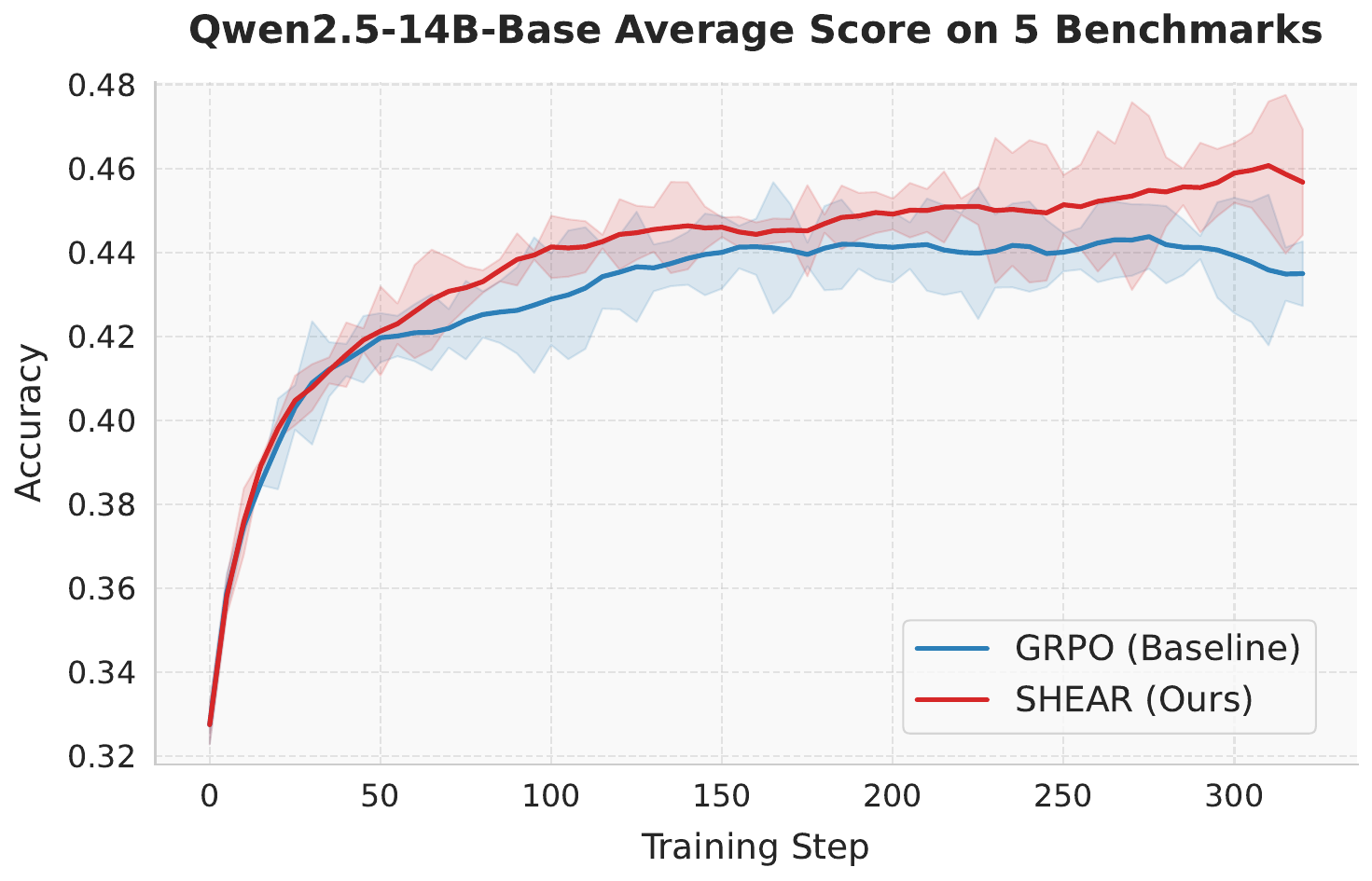}
    \end{subfigure}
    \hfill
    \begin{subfigure}{0.32\linewidth}
        \centering
        \includegraphics[width=1\linewidth]{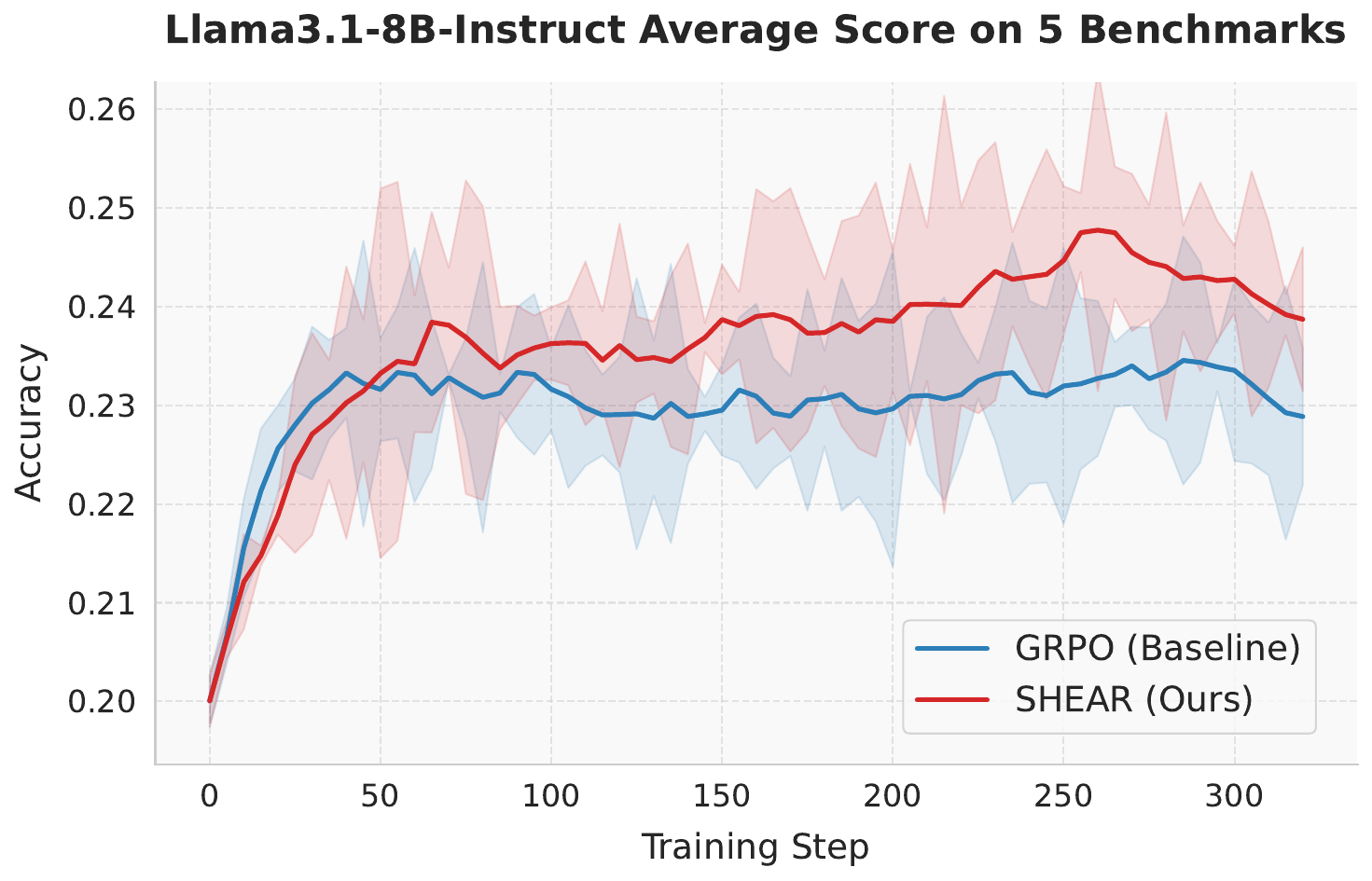}
    \end{subfigure}

    \caption{
    Training dynamics on mathematical reasoning. Each panel shows the benchmark-averaged score over the five math benchmarks as a function of training steps for one backbone. Solid lines denote the mean across three random seeds, and shaded regions denote ±1 standard deviation.
    }
    \label{fig:avg_score_training}
\end{figure}

\begin{table*}[t]
\centering
\setlength{\tabcolsep}{4pt}
\small
\caption{
Overall performance on code generation benchmarks. We report avg@4 on all benchmarks.
}
\label{tab:code}
\begin{tabular}{lccccccc}
\toprule
Method & HumanEval & HumanEval+ & MBPP & MBPP+ & LiveCodeBench & Avg \\
\midrule
\multicolumn{7}{c}{\textbf{Qwen2.5-Coder-7B}}\\
\midrule
GRPO   &75.8 &71.0 &81.8 &69.3 &17.8 &63.1  \\
SHEAR &76.1 &72.1 &82.2 &70.8 &18.3 &63.9  \\
\midrule
\multicolumn{7}{c}{\textbf{Llama3.1-8B-Instruct}}\\
\midrule
GRPO   &65.7 &60 &70.7 &59.7 &17.4 &54.7  \\
SHEAR &70.0 &63.5 &69.5 &58.4 &17.7 &55.8  \\
\midrule
\multicolumn{7}{c}{\textbf{Qwen2.5-14B-Base}}\\
\midrule
GRPO   &85.0 &79.3 &82.8 &68.8 &19.7 &67.1  \\
SHEAR &85.5 &79.6 &83.1 &69.0 &21.1 &67.7  \\
\bottomrule
\end{tabular}
\end{table*}

\subsection{Ablation Studies}
We conduct ablations to isolate the contribution of two key design choices: (i) the choice of distance metric used to measure span-level distributional divergence, and (ii) the role of global norm normalization in Algorithm~\ref{alg:method}. 
\label{sec:exp-ablation}
\subsubsection{Distance Measure Comparison}
\label{sec:exp:distance}
\paragraph{Setup.} Section~\ref{sec:theory} motivates the use of the 1-Wasserstein distance on the grounds that, via Kantorovich–Rubinstein duality, it captures all 1-Lipschitz functionals of the distributional difference, including mean shift, variance, and higher-order shape changes. We test this claim empirically by replacing the Sinkhorn-approximated $W_1$ in Algorithm~\ref{alg:method} with three alternatives that differ in how much distributional information they retain:
\begin{enumerate}[label=(\roman*)]
    \item Cosine distance between span means: a pure mean-shift, direction-only signal that discards both magnitude and shape.
    \item Chamfer distance between span hidden-state sets: a point-cloud distance that captures geometric coverage but is not a proper transport metric and is sensitive to outliers.
    \item MMD (Maximum Mean Discrepancy) with an RBF kernel: a kernel embedding distance that captures all moments through the kernel feature map but lacks the metric structure of optimal transport.
\end{enumerate}


All variants use identical span construction, normalization, max-pooling, and weighting; only the inner distance computation differs.
\paragraph{Result.} Fig.~\ref{fig:distance_ablation} shows the training curves for the four variants alongside the GRPO baseline. Wasserstein, Chamfer, and MMD form a top tier with peak accuracies above the GRPO baseline.
SHEAR (Wasserstein) achieves the highest peak, but the margin over SHEAR (Chamfer) and SHEAR (MMD) is small, suggesting that any sufficiently expressive distributional metric is able to capture the dominant component of the credit signal. This is consistent with the theoretical view in Section~\ref{sec:theory}: the population-level signal $D(S)$ reflects a distributional gap that is rich enough to be detected by multiple measures, with optimal-transport providing the tightest characterization but not the only viable one.

The behavior of SHEAR (Cosine) is more revealing. 
SHEAR (Cosine) plateaus at $0.4837$, falling below the GRPO baseline ($0.4884$) throughout the latter half of training.
This is a striking negative result: using only the angular separation between span means as a credit signal actively hurts learning. We interpret this as evidence that mean-direction shifts in hidden states are not a faithful proxy for reasoning quality; they correlate with surface-level lexical variation and other reasoning-irrelevant factors, and weighting tokens by such a signal injects noise into the policy gradient. The fact that the three distribution-aware metrics succeed where the mean-only metric fails directly supports the central claim of Section~\ref{sec:theory}: capturing the full distributional difference, rather than a low-order summary, is what makes hidden states a reliable source of process-level credit.

\begin{figure}[t]
\centering
\begin{subfigure}[t]{0.48\textwidth}
    \includegraphics[width=\linewidth]{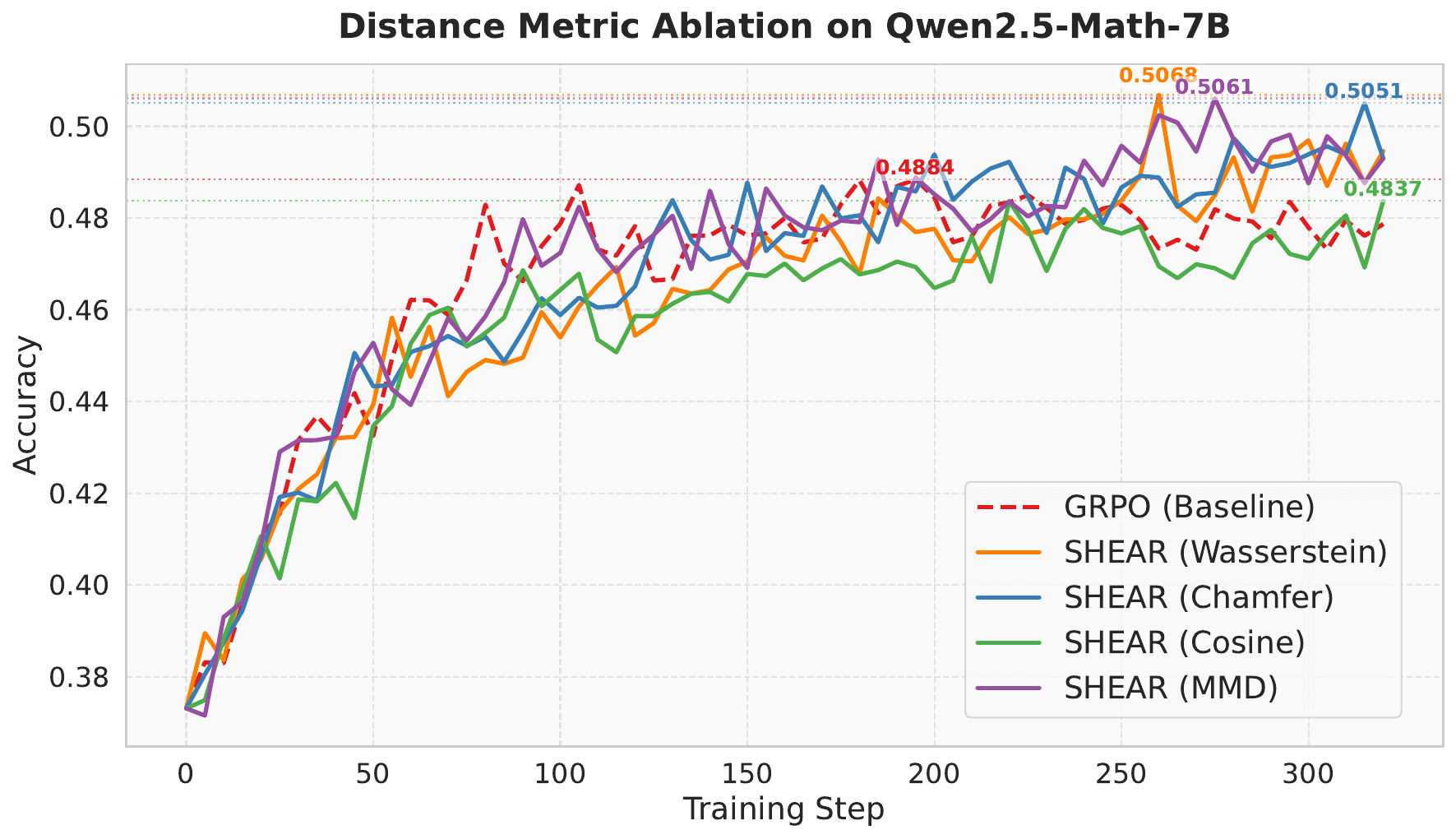}
    \caption{Distance metric ablation.}
    \label{fig:distance_ablation}
\end{subfigure}
\hfill
\begin{subfigure}[t]{0.48\textwidth}
    \includegraphics[width=\linewidth]{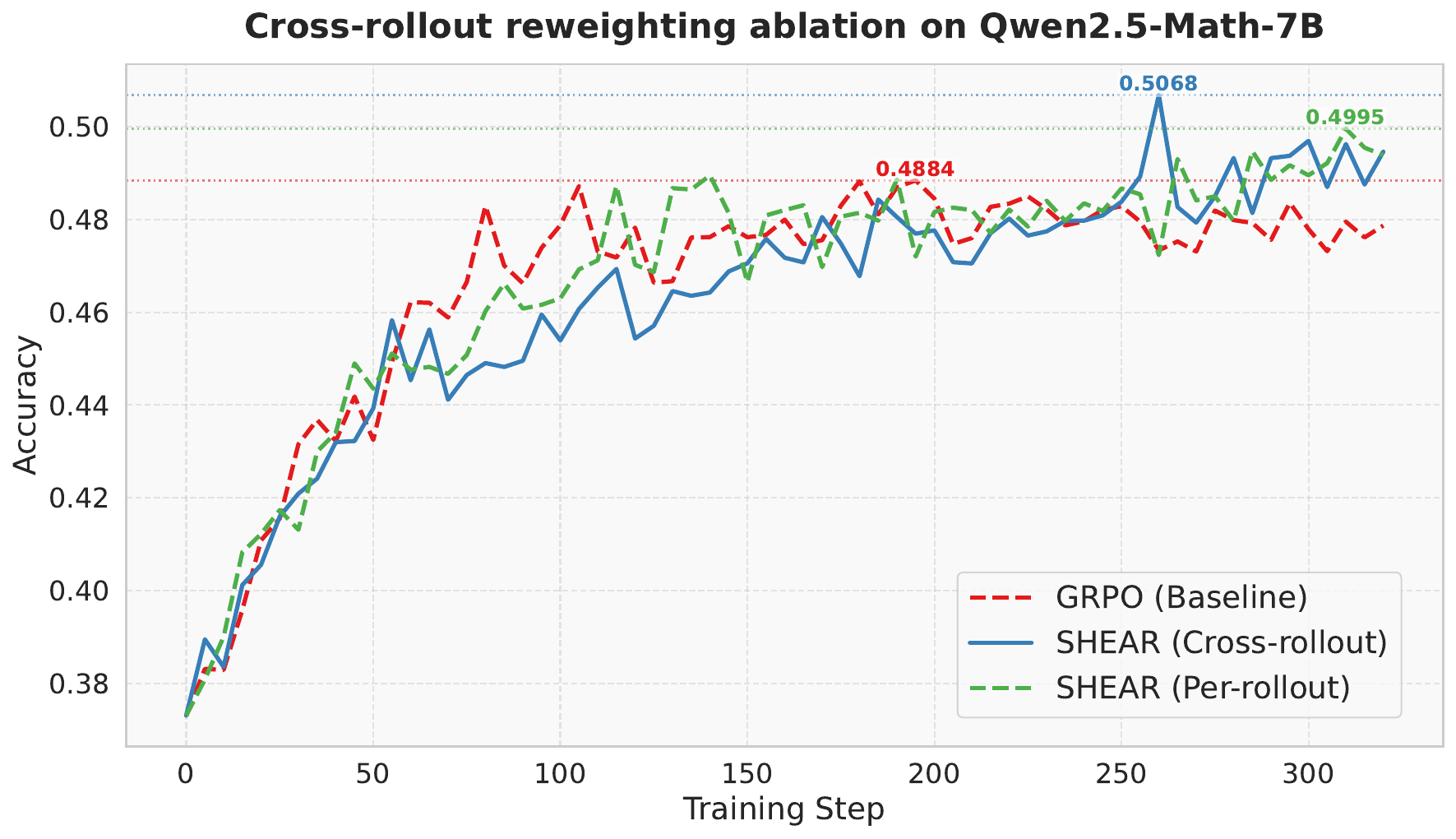}
    \caption{Cross-rollout reweighting ablation.}
    \label{fig:norm_ablation}
\end{subfigure}
\caption{Ablation studies on Qwen2.5-Math-7B.}
\label{fig:ablations}
\end{figure}

\subsubsection{Cross-Rollout Reweighting Ablation}
\paragraph{Setup.} Algorithm~\ref{alg:method} normalizes span distances by the global mean hidden-state norm $\bar{n}$ computed across the entire group. This rescaling is intended to stabilize magnitudes across varying latent scales, but it has a side effect: because $\bar{n}$ is shared across rollouts rather than computed per rollout, the total gradient magnitude received by each rollout still depends on the absolute scale of its span distances. This introduces a form of cross-rollout reweighting that is conceptually distinct from the within-rollout token reweighting that the method is designed to deliver. A natural question is whether this cross-rollout effect contributes meaningfully to the observed gains, or whether the method would behave differently if span distances were rescaled in a way that suppresses it.
To probe this, we compare our default formulation against a variant in which the cross normalization step is replaced by per rollout normalization:
\begin{enumerate}[label=(\roman*)]
    \item SHEAR (cross-rollout) (\emph{default}): span distances are divided by the global mean hidden-state norm $\bar{n}$ as in Algorithm~\ref{alg:method}.
    \item SHEAR (per-rollout): each span distance is rescaled by a per-rollout normalizer, equalizing the average weight magnitude across rollouts within a group. This explicitly removes any cross-rollout reweighting and isolates the within-rollout token-level signal.
\end{enumerate}
\paragraph{Result.}As shown in Fig~\ref{fig:norm_ablation}, both variants of SHEAR substantially outperform the GRPO baseline. Two observations follow.
First, the dominant source of improvement is clearly the within-rollout token-level credit signal: even after explicitly equalizing per-rollout weight magnitudes, SHEAR (per-rollout) still improves over GRPO by a comfortable margin. This indicates that the method's effectiveness does not hinge on cross-rollout magnitude differences—the per-token ranking of credit within a rollout, which the separation theorem in Section~\ref{sec:theory} directly addresses, is what carries the learning signal. 
Second, the default formulation nonetheless retains a measurable edge over the per-rollout-equalized variant, suggesting that allowing rollouts with larger absolute distributional gaps to receive proportionally larger total gradient mass is mildly beneficial. Such rollouts contain more discriminable reasoning errors and plausibly merit more aggressive updates; collapsing the magnitude variation across rollouts gives up a small amount of useful signal. We therefore adopt SHEAR (cross-rollout) as the default in all main experiments. Importantly, both variants sit comfortably above the GRPO baseline, indicating that the method is robust to this design choice.
This ablation reinforces the interpretation in Section~\ref{sec:theory}: the value of the Wasserstein signal lies primarily in where within a rollout it concentrates credit, with cross-rollout magnitude variation playing a complementary role.

\subsection{Effect of Span and Stride Sizes}
\begin{figure}[ht]
    \centering
    \includegraphics[width=0.5\linewidth]{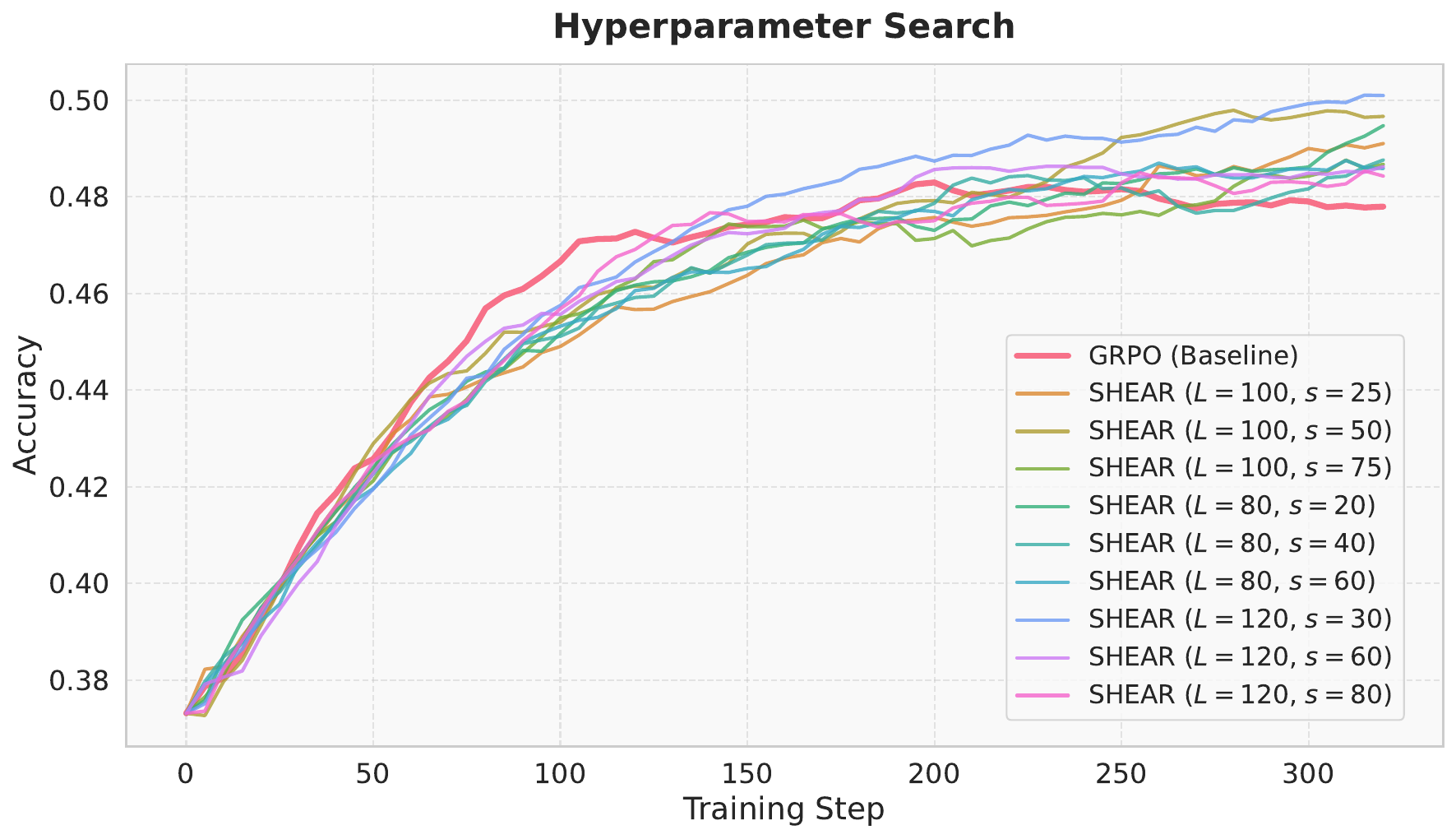}
    \caption{
    Average accuracy across five math benchmarks for varying span length $w \in \{80, 100, 120\}$ and stride $s$. Performance is robust to span length but sensitive to stride where smaller strides yield consistently higher accuracy, suggesting that finer-grained span overlap improves credit assignment resolution.
    }
    \label{fig:hyperparameter_search}
\end{figure}
Figure~\ref{fig:hyperparameter_search} reports the effect of span length $w \in \{80, 100, 120\}$ and stride $s$ on average accuracy across the five math benchmarks, using Qwen2.5-Math-7B. Two patterns emerge.
First, performance is relatively insensitive to span length: for a fixed stride, varying $w$ from 80 to 120 produces only marginal differences in final accuracy, indicating that the Wasserstein signal is robust across a range of local window sizes.
Second, performance is more sensitive to stride size. Within each span length, smaller strides consistently yield higher and more stable accuracy curves. For example, at $w = 100$, the configuration with $s = 25$ outperforms $s = 75$ throughout training. This is expected: a smaller stride increases overlap between consecutive spans, providing finer-grained coverage of the reasoning chain and reducing the risk that a localized divergence event falls between non-overlapping windows. We adopt $w = 100$, $s = 25$ as the default configuration in all main experiments.

\subsection{Effect of Rollout Sample Size}
\begin{figure}[ht]
    \centering
    \includegraphics[width=0.5\linewidth]{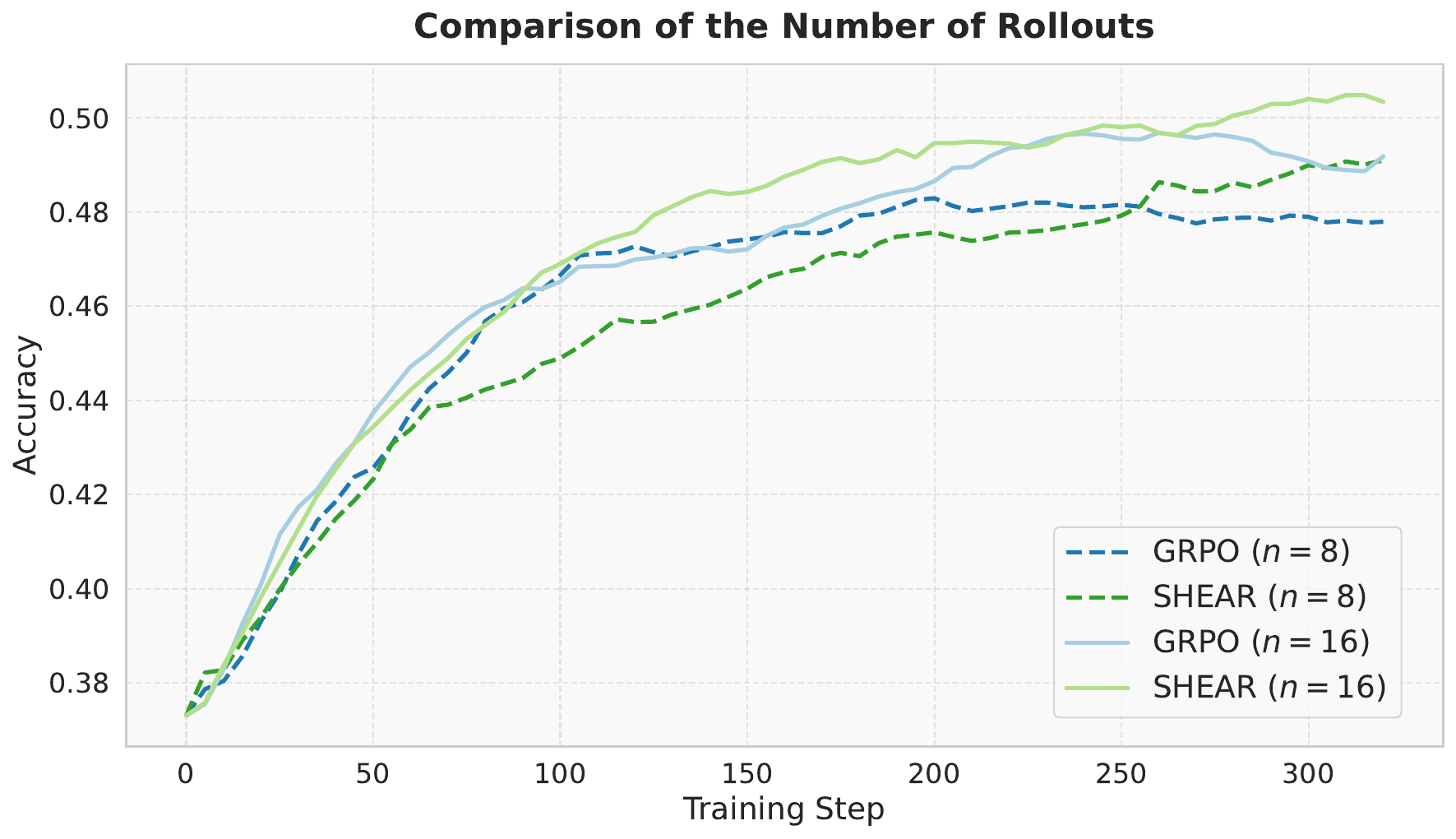}
    \caption{Impact of rollout group size ($G = 8$ vs.\ $G = 16$) on Qwen2.5-Math-7B. Both methods benefit from larger groups, but SHEAR exhibits a wider improvement margin, suggesting that richer opposing sets enhance the span-level Wasserstein signal.}
    \label{fig:rollout_compariosn}
\end{figure}
As illustrated in Fig~\ref{fig:rollout_compariosn}, we compares SHEAR and GRPO under rollout group sizes $G = 8$ and $G = 16$ on Qwen2.5-Math-7B. Two observations stand out. First, increasing the group size from 8 to 16 improves both methods, confirming that larger rollout groups provide richer contrastive signal for credit assignment. Second, the gap between SHEAR and GRPO widens at $G = 16$: SHEAR ($G = 16$) achieves the highest final accuracy among all four configurations, while GRPO ($G = 16$) plateaus below SHEAR ($G = 8$) in late training. This suggests that SHEAR benefits more from additional rollouts than standard GRPO, likely because a larger opposing set yields more reliable span-level Wasserstein estimates, amplifying the advantage of fine-grained credit assignment.
\section{Computational Overhead}\label{app:comp_overhead}
\begin{figure}[ht]
    \centering
    \includegraphics[width=0.5\linewidth]{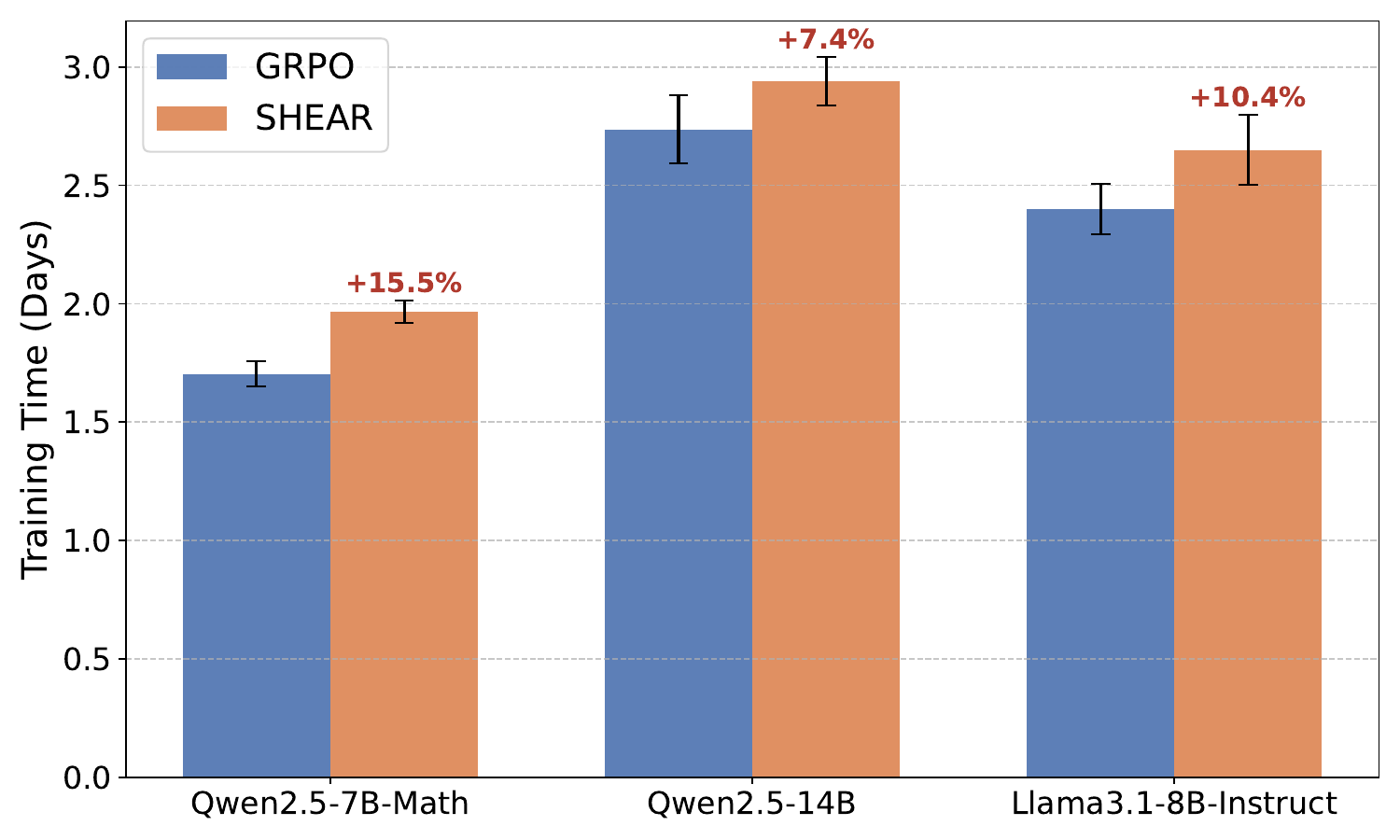}
    \caption{Training time overhead of SHEAR relative to standard GRPO. Percentages indicate the relative increase in total training time. Error bars denote standard deviation across three random seeds. The overhead decreases with model scale and remains below 16\% in all cases.}
    \label{fig:comp_overhead}
\end{figure}
Computing span-level Wasserstein distances via the Sinkhorn algorithm introduces additional computation at each training step. Figure~\ref{fig:comp_overhead} quantifies this cost: relative to the GRPO baseline, SHEAR increases total training time by 15.5\% on Qwen2.5-Math-7B, 10.4\% on Llama3.1-8B-Instruct, and 7.4\% on Qwen2.5-14B-Base. The overhead decreases at larger model scale because the Sinkhorn computation operates on fixed-size spans regardless of model dimension, while the dominant cost of forward and backward passes grows with model size. In all cases, the overhead remains below 16\%, making SHEAR practical for standard RLVR training pipelines.

\section{Related Work}
\paragraph{Credit assignment in RLVR.} Credit assignment, namely identifying which tokens or intermediate steps in a reasoning chain contribute to the final outcome, is a central challenge in reinforcement learning with verifiable rewards. GRPO~\cite{shao2024grpo} assigns uniform advantages across all tokens which is coarse-grained~\cite{li2026OAR}. Process reward models (PRMs)~\cite{cheng2025pure}~\cite{cui2025prime}~\cite{lightman2023math500} provide finer-grained step-level supervision and have shown clear benefits over outcome-level feedback. However, PRMs require either human-annotated intermediate labels or an auxiliary reward model trained on such annotations, introducing both annotation cost and a potential transfer gap when the policy model's capacity exceeds that of the PRM. Cui et al.~\cite{cui2025prime} proposed learning process rewards through implicit signals, while Cheng et al.~\cite{cheng2025pure} introduced min-form credit assignment to mitigate reward hacking. Our work departs from the PRM paradigm entirely by extracting process-level credit from the policy model's own hidden states, requiring no additional annotation or reward model.
\paragraph{Token-level advantage reshaping.} Several recent methods seek to differentiate token-level advantages within GRPO relying on internal signals. ~\cite{cheng2025Entropy_adv}~\cite{zhang2025edge-grpo} exploit per-token entropy to reshape advantages, effectively improving credit granularity and exploration. Despite these benefits, a shared limitation remains: a scalar entropy metric cannot reliably distinguish between productive reasoning pivots and genuine erroneous steps. To overcome this representational bottleneck, recent methods leverage continuous latent representations to model and optimize reasoning~\cite{kang2026ladir}~\cite{du2026lto}~\cite{yue2025hrpo}. These works share a core insight: high-dimensional latent spaces inherently capture richer semantic signals that clearly separate correct reasoning patterns from erroneous ones. By operating in this continuous space—via latent diffusion ~\cite{kang2026ladir}, direct latent reward modeling ~\cite{du2026lto}, or hybrid policy optimization ~\cite{yue2025hrpo}, these approaches enable more precise credit assignment and diverse exploration than scalar-driven heuristics. Building upon this shared insight, SHEAR directly computes the span-level Wasserstein distance between the model's native hidden state, rather than designing complex latent generation architectures or training auxiliary latent reward models. This allows us to extract a highly expressive, self-supervised credit signal that shares the fine-grained evaluation advantages of continuous spaces, while eliminating the reliance on auxiliary models.


\section{Conclusion}
This paper introduces SHEAR, a self-supervised credit assignment method for RLVR that exploits the model's own hidden-state representations to provide fine-grained, token-level training signals. We first identify an empirical phenomenon that span-level Wasserstein distance between hidden-state distributions of correct and incorrect rollouts tracks local reasoning quality, without requiring step-level annotation or an external reward model. We formalize this observation with a separation theorem showing that post-divergence spans exhibit provably larger Wasserstein distances than pre-divergence spans, under a verifiable condition on the population-level distributional gap. Moreover, we translate this insight into a practical algorithm that rescales token-level GRPO advantages using span-level Wasserstein distances, requiring only minimal modifications to existing training pipelines. Experiments across five mathematical reasoning benchmarks and five code generation benchmarks demonstrate consistent improvements over standard GRPO and competitive or superior performance relative to PRM-based methods that rely on a separately trained 7B reward model, while incurring less than 16\% computational overhead.


\appendix
\section{Detailed Analysis of the Wasserstein Signal}
While Section~\ref{sec:observation} establishes a strong macro-level anti-correlation between empirical accuracy and Wasserstein distance, this appendix investigates the robustness and dynamic sensitivity of this signal. Specifically, we ask two questions: Is the W-distance signal merely an artifact of longer sequence lengths? And how does it respond to shifts in reasoning quality as the trajectory unfolds?

\subsection{Cross-Sectional Analysis: Controlling for Sequence Length}
As a reasoning chain progresses, empirical accuracy naturally decreases while hidden state drift increases. To ensure W-distance is an independent indicator of reasoning quality rather than a simple proxy for token depth, we conducted a cross-sectional analysis as shown in Fig~\ref{fig:observation_fix_position}.

By strictly grouping trajectories into distinct prefix position bins and dividing them into W-distance quantiles, a consistent pattern emerges. Notably, even in the early stages of reasoning, W-distance already demonstrates a preliminary discriminative ability: within the same early position group, a higher W-distance weakly but consistently corresponds to lower mean empirical accuracy. More importantly, trajectories that fall into high W-distance bins early in the process are significantly more prone to error accumulation, leading to more frequent and severe reasoning failures at later positions. This confirms that an elevated W-distance serves as an effective early warning signal for vulnerable reasoning state.

\subsection{Dynamic Step-wise Tracking of Reasoning Mutations}
To verify that W-distance captures structural shifts in reasoning, we analyzed the step-wise derivatives of both metrics ($\Delta$Accuracy and $\Delta$W-Distance) across consecutive strides. 
\paragraph{Computing $\Delta$Accuracy and $\Delta$W-Distance.} Using the same per-prefix measurements from Section~\ref{sec:observation}, we compute first-order differences between consecutive prefix positions within each rollout:
\[\Delta \text{Acc}_t = \text{Acc}(s_t) - \text{Acc}(s_{t-1}), \quad \Delta W_t = W(s_t) - W(s_{t-1})\]
where $s_t$ and $s_{t-1}$ denote two consecutive prefix positions within the same rollout. These step-wise differences isolate the \emph{local change} in reasoning quality and representational drift at each stride, as opposed to the cumulative trends analyzed in Section 2.
\paragraph{Defining mutation events.} We isolate "mutation events"—moments where a trajectory experiences a significant shift in local continuation success. Since the continuation accuracy at each prefix is estimated by completing the reasoning 16 times and recording the fraction of correct completions (Section 2), the minimum detectable change in accuracy is $1/16=0.0625$. We therefore adopt $\Delta |\text{Acc}| \geq 0.0625$ as the threshold for identifying mutation events: changes below this magnitude are indistinguishable from the sampling noise inherent in a 16-trial estimate, while changes at or above this level correspond to at least one completion flipping between correct and incorrect, indicating a meaningful shift in reasoning quality.

As shown in Fig~\ref{fig:local_association}, there is a clear inverse relationship between local accuracy mutations and representational drift (Spearman correlation = $-0.42$). When a trajectory suffers a drop in reasoning quality ($\Delta$Accuracy $< 0$), the W-distance tends to exhibit a corresponding local increase ($\Delta$W-Distance $> 0$). This confirms that Wasserstein distance is a sensitive monitor of local reasoning failures, closely tracking shifts in structural reasoning logic as they unfold.

\begin{figure}[t]
    \centering
    \begin{subfigure}{0.24\linewidth}
        \centering
        \includegraphics[width=1\linewidth]{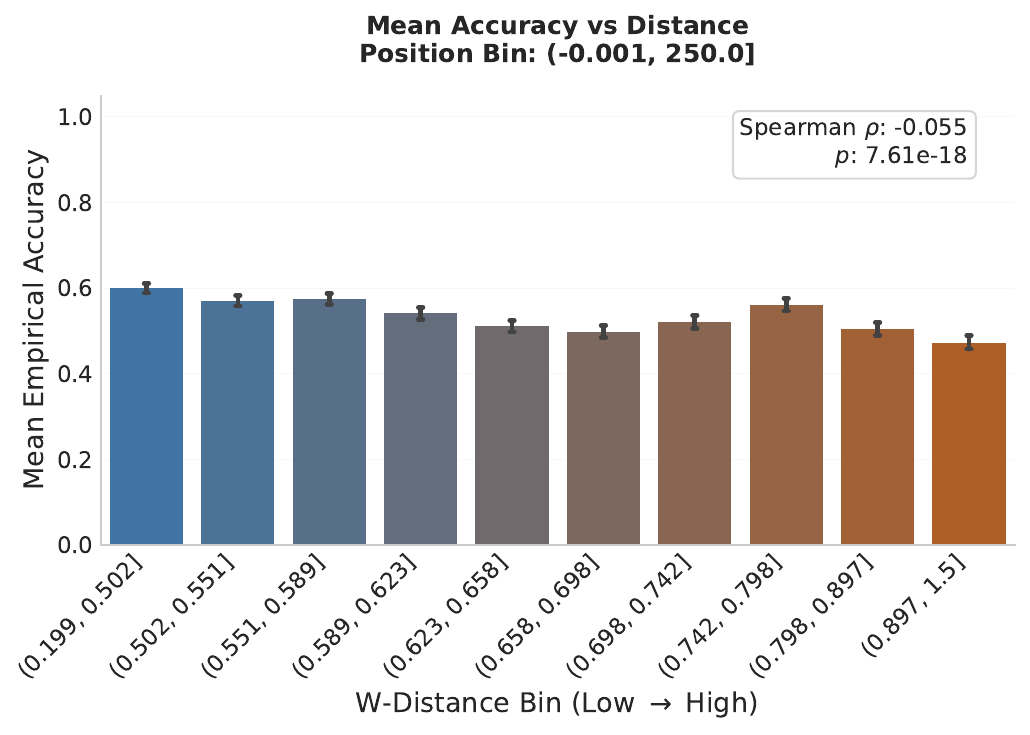}
    \end{subfigure}
    \hfill
    \begin{subfigure}{0.24\linewidth}
        \centering
        \includegraphics[width=1\linewidth]{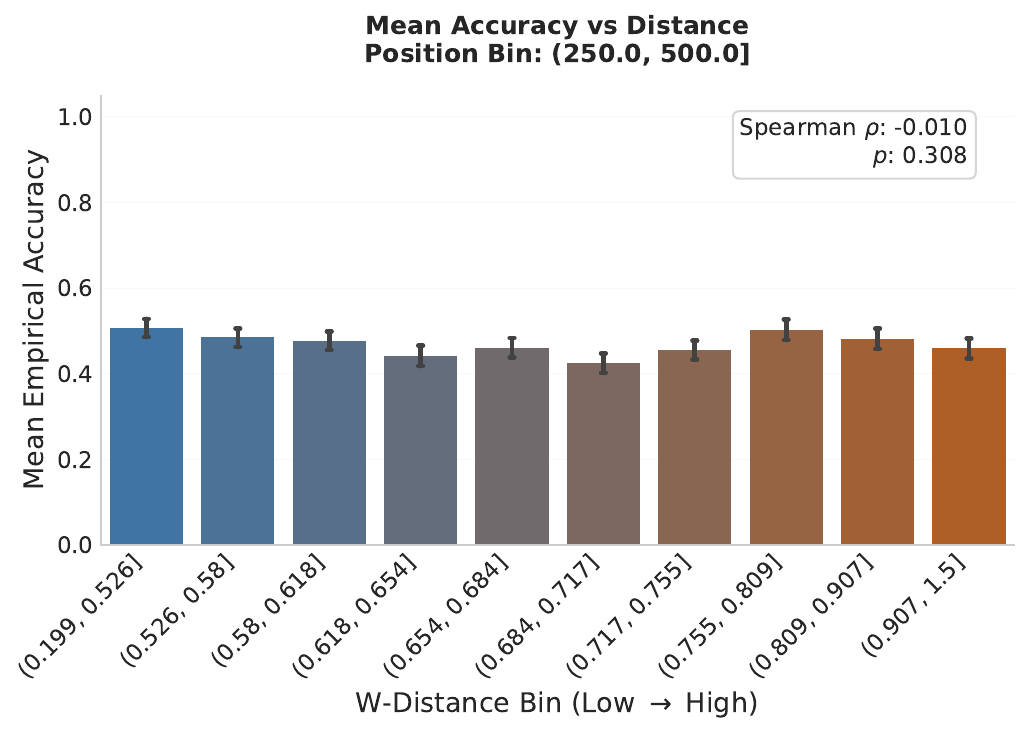}
    \end{subfigure}
    \hfill
    \begin{subfigure}{0.24\linewidth}
        \centering
        \includegraphics[width=1\linewidth]{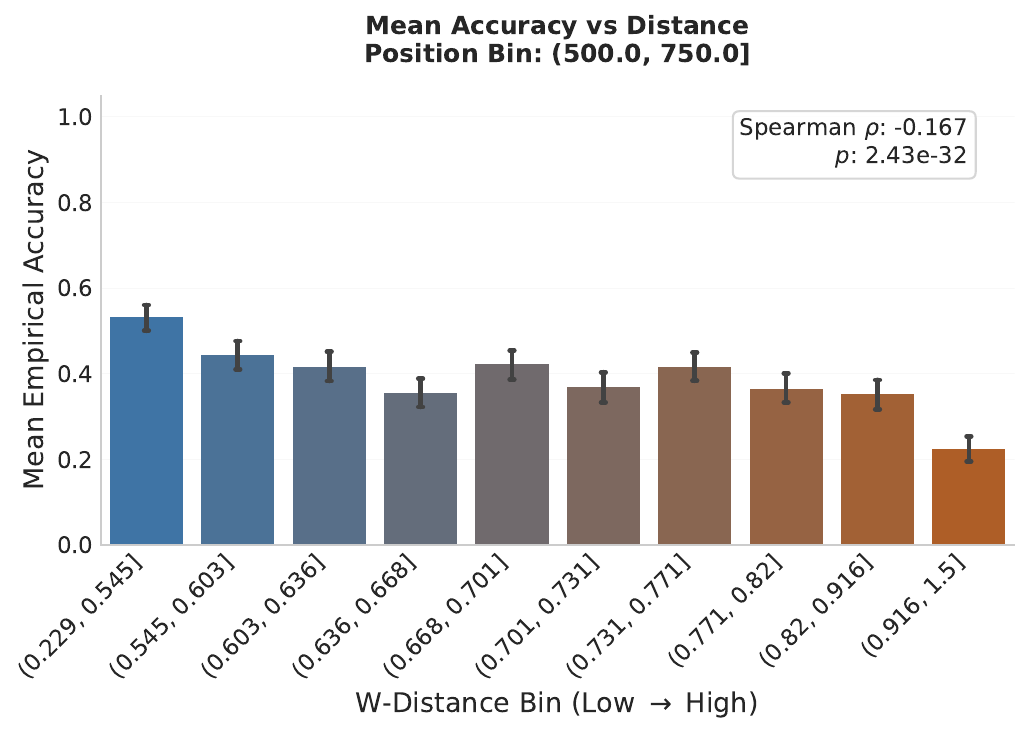}
    \end{subfigure}
    \hfill
    \begin{subfigure}{0.24\linewidth}
        \centering
        \includegraphics[width=1\linewidth]{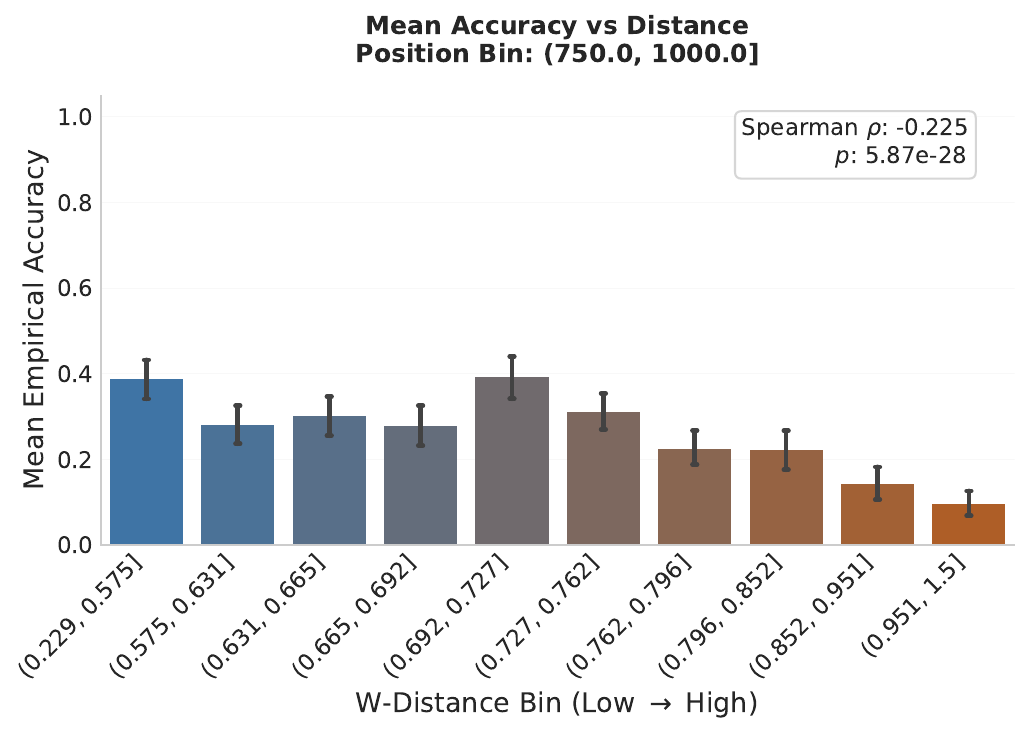}
    \end{subfigure}

    \caption{Wasserstein distance discriminates between robust and vulnerable reasoning trajectories. Samples with low W-distance (left side of each plot) consistently maintain high accuracy, whereas high W-distance (right side) effectively identifies trajectories destined for failure. As the reasoning chain lengthens, the performance gap between low and high W-distance samples widens, and the spearman correlation strengthens from $-0.055$ to $-0.225$, proving that W-distance is a sensitive metric for the internal reliability of the reasoning process.}
    \label{fig:observation_fix_position}
\end{figure}


\section{Sinkhorn Approximation}\label{app:sinkhorn}

The Sinkhorn algorithm~\cite{cuturi2013sinkhorn} computes the entropically regularized optimal transport cost:
\begin{equation}
    W_\epsilon(P, Q) = \min_{\gamma \in \Gamma(P,Q)} \left\{\int \|x - y\|_2 \, d\gamma(x,y) + \epsilon \, \mathrm{KL}(\gamma \| P \otimes Q)\right\}.
\end{equation}

\begin{lemma}[Approximation Bound]\label{lemma:sinkhorn}
For discrete measures on $\leq n$ points within a ball of radius $M$:
\[
W_1(P,Q) \leq W_\epsilon(P,Q) \leq W_1(P,Q) + \epsilon \log n.
\]
\end{lemma}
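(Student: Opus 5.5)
The plan is to prove the two inequalities separately, working directly from the variational definition of $W_\epsilon$. Throughout, write $P=\sum_{i=1}^{n_P} p_i\delta_{x_i}$ and $Q=\sum_{j=1}^{n_Q} q_j\delta_{y_j}$ with $n_P,n_Q\le n$.

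\emph{Existence of minimizers.} Since both measures are discrete, $\Gamma(P,Q)$ is the transportation polytope of nonnegative matrices $\gamma_{ij}$ with row sums $p_i$ and column sums $q_j$. This polytope is compact. The transport cost $\sum_{ij}\gamma_{ij}\|x_i-y_j\|_2$ is linear in $\gamma$. The term $\mathrm{KL}(\gamma\,\|\,P\otimes Q)=\sum_{ij}\gamma_{ij}\log\frac{\gamma_{ij}}{p_iq_j}$ is finite and continuous on the polytope: every admissible $\gamma$ is supported in $\mathrm{supp}P\times\mathrm{supp}Q$, so it is absolutely continuous with respect to $P\otimes Q$. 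Hence both the unregularized and the regularized minima are attained. The radius $M$ only guarantees that all costs are finite; it plays no further role.

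\emph{Lower bound.} For every coupling $\gamma$, Gibbs' inequality gives $\mathrm{KL}(\gamma\,\|\,P\otimes Q)\ge 0$. Therefore the regularized objective is at least $\int\|x-y\|_2\,d\gamma\ge W_1(P,Q)$. Minimizing over $\gamma$ gives $W_1\le W_\epsilon$.

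\emph{Upper bound.} Let $\gamma^*$ be an optimal coupling for $W_1$. Since $W_\epsilon$ is a minimum, $W_\epsilon(P,Q)\le W_1(P,Q)+\epsilon\,\mathrm{KL}(\gamma^*\,\|\,P\otimes Q)$, so it suffices to show $\mathrm{KL}(\gamma\,\|\,P\otimes Q)\le\log n$ for every coupling. The key observation is that if $(X,Y)\sim\gamma$, then with marginals $P$ and $Q$ this KL is exactly the mutual information $I(X;Y)=H(X)-H(X\mid Y)$. Because conditional entropy is nonnegative for discrete variables, $I(X;Y)\le H(X)\le\log n_P\le\log n$, where the last steps use that entropy on at most $n$ atoms is maximized by the uniform law. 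Combining these gives $W_\epsilon\le W_1+\epsilon\log n$.

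The only delicate step is this identification of the KL term with mutual information and the resulting $\log n$ bound. It depends on the regularizer being the KL divergence to $P\otimes Q$, as in the definition above. Under the alternative convention $-\epsilon H(\gamma)$, the constant would shift by $\epsilon(H(P)+H(Q))$. The algebra $\sum\gamma_{ij}\log\frac{\gamma_{ij}}{p_iq_j}=H(P)+H(Q)-H(\gamma)$ together with $H(\gamma)\ge\max(H(P),H(Q))$ gives the same bound. I will also state explicitly that the lemma concerns the regularized objective value, i.e.\ the transport cost plus the entropic penalty. It does not concern the "sharp" cost of the Sinkhorn plan alone, although that quantity is likewise sandwiched between $W_1$ and $W_\epsilon$.
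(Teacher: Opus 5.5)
The paper states this lemma in its Sinkhorn appendix without giving any proof, so there is no argument of the authors' to compare yours against. Your proof is correct and is the standard one.

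\begin{itemize}
\item Nonnegativity of $\mathrm{KL}$ gives $W_1\le W_\epsilon$.
\item For the upper bound, you evaluate the regularized objective at an optimal $W_1$ coupling and use $\mathrm{KL}(\gamma\,\|\,P\otimes Q)=I(X;Y)\le\min\{H(P),H(Q)\}\le\log n$. The entropy identity you mention is simply a second derivation of the same bound on the $\mathrm{KL}$ term.
\item Your argument actually gives the sharper gap $\epsilon\min\{H(P),H(Q)\}$. This coincides with $\epsilon\log n$ for uniform empirical measures on $n$ distinct points, which is the case arising in the paper.
\item The radius $M$ is indeed irrelevant. Finitely many atoms already make every transport cost finite.
\end{itemize}

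One point deserves to be stated more sharply than you do. The choice of regularizer affects the statement itself, not just a constant. Under the $-\epsilon H(\gamma)$ convention, plugging in an optimal coupling shows the regularized value is at most $W_1-\epsilon\max\{H(P),H(Q)\}$. The lower bound $W_1\le W_\epsilon$ then fails whenever $\epsilon>0$ and a marginal is nondegenerate.

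Keep your closing remark that the sharp cost $\int\|x-y\|_2\,d\gamma_\epsilon$ of the Sinkhorn plan also lies in $[W_1,W_\epsilon]$. Sinkhorn implementations often report that quantity rather than the regularized objective, so this is what makes the lemma applicable to what is actually computed.
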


The Sinkhorn bias $W_\epsilon - W_1 \in [0, \epsilon \log n]$ varies across pairs (depending on the transport plan structure). Therefore, $W_\epsilon$ does not automatically preserve the $W_1$ ordering. However, when the true separation gap exceeds $\epsilon \log n$, the ranking between pre-divergence and post-divergence spans is robust. With our hyperparameters $\epsilon \log n \approx 20.8$, which is negligible relative to the observed separation gap (see Section~\ref{sec:exp:1}).

\section{Complete Proof of Separation Theorem}\label{app:proof-separation}

We restate and fully prove Theorem~\ref{thm:separation}.

\begin{atheorem}[Separation, restated]
Under assumptions (A1)--(A3) stated in Section~\ref{sec:theory}, for a span $S = [a,b]$ of length $n$:
\begin{enumerate}[label=(\roman*)]
    \item If $b \leq \tau$: $\mathbb{E}[W_1(\widehat{P}_S^-, \widehat{P}_S^+)] \leq \eta(n,d)$.
    \item If $a > \tau$: $\mathbb{E}[W_1(\widehat{P}_S^-, \widehat{P}_S^+)] \geq D(S) - \eta(n,d)$.
    \item When $D(S) > 2\eta(n,d)$: $\mathbb{E}[W_1^{\mathrm{post}}] - \mathbb{E}[W_1^{\mathrm{pre}}] > 0$.
\end{enumerate}
\end{atheorem}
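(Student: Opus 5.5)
The proof uses only two facts about $W_1$: it is a metric on probability measures on $\mathbb{R}^d$ with support in the ball of radius $M$, and the concentration bound (A3) controls each empirical span measure against its own population mixture. For a span $S=[a,b]$ of length $n$, the empirical measures $\widehat{P}_S^{+}$ and $\widehat{P}_S^{-}$ are built from independent hidden states $h_t^{\pm}$, $t\in S$. By (A1) their population counterparts are $\bar{P}_S^{+}=\frac1n\sum_{t\in S}P_t$ and either $\bar{P}_S^{-}=\bar{P}_S^{+}$ (pre-divergence) or $\bar{P}_S^{-}=\frac1n\sum_{t\in S}Q_t$ (post-divergence). Throughout, (A3) is applied separately to each rollout, with $P_t$ replaced by $Q_t$ where needed, as the assumption explicitly allows. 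Each application gives $\mathbb{E}[W_1(\widehat{P}_S^{\pm},\bar{P}_S^{\pm})]\le \tilde C_d M n^{-1/d}=\eta(n,d)/2$.

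\emph{Step (i).} Suppose $b\le\tau$. Then every $t\in S$ satisfies $t\le\tau$, so both rollouts have marginals $P_t$ on $S$ and $\bar{P}_S^-=\bar{P}_S^+=:\bar P_S$. The triangle inequality gives
\[
W_1(\widehat{P}_S^-,\widehat{P}_S^+)\le W_1(\widehat{P}_S^-,\bar P_S)+W_1(\bar P_S,\widehat{P}_S^+).
\]
Taking expectations and applying (A3) to each term yields the bound $2\tilde C_d M n^{-1/d}=\eta(n,d)$.

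\emph{Step (ii).} Suppose $a>\tau$. Then every $t\in S$ lies after the divergence point, so the incorrect rollout has marginals $Q_t$ and its mixture is $\bar{P}_S^-=\frac1n\sum Q_t$. The triangle inequality applied twice along the path $\bar P_S^-\to\widehat P_S^-\to\widehat P_S^+\to\bar P_S^+$ gives
\[
D(S)\le W_1(\bar P_S^-,\widehat P_S^-)+W_1(\widehat P_S^-,\widehat P_S^+)+W_1(\widehat P_S^+,\bar P_S^+).
\]
Rearranging and taking expectations, with (A3) bounding the two deviation terms, gives $\mathbb{E}[W_1(\widehat P_S^-,\widehat P_S^+)]\ge D(S)-\eta(n,d)$. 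Note that $D(S)$ is deterministic. All expectations are finite because every distance is bounded by $2M$ by (A2).

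\emph{Step (iii).} Let $S_{\mathrm{pre}}$ be a pre-divergence span and $S_{\mathrm{post}}$ a post-divergence span, both of length $n$ so that they share the same $\eta(n,d)$. Combining (i) and (ii),
\[
\mathbb{E}[W_1^{\mathrm{post}}]-\mathbb{E}[W_1^{\mathrm{pre}}]\ge D(S_{\mathrm{post}})-2\eta(n,d)>0
\]
under the hypothesis.

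\emph{Main obstacle.} The only delicate point is not analytic. It is making sure that (A3) legitimately applies to each empirical span measure. This requires the tokens within a span to be independent, which (A1)/(A3) posit, and requires the reference measure to be the non-identical mixture $\bar P_n$ rather than a single law. I would state explicitly that both rollouts' spans satisfy (A3) and that $D(S)$ in (iii) refers to the post-divergence span being compared. The rest is the triangle inequality.
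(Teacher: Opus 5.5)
Your proposal is correct and matches the paper's proof essentially step for step: (A3) on each empirical span measure, the triangle inequality through the common mixture $\bar P_S$ for (i), the reverse triangle inequality for (ii) (your two-step path argument is the same thing), and subtraction for (iii). Your explicit requirement that the pre- and post-divergence spans share the same length $n$, and that $D(S)$ refers to the post-divergence span, is a small clarification the paper leaves implicit.
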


\begin{proof}
\textbf{Part (i).} Since $b \leq \tau$, by (A1), for all $t \in [a,b]$, both $h_t^-$ and $h_t^+$ are drawn from $P_t$. Define the population mixture
\[
\bar{P}_S = \frac{1}{n}\sum_{t=a}^{b} P_t.
\]
Both $\widehat{P}_S^-$ and $\widehat{P}_S^+$ are empirical measures formed by $n$ independent samples, where the $t$-th sample in each is drawn from $P_t$. Their common population mixture is $\bar{P}_S = \frac{1}{n}\sum_{t=a}^{b} P_t$. By the triangle inequality:
\begin{equation}
    W_1(\widehat{P}_S^-, \widehat{P}_S^+) \leq W_1(\widehat{P}_S^-, \bar{P}_S) + W_1(\bar{P}_S, \widehat{P}_S^+).
\end{equation}
Taking expectations and applying (A3) to each term:
\begin{equation}
    \mathbb{E}[W_1(\widehat{P}_S^-, \widehat{P}_S^+)] \leq \tilde{C}_d M n^{-1/d} + \tilde{C}_d M n^{-1/d} = \eta(n,d).
\end{equation}

\textbf{Part (ii).} 
Since $a > \tau$, by (A1), $h_t^- \sim Q_t$ and $h_t^+ \sim P_t$ for $t \in [a,b]$. Define population mixtures
\[
\bar{P}_S^- = \frac{1}{n}\sum_{t=a}^{b} Q_t,
\qquad
\bar{P}_S^+ = \frac{1}{n}\sum_{t=a}^{b} P_t,
\]
so $D(S) = W_1(\bar{P}_S^-, \bar{P}_S^+)$.

By the reverse triangle inequality:
\begin{align}
    W_1(\widehat{P}_S^-, \widehat{P}_S^+) &\geq W_1(\bar{P}_S^-, \bar{P}_S^+) - W_1(\widehat{P}_S^-, \bar{P}_S^-) - W_1(\widehat{P}_S^+, \bar{P}_S^+).
\end{align}
Taking expectations and applying (A3):
\begin{align}
    \mathbb{E}[W_1(\widehat{P}_S^-, \widehat{P}_S^+)] &\geq D(S) - \mathbb{E}[W_1(\widehat{P}_S^-, \bar{P}_S^-)] - \mathbb{E}[W_1(\widehat{P}_S^+, \bar{P}_S^+)] \\
    &\geq D(S) - \tilde{C}_d M n^{-1/d} - \tilde{C}_d M n^{-1/d} \\
    &= D(S) - \eta(n,d).
\end{align}

\textbf{Part (iii).} Combining parts (i) and (ii):
\begin{equation}
    \mathbb{E}[W_1^{\mathrm{post}}] - \mathbb{E}[W_1^{\mathrm{pre}}] \geq (D(S) - \eta(n,d)) - \eta(n,d) = D(S) - 2\eta(n,d) > 0
\end{equation}
when $D(S) > 2\eta(n,d)$.
\end{proof}

\subsection{Strict Positivity of $D(S)$}

\begin{aremark}[Non-cancellation]\label{remark:noncancellation}
$D(S) = 0$ would require
\[
\frac{1}{n}\sum_t Q_t = \frac{1}{n}\sum_t P_t
\]
as measures, meaning that distributional differences at individual positions exactly cancel when averaged over the span. This cancellation requires a precise anti-correlation structure across positions in distribution space. We treat $D(S) > 0$ as a regularity condition and verify it empirically: the monotonically increasing AUC between correct and incorrect spans in Section~\ref{sec:exp:1} (Figure~\ref{fig:Empirical_Validation}, right) confirms that post-divergence spans are indeed distributionally separated from their pre-divergence counterparts in practice.
\end{aremark}

\subsection{Concentration Assumption: Justification}\label{app:concentration}

Assumption (A3) states that the empirical Wasserstein convergence rate $O(Mn^{-1/d})$ holds for independent non-identical samples with shared bounded support. The standard proof for the i.i.d.\ case~\cite{fournier2015rate} proceeds via: (i) a covering-number bound on the support, depending only on diameter $2M$ and dimension $d$; (ii) concentration of cell-wise empirical frequencies via Hoeffding's inequality, which applies to sums of independent bounded random variables regardless of whether they are identically distributed. Since both ingredients are agnostic to the heterogeneity of $\{P_t\}$, the rate carries over with a potentially larger constant $\tilde{C}_d \geq C_d$.

Additionally, $W_1(\widehat{P}_n, \bar{P}_n)$ satisfies McDiarmid-type concentration: changing any single $X_t$ affects $\widehat{P}_n$ by at most $2M/n$ in $W_1$, yielding:
\begin{equation}
    \mathbb{P}\left[W_1(\widehat{P}_n, \bar{P}_n) > \mathbb{E}[W_1(\widehat{P}_n, \bar{P}_n)] + u\right] \leq \exp\left(-\frac{nu^2}{2M^2}\right).
\end{equation}

We validate the practical implication of this assumption in 
Section~\ref{sec:exp:1}, where the high discriminability 
(AUC up to 0.99) between correct and incorrect spans confirms 
that finite-sample noise does not overwhelm the distributional signal.

\section{Why Wasserstein Distance?}\label{app:why-w}

\begin{aproposition}[Population Distance Bounds]\label{prop:D-bounds-app}
The population distance $D(S) = W_1(\bar{P}_S^-, \bar{P}_S^+)$ satisfies:

\noindent\textbf{(a) First-moment bound:}
\[
D(S) \geq \left\|\frac{1}{n}\sum_{t=a}^{b}(m_t^- - m_t^+)\right\|_2,
\]
where
\[
m_t^+ = \mathbb{E}_{P_t}[h], \qquad m_t^- = \mathbb{E}_{Q_t}[h].
\]

\noindent\textbf{(b) Shape bound:}
\[
D(S) \geq \left|\frac{1}{n}\sum_{t=a}^{b}\bigl(\mathbb{E}_{Q_t}[\|h-c\|] - \mathbb{E}_{P_t}[\|h-c\|]\bigr)\right|
\]
for any reference point $c$.

\noindent\textbf{(c) Optimal bound:}
\[
D(S) = \sup_{\|f\|_{\mathrm{Lip}} \leq 1} \left|\frac{1}{n}\sum_t (\mathbb{E}_{Q_t}[f] - \mathbb{E}_{P_t}[f])\right|
\]
(KR duality).

\noindent\textbf{(d) Relationship to TV:}
\[
W_1(P,Q) \leq \mathrm{diam}(\mathrm{supp}) \cdot \mathrm{TV}(P,Q)
\]
(upper bound, not useful as a lower bound on $D$).
\end{aproposition}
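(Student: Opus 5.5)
The plan is to prove (c) first and obtain (a) and (b) from it by exhibiting explicit 1-Lipschitz test functions. Part (d) follows directly from the coupling definition in Eq.~\eqref{eq:w1}.

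For (c), we invoke Kantorovich--Rubinstein duality for measures with bounded support, which applies by (A2). It gives $W_1(\mu,\nu)=\sup_{\|f\|_{\mathrm{Lip}}\le 1}|\mathbb{E}_\mu f-\mathbb{E}_\nu f|$. Then we use linearity of the integral in the measure. Since $\bar{P}_S^-=\frac1n\sum_t Q_t$, we have $\mathbb{E}_{\bar P_S^-}f=\frac1n\sum_t\mathbb{E}_{Q_t}f$, and the same holds for $\bar P_S^+$. Substituting these expressions gives exactly the displayed formula. The only point needing care is that every $f$ with finite Lipschitz constant is integrable under these measures, which is immediate from bounded support.

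For (a), fix a unit vector $u\in\mathbb{R}^d$ and take $f(h)=\langle u,h\rangle$. This $f$ is 1-Lipschitz by Cauchy--Schwarz, and its expectations are $\langle u,m_t^\pm\rangle$. Then (c) gives $D(S)\ge|\langle u,\frac1n\sum_t(m_t^--m_t^+)\rangle|$. Choosing $u$ parallel to the averaged mean difference yields its Euclidean norm; if that difference is zero, the bound is trivial. For (b), take $f(h)=\|h-c\|_2$. It is 1-Lipschitz by the reverse triangle inequality $|\|h-c\|-\|h'-c\||\le\|h-h'\|$, and applying (c) gives the claim.

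For (d), take any coupling $\gamma$ of $P$ and $Q$ with $\gamma(\{x\neq y\})=\mathrm{TV}(P,Q)$. Such a maximal coupling exists for probability measures on $\mathbb{R}^d$. Its cost satisfies $\int\|x-y\|\,d\gamma\le \mathrm{diam}(\mathrm{supp})\cdot\gamma(x\ne y)$, because the integrand vanishes on the diagonal and is bounded by the diameter off it; on the support of $\gamma$, both points lie in the common support. Taking the infimum over couplings gives the bound. We will then note that the inequality runs the wrong way to serve as a lower bound on $D$. The main (mild) obstacle is justifying the maximal coupling rigorously. Because every measure in our setting is empirical or a mixture with bounded support, we would first try the explicit construction: put the common part $P\wedge Q$ on the diagonal and couple the normalized remainders by the product coupling.
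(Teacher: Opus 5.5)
Your proposal is correct and matches the paper's proof: (a) and (b) come from plugging the 1-Lipschitz test functions $h\mapsto v^\top h$ and $h\mapsto\|h-c\|_2$ into the Kantorovich--Rubinstein dual, and (c) is KR duality itself, with your linearity-of-mixtures step spelling out what the paper leaves implicit. For (d) the paper only cites the standard diameter-times-TV bound, and your maximal-coupling construction is the usual proof of it, provided you read ``common support'' as a set containing both supports (e.g.\ the ball of radius $M$ from (A2)), not their intersection.
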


\begin{proof}
(a) Choose $f(h) = v^\top h$ with $\|v\|_2 = 1$, which is 1-Lipschitz, in the KR dual, then optimize $v$ to align with $\frac{1}{n}\sum_t(m_t^- - m_t^+)$.

(b) $\phi_c(h) = \|h - c\|_2$ is 1-Lipschitz by the reverse triangle inequality. Apply this in the KR dual.

(c) Definition of $W_1$ via KR duality.

(d) Standard result: transport cost per unit of mismatched mass is upper bounded by the support diameter.
\end{proof}

\section{Group-Level Separation}\label{app:group}

The algorithm computes
\[
d_{\min}(S) = \min_{k,j} W_\epsilon(\widehat{P}_S^-, \widehat{P}_{S_j}^{+,k})
\]
over all correct rollouts and their spans. We extend Theorem~\ref{thm:separation} to this setting.

\begin{definition}[Latest Divergence Point]
For incorrect rollout $y^-$ and correct rollouts $\{y^{+,1}, \ldots, y^{+,K^+}\}$ with pairwise divergence points $\tau_1, \ldots, \tau_{K^+}$:
\[
\tau^* = \max_k \tau_k.
\]
\end{definition}

\begin{atheorem}[Group-Level Separation]\label{thm:group-app}
Under Assumptions~(A1--A3):
\begin{enumerate}[label=(\roman*), nosep]
    \item If $b \leq \tau^*$: $\mathbb{E}[d_{\min}(S)] \leq \eta(n,d)$.
    \item If $a > \tau^*$: $\mathbb{E}[d_{\min}(S)] \geq D^*(S) - \eta(n,d)$, where
    \[
    D^*(S) = \min_{k,j} W_1(\bar{P}_S^-, \bar{P}_{S_j}^{+,k}).
    \]
    \item Separation holds when $D^*(S) > 2\eta(n,d)$.
\end{enumerate}
\end{atheorem}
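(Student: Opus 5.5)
The plan is to reduce each part to the pairwise argument of Theorem~\ref{thm:separation} (as proved in Appendix~\ref{app:proof-separation}), using two elementary facts about minima. For an upper bound, $\min$ is at most any single term. For a lower bound, a termwise inequality passes to the minimum, and the noise terms can be controlled by a single random variable that does not depend on the index. I will work with $W_1$ throughout and note at the end that the $W_\epsilon$ version follows from Lemma~\ref{lemma:sinkhorn} with an additive $\epsilon\log n$ slack.

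\textbf{Part (i).} Suppose $b\le\tau^*$. Pick $k^*$ with $\tau_{k^*}=\tau^*$, so that $y^-$ and $y^{+,k^*}$ are aligned through position $\tau^*$ in the sense of (A1). Take $S_{j^*}$ to be the span of $y^{+,k^*}$ occupying the same positions $[a,b]$. Then $d_{\min}(S)\le W_1(\widehat P_S^-,\widehat P_{S_{j^*}}^{+,k^*})$ pointwise. Taking expectations and applying part (i) of Theorem~\ref{thm:separation} to this pair gives $\mathbb{E}[d_{\min}(S)]\le\eta(n,d)$. The only point to check is that (A1) applies pairwise, with the incorrect rollout's marginals equal to $P_t^{(k^*)}$ for $t\le\tau_{k^*}$. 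This is exactly how the pairwise divergence points are defined.

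\textbf{Part (ii).} Suppose $a>\tau^*$. For every pair $(k,j)$, the reverse triangle inequality gives
\[
W_1(\widehat P_S^-,\widehat P_{S_j}^{+,k})\ \ge\ W_1(\bar P_S^-,\bar P_{S_j}^{+,k})-W_1(\widehat P_S^-,\bar P_S^-)-W_1(\widehat P_{S_j}^{+,k},\bar P_{S_j}^{+,k}).
\]
The first noise term is common to all pairs. Minimizing over $(k,j)$ therefore gives
\[
d_{\min}(S)\ \ge\ D^*(S)-W_1(\widehat P_S^-,\bar P_S^-)-\max_{k,j}W_1(\widehat P_{S_j}^{+,k},\bar P_{S_j}^{+,k}).
\]
Taking expectations, (A3) bounds the first noise term by $\tilde C_dMn^{-1/d}$.

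\textbf{Main obstacle.} The hard step is the expectation of the maximum over the opposing spans. (A3) controls each term individually, not the maximum. My first attempt is to follow the paper's proof sketch and treat the per-span deviation as bounded by $\tilde C_dMn^{-1/d}$, that is, to read (A3) as a bound that holds uniformly over the finitely many opposing spans. If that reading is not acceptable, I will use the McDiarmid concentration from Appendix~\ref{app:concentration} together with a union bound over the $N$ opposing spans. This gives $\mathbb{E}\max\le\tilde C_dMn^{-1/d}+M\sqrt{2\log N/n}$, so the stated bound holds with an enlarged noise floor $\eta$. I would state this explicitly as the price of the group minimum.

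\textbf{Part (iii).} This follows by subtraction, exactly as in the pairwise case. Parts (i) and (ii) give a gap of at least $D^*(S)-2\eta(n,d)$, which is positive under the stated condition. If the enlarged $\eta$ from the union bound is needed, the condition in (iii) must use that enlarged $\eta$ as well.
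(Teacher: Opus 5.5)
Your argument is the paper's proof step for step. In (i) you take $k^*$ with $\tau_{k^*}=\tau^*$, bound $d_{\min}(S)$ by the same-position pair, and invoke Theorem~\ref{thm:separation}(i). In (ii) you use the same reverse-triangle decomposition, factor out the common term $W_1(\widehat P_S^-,\bar P_S^-)$, and are left with $\max_{k,j}W_1(\widehat P_{S_j}^{+,k},\bar P_{S_j}^{+,k})$. Part (iii) is subtraction.

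The step you single out is exactly where the paper's own proof is loose. It takes expectations and ``applies (A3) to each deviation term'' after the maximum has been formed, which amounts to silently adopting your first reading.

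Your worry is substantive, because under (A3) as literally stated, part (ii) can fail. Take $d=2$ and $M=1$. Let the post-divergence span of $y^-$ sit at $-e_1$, so $\widehat P_S^-=\bar P_S^-=\delta_{-e_1}$. Let every opposing span consist of $n$ i.i.d.\ draws from $\tfrac12(\delta_0+\delta_{e_1})$. Then (A3) holds with $\tilde C_2=1/2$, since $\mathbb{E}|K/n-\tfrac12|\le 1/(2\sqrt n)$ for $K\sim\mathrm{Bin}(n,\tfrac12)$. Hence $\eta(n,2)=n^{-1/2}$ and $D^*(S)=3/2$. However, $d_{\min}(S)=1+\min_{k,j}K_{kj}/n$, and its expectation tends to $1$ as the number of non-overlapping opposing spans grows. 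For $n\ge 5$ this is below $3/2-n^{-1/2}$. So the theorem needs either a maximal version of (A3) or an enlarged noise floor.

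Your fallback is the right repair. The bounded differences $2M/n$ make each deviation sub-Gaussian with variance proxy $M^2/n$, via the MGF form of McDiarmid. The log-sum-exp maximal inequality then gives exactly $\mathbb{E}\max\le\tilde C_d M n^{-1/d}+M\sqrt{2\log N/n}$. It needs no independence across overlapping spans, whereas a plain tail union bound adds a small lower-order term.

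Part (i) involves no maximum, so only the post-divergence side picks up the extra term. Part (iii) therefore holds under $D^*(S)>2\eta(n,d)+M\sqrt{2\log N/n}$. This is slightly weaker than the condition you get by enlarging $\eta$ on both sides.
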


\begin{proof}
\textbf{(i)} Since $b \leq \tau^*$, there exists $y^{+,k^*}$ with $\tau_{k^*} = \tau^*$, meaning that $y^-$ and $y^{+,k^*}$ share the same marginal distributions $\{P_t\}$ for all $t \leq \tau^*$. Since $b \leq \tau^*$ and rollouts in the same GRPO group are generated for the same input, $y^{+,k^*}$ contains a span at position $[a,b]$.\footnote{This requires $y^{+,k^*}$ to be at least $b$ tokens long, which holds in practice since rollouts for the same input have comparable lengths.} Let $S'$ denote this span. Since both $\widehat{P}_S^-$ and $\widehat{P}_{S'}^{+,k^*}$ are empirical measures whose samples are drawn from the same distributions $\{P_t\}_{t=a}^{b}$, we have
\[
d_{\min}(S) \leq W_1(\widehat{P}_S^-, \widehat{P}_{S'}^{+,k^*}),
\]
and the pairwise pre-divergence bound (Theorem~\ref{thm:separation}(i)) gives $\mathbb{E}[d_{\min}(S)] \leq \eta(n,d)$.

\textbf{(ii)} For any $(k, S_j)$, the reverse triangle inequality gives:
\[
W_1(\widehat{P}_S^-, \widehat{P}_{S_j}^{+,k}) \geq W_1(\bar{P}_S^-, \bar{P}_{S_j}^{+,k}) - W_1(\widehat{P}_S^-, \bar{P}_S^-) - W_1(\widehat{P}_{S_j}^{+,k}, \bar{P}_{S_j}^{+,k}).
\]
Since this holds for every $(k,j)$, taking the minimum over $(k,j)$ on both sides:
\[
d_{\min}(S) \geq D^*(S) - W_1(\widehat{P}_S^-, \bar{P}_S^-) - \max_{k,j}\, W_1(\widehat{P}_{S_j}^{+,k}, \bar{P}_{S_j}^{+,k}).
\]
Taking expectations and applying (A3) to each deviation term (noting that (A3) applies to $\{Q_t\}$ under the same bounded-support condition (A2)) yields $\mathbb{E}[d_{\min}(S)] \geq D^*(S) - \eta(n,d)$.
\end{proof}

\begin{aremark}
The condition $D^*(S) > 2\eta$ requires this post-divergence span to be distributionally different from \emph{every} opposing span. If some distant correct span happens to be distributionally close, $D^*$ is small and separation is not guaranteed. This reflects a limitation of the model's representations (inability to distinguish this span from correct reasoning), not of the algorithm.
\end{aremark}

\section{Symmetric Analysis: Correct Rollouts}\label{app:symmetric}

The algorithm applies symmetrically: correct rollouts also compute distances to the opposing (incorrect) group.

\begin{aproposition}[Correct Rollout Separation]\label{prop:correct-app}
For a span $S = [a,b]$ in a correct rollout $y^+$ with divergence point $\tau$ relative to an incorrect rollout $y^-$ and define $d^+(S) := \min_{j} W_1(\widehat{P}_S^+, \widehat{P}_{S_j}^-)$ as the minimum empirical distance from span $S$ in $y^+$ to all spans in $y^-$:
\begin{enumerate}[label=(\roman*), nosep]
    \item If $b \leq \tau$: $\mathbb{E}[d^+(S)] \leq \eta(n,d)$.
    \item If $a > \tau$: $\mathbb{E}[d^+(S)] \geq D^+(S) - \eta(n,d)$, where $D^+(S)$ is the minimum population distance from $S$ to any span in $y^-$.
\end{enumerate}
\end{aproposition}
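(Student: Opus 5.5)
The plan is to mirror the proof of Theorem~\ref{thm:group-app}, swapping the roles of the two rollouts. The distance $W_1$ is symmetric, and assumption (A3) applies to both $\{P_t\}$ and $\{Q_t\}$. So every step of the pairwise and group-level arguments carries over once the minimum is taken over spans of the single incorrect rollout $y^-$ instead of over spans of several correct rollouts.

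For part (i), assume $b \leq \tau$. By (A1), the span $S'=[a,b]$ in $y^-$ is formed by independent draws from the same marginals $P_t$ as $S$ in $y^+$. As in the footnote to Theorem~\ref{thm:group-app}(i), I assume $y^-$ is at least $b$ tokens long, so that $S'$ exists. Since the minimum is at most any particular term,
\[
d^+(S) \leq W_1(\widehat{P}_S^+, \widehat{P}_{S'}^-) .
\]
Taking expectations and invoking Theorem~\ref{thm:separation}(i), which is symmetric in its two arguments, gives $\mathbb{E}[d^+(S)] \leq \eta(n,d)$.

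For part (ii), assume $a > \tau$. Write $\bar{P}_S^+ = \frac1n\sum_{t=a}^b P_t$, and let $\bar{P}_{S_j}^-$ be the population mixture of span $S_j$ in $y^-$. That mixture is built from $P_t$ at positions $t\le\tau$ and from $Q_t$ at positions $t>\tau$. Define
\[
D^+(S) := \min_j W_1\bigl(\bar{P}_S^+, \bar{P}_{S_j}^-\bigr).
\]
For each $j$, the reverse triangle inequality gives
\[
W_1(\widehat{P}_S^+, \widehat{P}_{S_j}^-) \geq W_1(\bar{P}_S^+, \bar{P}_{S_j}^-) - W_1(\widehat{P}_S^+, \bar{P}_S^+) - W_1(\widehat{P}_{S_j}^-, \bar{P}_{S_j}^-).
\]
Taking the minimum over $j$ then yields
\[
d^+(S) \geq D^+(S) - W_1(\widehat{P}_S^+, \bar{P}_S^+) - \max_j W_1(\widehat{P}_{S_j}^-, \bar{P}_{S_j}^-),
\]
and we take expectations of both sides.

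The main obstacle is the term $\mathbb{E}[\max_j W_1(\widehat{P}_{S_j}^-,\bar{P}_{S_j}^-)]$. Assumption (A3) controls the expectation of each term separately, not the expectation of their maximum. The group-level proof quietly passes from the one to the other.

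I would handle this in one of two ways.
\begin{itemize}
\item \emph{Matching the paper's level of rigor.} Apply (A3) term by term, as in Theorem~\ref{thm:group-app}(ii), and flag the step.
\item \emph{Making it rigorous.} Let $J$ be the number of spans in $y^-$. Use the McDiarmid bound of Appendix~\ref{app:concentration} with a union bound over the $J$ spans:
\[
\mathbb{E}\Bigl[\max_j W_1(\widehat{P}_{S_j}^-,\bar{P}_{S_j}^-)\Bigr] \leq \tilde{C}_d M n^{-1/d} + M\sqrt{2\log J / n}.
\]
This proves (ii) with $\eta(n,d)$ enlarged by the additive term $M\sqrt{2\log J/n}$.
\end{itemize}
The alternative is to read $d^+$ as the random minimizer $j^*$ and argue conditionally, but that is less clean. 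I would also note that, for this rollout pair, $\tau$ plays the role of $\tau^*$. A full group-level version, with the minimum taken over many incorrect rollouts, would use the latest divergence point in exactly the same way as Theorem~\ref{thm:group-app}.
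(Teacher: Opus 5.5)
Your proof is the paper's argument, made explicit. The paper's one-line proof cites Theorem~\ref{thm:separation} with $P_t$ and $Q_t$ exchanged, but because $d^+(S)$ is a minimum over all spans of $y^-$, the structure actually needed is that of Theorem~\ref{thm:group-app}, which is exactly what you reproduce: in (i) you bound the minimum by the aligned term, and in (ii) you apply the reverse triangle inequality for each $j$ and then take the minimum. Your flag in (ii) is correct and applies equally to the paper, whose group-level proof bounds the expected maximum deviation using (A3) even though (A3) controls each deviation only separately; so neither argument establishes (ii) with the unmodified $\eta(n,d)$, whereas your bounded-differences plus union-bound repair is sound (the exact constant $M\sqrt{2\log J/n}$ needs the sub-Gaussian moment bound behind McDiarmid, while the tail bound alone gives the same rate with a slightly larger constant).
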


\begin{proof}
Identical structure to Theorem~\ref{thm:separation}, with the roles of $P_t$ and $Q_t$ exchanged.
\end{proof}

\begin{remark}[Bidirectional Credit Assignment]\label{thm:bilateral-app}
This follows directly from Theorem~\ref{thm:separation} and Proposition~\ref{prop:correct-app} combined with the sign structure of $A^{(i)}$ in Algorithm~\ref{alg:method}.
When separation holds for both incorrect and correct rollouts:
\begin{enumerate}[label=(\alph*), nosep]
    \item Incorrect rollouts ($A^{(i)} < 0$): post-divergence tokens receive higher weights $\omega_t^{(i)}$, increasing the magnitude of negative updates in high-discrepancy regions.
    \item Correct rollouts ($A^{(i)} > 0$): post-divergence tokens receive higher weights $\omega_t^{(i)}$, increasing the magnitude of positive updates in regions more strongly separated from the opposing set.
\end{enumerate}
\end{remark}

\section{Multiple Divergence Points}\label{app:multi-div}

The single-divergence model (Assumption A1) is an analytical simplification. Real reasoning chains may contain multiple divergences or partial self-corrections.

\begin{definition}[Piecewise Divergence]
Rollout $y^-$ has regime boundaries
\[
0 = \tau_0 < \tau_1 < \cdots < \tau_{2m+1} = T
\]
alternating between \emph{aligned regimes} ($h_t^- \sim P_t$) and \emph{drifted regimes} ($h_t^- \sim Q_t^{(j)} \neq P_t$).
\end{definition}

\begin{aproposition}[Piecewise Separation]
Spans fully contained within an aligned regime satisfy the pre-divergence bound. Spans fully contained within a drifted regime satisfy the post-divergence bound with regime-specific $D(S)$. Spans that straddle a regime boundary exhibit intermediate distances proportional to the drifted fraction.
\end{aproposition}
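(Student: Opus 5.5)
The plan is to handle the three kinds of spans separately, reusing the pairwise argument of Theorem~\ref{thm:separation} within each regime and treating straddling spans by decomposing the span-level mixture. Throughout, pair a span $S=[a,b]$ in $y^-$ with the corresponding span in the correct rollout $y^+$. Use the same population mixtures $\bar P_S^+=\frac1n\sum_{t\in S}P_t$ and $\bar P_S^-=\frac1n\sum_{t\in S}R_t$, where $R_t=P_t$ on aligned regimes and $R_t=Q_t^{(j)}$ on the $j$-th drifted regime. Assumption (A3) is stated for arbitrary independent bounded-support sequences, so it applies to $\{R_t\}$ verbatim, and $\eta(n,d)$ is unchanged.

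\textbf{Aligned spans.} If $S$ lies inside an aligned regime, then $h_t^-$ and $h_t^+$ are independent draws from $P_t$ for every $t\in S$. This is exactly the hypothesis used in the proof of Theorem~\ref{thm:separation}(i). The triangle inequality through $\bar P_S^+$, together with (A3), gives $\mathbb{E}[W_1(\widehat P_S^-,\widehat P_S^+)]\le\eta(n,d)$.

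\textbf{Drifted spans.} If $S$ lies inside the $j$-th drifted regime, set $D_j(S)=W_1\bigl(\frac1n\sum_{t\in S}Q_t^{(j)},\bar P_S^+\bigr)$. The reverse triangle inequality argument of Theorem~\ref{thm:separation}(ii) then gives $\mathbb{E}[W_1]\ge D_j(S)-\eta(n,d)$. Nothing in that proof used the single-divergence structure beyond the span itself.

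\textbf{Straddling spans.} This is the main work. Let $\alpha\in[0,1]$ be the drifted fraction of $S$, and let $S_{\mathrm{al}},S_{\mathrm{dr}}$ be its aligned and drifted parts. Then
\[
\bar P_S^-=(1-\alpha)\bar P_{S_{\mathrm{al}}}+\alpha\bar Q_{S_{\mathrm{dr}}},\qquad \bar P_S^+=(1-\alpha)\bar P_{S_{\mathrm{al}}}+\alpha\bar P_{S_{\mathrm{dr}}}.
\]
I would make ``proportional to the drifted fraction'' precise in two directions.

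\emph{Upper bound.} Build a coupling that is the identity on the common component $(1-\alpha)\bar P_{S_{\mathrm{al}}}$ and an optimal coupling on the drifted components. Convexity of $W_1$ under mixtures then gives $D(S)\le\alpha\,W_1(\bar Q_{S_{\mathrm{dr}}},\bar P_{S_{\mathrm{dr}}})\le 2M\alpha$, using (A2) for the last step. Adding the noise term yields $\mathbb{E}[W_1]\le\alpha D(S_{\mathrm{dr}})+\eta(n,d)$.

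\emph{Lower bound.} Use Kantorovich--Rubinstein duality (Proposition~\ref{prop:D-bounds-app}(c)). The aligned contributions to $\frac1n\sum_t(\mathbb{E}_{R_t}f-\mathbb{E}_{P_t}f)$ vanish identically, so
\[
D(S)=\alpha\sup_f\Bigl|\tfrac{1}{|S_{\mathrm{dr}}|}\sum_{t\in S_{\mathrm{dr}}}(\mathbb{E}_{Q_t}f-\mathbb{E}_{P_t}f)\Bigr|=\alpha\,D(S_{\mathrm{dr}}),
\]
which is actually an equality. Combining with (ii) gives $\alpha D(S_{\mathrm{dr}})-\eta(n,d)\le\mathbb{E}[W_1]\le\alpha D(S_{\mathrm{dr}})+\eta(n,d)$. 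This is the intermediate-distance claim, with the drifted sub-window playing the role of $D$.

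\textbf{Expected obstacle.} The main subtlety is a straddling span that meets several drifted regimes with different $Q^{(j)}$. There the exact equality becomes $D(S)=W_1$ between mixtures of the drifted pieces. Proportionality then holds as an upper bound $\sum_j\alpha_j D(S_{\mathrm{dr},j})$ by convexity, but the lower bound only holds for the pooled drifted part, because cancellation across regimes is possible (compare Remark~\ref{remark:noncancellation}). I would state the proposition with the pooled drifted sub-window to avoid this, and note that the group-level version follows exactly as in Theorem~\ref{thm:group-app}.
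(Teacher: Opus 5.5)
Your proposal is correct, and it supplies an argument the paper does not have: the paper states this proposition with no proof at all. It is followed only by the remark that the algorithm is agnostic to the number of divergence points.

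\textbf{Aligned and drifted spans.} Here you do what the paper implicitly intends. You rerun the proofs of Theorem~\ref{thm:separation}(i)--(ii), which are local to the span and apply to the heterogeneous sequence $\{R_t\}$ through (A3).

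\textbf{Straddling spans.} The paper gives no justification for this case, and your Kantorovich--Rubinstein observation is what makes the claim provable. Because $W_1$ between probability measures depends only on their difference, the shared aligned component cancels exactly, so $D(S)=\alpha D(S_{\mathrm{dr}})$. Applying the triangle inequality in both directions then gives $|\mathbb{E}[W_1(\widehat P_S^-,\widehat P_S^+)]-\alpha D(S_{\mathrm{dr}})|\le\eta(n,d)$. The upper direction, $\mathbb{E}[W_1]\le D(S)+\eta(n,d)$, is a bound the paper never states.

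This also makes explicit the sense in which \emph{proportional} is true:
\begin{itemize}
\item it holds exactly at the population level;
\item it holds only up to $\pm\eta(n,d)$ empirically, so a span with $\alpha D(S_{\mathrm{dr}})$ below the noise floor is indistinguishable from an aligned one;
\item the constant $D(S_{\mathrm{dr}})$ itself changes as the window slides;
\item the pooled drifted sub-window is needed when several drifted regimes meet $S$.
\end{itemize}
The linearity is specific to $W_1$. For $W_p$ with $p>1$, mixing in a common component gives only $\alpha^{1/p}$ scaling; compare $\delta_0$ with $(1-\alpha)\delta_0+\alpha\delta_x$. Your coupling-based upper bound is redundant once you have the equality, but harmless.

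One caution about your closing sentence: the group-level version does not follow exactly as in Theorem~\ref{thm:group-app}. Only your upper bound transfers, via the positionally matched correct span. The lower bound becomes $D^*(S)-\eta(n,d)$, and $D^*(S)\le\alpha D(S_{\mathrm{dr}})$ can be strictly smaller, because non-matched opposing spans share no aligned component with $S$. So the two-sided proportionality is intrinsically a pairwise statement. In addition, the paper's group-level lower bound needs control of $\mathbb{E}[\max_{k,j}W_1(\widehat P^{+,k}_{S_j},\bar P^{+,k}_{S_j})]$. Assumption (A3) bounds each of these expectations separately, so it does not supply this.
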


The algorithm operates identically regardless of the number of divergence points: it computes Wasserstein distances for all spans and lets the distances encode the local alignment/drift structure.

\section{Design Choice Justifications}\label{app:design}

\subsection{Max-Pooling vs Mean-Pooling}
For a token $t$ at the divergence boundary (belonging to both pre- and post-divergence spans):
\begin{itemize}[nosep]
    \item Max-pooling assigns $\omega_t = \max_{k:\, t \in S_k} d_k \geq d_{\mathrm{post}}$, preserving the post-divergence signal.
    \item Mean-pooling assigns $\omega_t = \frac{1}{|\{k: t \in S_k\}|}\sum_{k:\, t \in S_k} d_k$, diluted by pre-divergence spans.
\end{itemize}
Max-pooling is the conservative choice: it ensures boundary tokens are weighted by their most discriminative span, which is where credit assignment is most uncertain and impactful.

\subsection{Monotonicity}

Under the natural expectation that distributional divergence accumulates over time (later errors compound earlier ones), $D(S)$ should increase for later post-divergence spans. We do not formally prove this (it requires that the KR dual's optimal witness behaves consistently across span positions), but observe it empirically in Section~5.1.



\subsection{Connection to Process Reward Models}
Let $\mathrm{PRM}(t) = \mathbb{P}[\text{correct} \mid \text{reasoning up to step } t]$. Under the divergence model, $\mathrm{PRM}(t)$ is high for $t \leq \tau$ and decreases for $t > \tau$. The token-level weight $\omega_t^{(i)}$ (Algorithm~\ref{alg:method}, line 13) qualitatively tracks $1 - \mathrm{PRM}(t)$ for incorrect rollouts: small when reasoning is on track, large after errors. This connection is interpretive; the key practical advantage is that $\omega_t^{(i)}$ requires no step-level supervision.

\section{Difficulty Distribution of Retained Subset}\label{app:acc_distribution}
\begin{figure}[ht]
    \centering
    \includegraphics[width=0.5\linewidth]{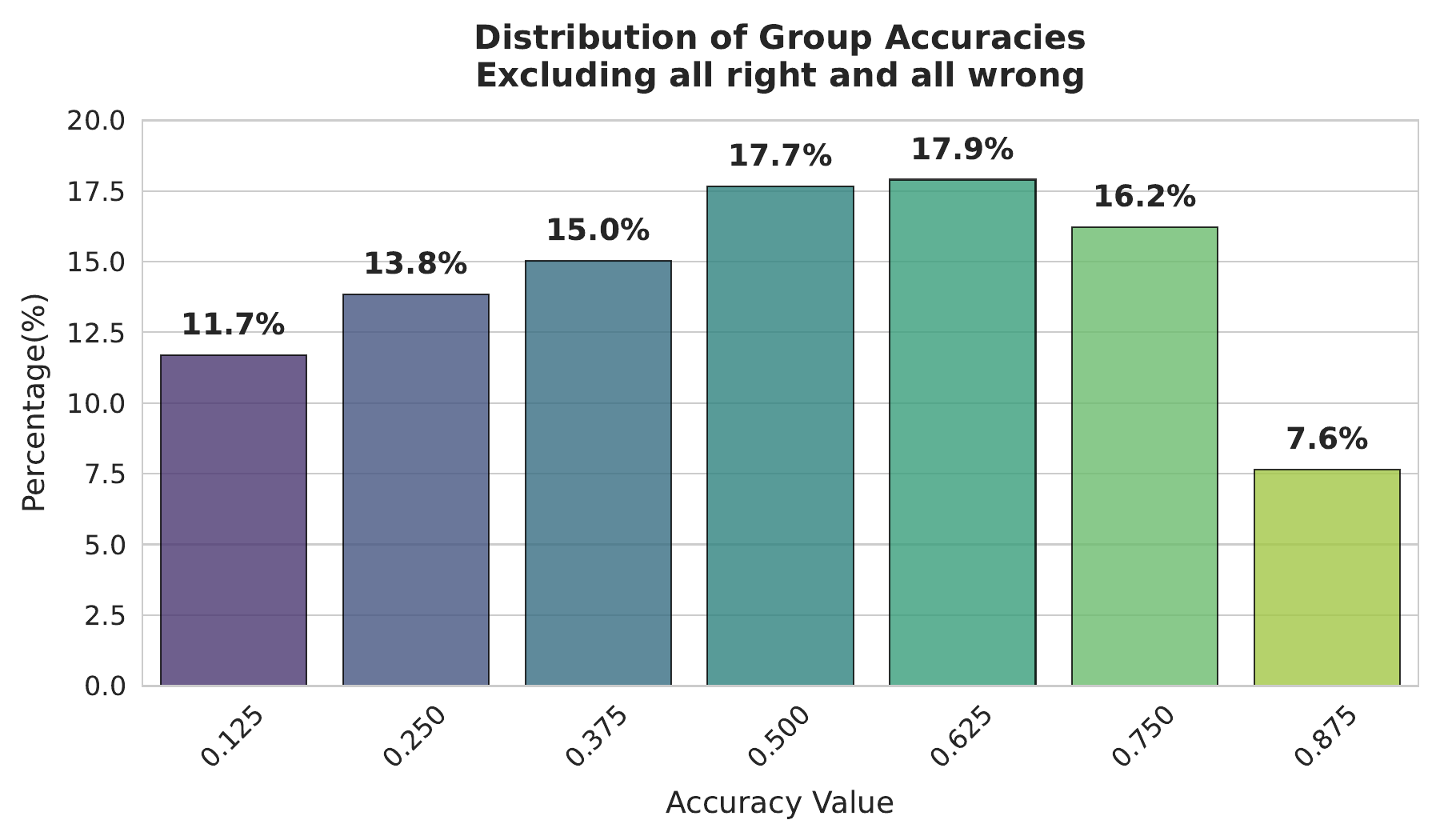}
    \caption{\textbf{Distribution of group accuracies for the retained MATH500 subset (excluding all-correct and all-incorrect groups).} The retained problems span the full range of intermediate difficulty levels, with no single accuracy bin dominating.}
    \label{fig:group_acc_distribution}
\end{figure}
Figure~\ref{fig:group_acc_distribution} shows the group-accuracy distribution of the retained subset. Filtering out all-correct and all-incorrect groups preserves $83.8\%$ of MATH500, indicating that only a small fraction of problems are discarded. The remaining problems are spread relatively evenly across accuracy bins from $0.125$ to $0.875$, with the largest bins ($0.500$ and $0.625$) accounting for less than $18\%$ each. This balanced coverage across difficulty levels suggests that our observations are not driven by a narrow slice of easy or hard problems, but rather reflect general reasoning behavior across the difficulty spectrum.


\section{Implementation Details}\label{sec:setting_details}
The training setup includes a batch size of 512, a learning rate of $1 \times 10^{-6}$, and a clip range between 0.2 and 0.28. Each response sequence is up to 3k tokens in length. The mini-batch size is set to 32. The temperature is 1.0 for training and 0.1 for evaluation. Following prior RLVR work~\citep{liu2025GHPO,yang2025SSPO}, we perform 8 rollouts per prompt and do not use entropy regularization or KL penalties during training (KL coefficient = 0, entropy loss = 0). For math reasoning, We report avg@32 on AIME 2024 and AIME 2025, and avg@8 on AMC 2023, because these benchmarks contain relatively few problems, averaging over multiple samples yields more stable estimates of model capability. For MATH500 and OlympiadBench, which are substantially larger, we report avg@1. For code generation, We report avg@4 across all code benchmarks, which balances the cost of repeated execution-based evaluation against variance reduction. 

\newpage
\section{Case Study}
We present the performance on mathematical data. Figure~\ref{fig:case_heatmap} shows the pair-wise wasserstein ditance matrix.

\noindent
\renewcommand{\arraystretch}{1.15}
\begin{tabular}{|>{\raggedright\arraybackslash}p{0.46\textwidth}|>{\raggedright\arraybackslash}p{0.46\textwidth}|}
\hline
\multicolumn{2}{|>{\centering\arraybackslash}p{0.93\textwidth}|}{%
\textcolor{black}{\textbf{Question:}} \textcolor{black}{How many ordered four-tuples of integers $(a, b, c, d)$ with $0 < a < b < c < d < 500$ satisfy:
\textbf{(1)} $a + d = b + c$ \quad \textbf{(2)} $bc - ad = 93$ \quad\quad \newline Remember to output the final answer in the format of \textbackslash\textbackslash boxed\{\}}
} \\
\hline
\rowcolor{lightgray}
\textbf{Case 1} \hfill \textcolor{correctgreen}{\textbf{$\boxed{870}$\, \checkmark}} &
\textbf{Case 2} \hfill \textcolor{incorrectred}{\textbf{$\boxed{2450}$\, $\times$}} \\
\hline
 
{\small
To find the number of ordered four-tuples $ (a, b, c, d) $ with $ 0 < a < b < c < d < 500 $ that satisfy the conditions:\newline
1. $ a + d = b + c $\newline
2. $ bc - ad = 93 $

We start by analyzing the given equations. From the condition $ a + d = b + c $, we can express $ d $ in terms of $ a, b, c $:\newline
$ d = b + c - a $

Substitute $ d = b + c - a $ into the second equation $ bc - ad = 93 $:\newline
$ bc - a(b + c - a) = 93 $\newline
$ bc - ab - ac + a^2 = 93 $\newline
$ b(c - a) - a(c - a) = 93 $\newline
$ (b - a)(c - a) = 93 $

Next, we need to find all pairs of integers $ (m, n) $ such that $ m = b - a $ and $ n = c - a $ and $ m \cdot n = 93 $. The integer factor pairs of 93 are:\newline
$ (1, 93), (3, 31) $

We iterate through these pairs and find eligible values of $ a, b, c, d $ satisfying $ 0 < a < b < c < d < 500 $.

\subsubsection*{Case 1: $ m \cdot n = 93 $ with $ m = b - a $ and $ n = c - a $}
For each pair $ (m, n) $:

\subsubsection*{Subcase 1: $ m = 1 $ and $ n = 93 $}
$ b - a = 1 $ and $ c - a = 93 $\newline
Thus, $ b = a + 1 $ and $ c = a + 93 $.
$ d = b + c - a = (a + 1) + (a + 93) - a = a + 94 $\newline
We require $ 0 < a < b < c < d < 500 $, hence:
$ 0 < a < a + 1 < a + 93 < a + 94 < 500 $\newline
This simplifies to:
$ 0 < a < 406 $\newline
Since $ a $ must be an integer:
$ 1 \leq a \leq 405 $\newline
There are 405 possible values for $ a $.

\subsubsection*{Subcase 2: $ m = 3 $ and $ n = 31 $}
$ b - a = 3 $ and $ c - a = 31 $\newline
Thus, $ b = a + 3 $ and $ c = a + 31 $.
$ d = b + c - a = (a + 3) + (a + 31) - a = a + 34 $\newline
We require $ 0 < a < b < c < d < 500 $, hence:\newline
$ 0 < a < a + 3 < a + 31 < a + 34 < 500 $\newline
This simplifies to:
$ 0 < a < 466 $\newline
Since $ a $ must be an integer:
$ 1 \leq a \leq 465 $\newline
There are 465 possible values for $ a $.

\subsubsection*{Total number of solutions:}
- For the pair $ (1, 93) $, there are 405 possible values of $ a $.\newline
- For the pair $ (3, 31) $, there are 465 possible values of $ a $.

Since these two cases are mutually exclusive, the total number of solutions is:
$ 405 + 465 = 870 $

So, the final answer is:
$ \boxed{870} $
}
 
&

{\small
To find the number of ordered four-tuples $(a, b, c, d)$ with $0 < a < b < c < d < 500$ that satisfy the equations $a + d = b + c$ and $bc - ad = 93$, we proceed with the following steps:

\subsubsection*{Step 1: Understand the given equations}
We have:\newline
1. $a + d = b + c$\newline 
2. $bc - ad = 93$

\subsubsection*{Step 2: Express $d$ in terms of $a, b,$ and $c$ from the first equation}
From $a + d = b + c$, we get:
$d = b + c - a$

\subsubsection*{Step 3: Substitute $d$ into the second equation}

Substitute $d = b + c - a$ into $bc - ad = 93$:
$bc - a(b + c - a) = 93$
$bc - ab - ac + a^2 = 93$
$bc - ab - ac + a^2 = 93$
$bc - ab - ac + a^2 = 93$
$b c - a b - a c + a^2 = 93$
$b c - a b - a c + a^2 = 93$
$b c - a b - a c + a^2 = 93$
$b (c - a) - a c + a^2 = 93$
$b (c - a) + a^2 - a c = 93$
$b (c - a) + a^2 - a c = 93$

\subsubsection*{Step 4: Iterate over possible values for $a, b, c$ and find $d$}
We need $0 < a < b < c < d < 500$ and $a + d = b + c$ and $bc - ad = 93$.

We will write a program to iterate over all possible values of $a, b, c$ and compute $d$ to check if both conditions are met.

Let’s implement this approach:
\begin{verbatim}
count = 0
for a in range(1, 499):
  for b in range(a+1, 500):
    for c in range(b+1, 500):
      d = b + c - a
      if d > c and d < 500:
        if b*c - a*d == 93:
          count += 1
print(count)
\end{verbatim}
 
Output: \texttt{2450}\newline
The number of ordered four-tuples $(a, b, c, d)$ with $0 < a < b < c < d < 500$ that satisfy the conditions $a + d = b + c$ and $bc - ad = 93$ is $\boxed{2450}$.
}
 
\\
\hline
\end{tabular}

 
\noindent
\renewcommand{\arraystretch}{1.15}
\begin{tabular}{|>{\raggedright\arraybackslash}p{0.46\textwidth}|>{\raggedright\arraybackslash}p{0.46\textwidth}|}
\hline
\rowcolor{lightgray}
\textbf{Case 1} \hfill \textcolor{correctgreen}{\textbf{$\boxed{870}$\, \checkmark}} &
\textbf{Case 3} \hfill \textcolor{correctgreen}{\textbf{$\boxed{870}$\, \checkmark}} \\
\hline
 
{\small
To find the number of ordered four-tuples $ (a, b, c, d) $ with $ 0 < a < b < c < d < 500 $ that satisfy the conditions:\newline
1. $ a + d = b + c $\newline
2. $ bc - ad = 93 $

We start by analyzing the given equations. From the condition $ a + d = b + c $, we can express $ d $ in terms of $ a, b, c $:\newline
$ d = b + c - a $

Substitute $ d = b + c - a $ into the second equation $ bc - ad = 93 $:\newline
$ bc - a(b + c - a) = 93 $\newline
$ bc - ab - ac + a^2 = 93 $\newline
$ b(c - a) - a(c - a) = 93 $\newline
$ (b - a)(c - a) = 93 $

Next, we need to find all pairs of integers $ (m, n) $ such that $ m = b - a $ and $ n = c - a $ and $ m \cdot n = 93 $. The integer factor pairs of 93 are:\newline
$ (1, 93), (3, 31) $

We iterate through these pairs and find eligible values of $ a, b, c, d $ satisfying $ 0 < a < b < c < d < 500 $.

\subsubsection*{Case 1: $ m \cdot n = 93 $ with $ m = b - a $ and $ n = c - a $}
For each pair $ (m, n) $:

\subsubsection*{Subcase 1: $ m = 1 $ and $ n = 93 $}
$ b - a = 1 $ and $ c - a = 93 $\newline
Thus, $ b = a + 1 $ and $ c = a + 93 $.
$ d = b + c - a = (a + 1) + (a + 93) - a = a + 94 $\newline
We require $ 0 < a < b < c < d < 500 $, hence:
$ 0 < a < a + 1 < a + 93 < a + 94 < 500 $\newline
This simplifies to:
$ 0 < a < 406 $\newline
Since $ a $ must be an integer:
$ 1 \leq a \leq 405 $\newline
There are 405 possible values for $ a $.

\subsubsection*{Subcase 2: $ m = 3 $ and $ n = 31 $}
$ b - a = 3 $ and $ c - a = 31 $\newline
Thus, $ b = a + 3 $ and $ c = a + 31 $.
$ d = b + c - a = (a + 3) + (a + 31) - a = a + 34 $\newline
We require $ 0 < a < b < c < d < 500 $, hence:\newline
$ 0 < a < a + 3 < a + 31 < a + 34 < 500 $\newline
This simplifies to:
$ 0 < a < 466 $\newline
Since $ a $ must be an integer:
$ 1 \leq a \leq 465 $\newline
There are 465 possible values for $ a $.

\subsubsection*{Total number of solutions:}
- For the pair $ (1, 93) $, there are 405 possible values of $ a $.\newline
- For the pair $ (3, 31) $, there are 465 possible values of $ a $.

Since these two cases are mutually exclusive, the total number of solutions is:
$ 405 + 465 = 870 $

So, the final answer is:
$ \boxed{870} $
}
 
&

{\small
To solve the problem, we need to find the number of ordered four-tuples $(a, b, c, d)$ of integers that satisfy the following conditions:\newline
$0 < a < b < c < d < 500$\newline
$a + d = b + c$\newline
$bc - ad = 93$

First, let's rewrite the equation $a + d = b + c$ as $d = b + c - a$.

Substitute $d = b + c - a$ into the second equation $bc - ad = 93$:\newline
$bc - a(b + c - a) = 93$\newline
$bc - ab - ac + a^2 = 93$\newline
$b(c - a) - a(c - a) = 93$\newline
$(b - a)(c - a) = 93$

We need to find pairs of integers $(x, y)$ such that $x = b - a$ and $y = c - a$, and $xy = 93$. The integer factor pairs of 93 are:\newline
$(1, 93), (3, 31)$

Since $0 < a < b < c < d < 500$, we ensure $a, b, c, d$ are valid integers in the range from 1 to 499.

\subsubsection*{Case 1: $x = 1$ and $y = 93$}
$b - a = 1 \quad \text{and} \quad c - a = 93$
This implies $b = a + 1$ and $c = a + 93$.

Next, substitute $d = b + c - a$:
$d = (a + 1) + (a + 93) - a = a + 94$

We need $0 < a < b < c < d < 500$:
$0 < a < (a + 1) < (a + 93) < (a + 94) < 500$

This translates to:
$0 < a$
$a + 94 < 500 \implies a < 406$

So, $1 \le a \le 405$ (since $a$ must be a positive integer).
The number of valid $a$ is 405.

\subsubsection*{Case 2: $x = 3$ and $y = 31$}
$b - a = 3 \quad \text{and} \quad c - a = 31$\newline
This implies $b = a + 3$ and $c = a + 31$.

Next, substitute $d = b + c - a$:\newline
$d = (a + 3) + (a + 31) - a = a + 34$

We need $0 < a < b < c < d < 500$:\newline
$0 < a < (a + 3) < (a + 31) < (a + 34) < 500$

This translates to:\newline
$0 < a$\newline
$a + 34 < 500 \implies a < 466$

So, $1 \le a \le 465$.\newline
The number of valid $a$ is 465.

\subsubsection*{Total number of solutions}
We have two cases:
\begin{enumerate}
    \item From $x = 1$ and $y = 93$: $405$ valid four-tuples.
    \item From $x = 3$ and $y = 31$: $465$ valid four-tuples.
\end{enumerate}

There is no overlap between the two cases because the factor pairs are different. So the total number of valid four-tuples is:
$405 + 465 = 870$

Thus, the number of ordered four-tuples $(a, b, c, d)$ that satisfy the conditions is $\boxed{870}$.
}
 
\\
\hline
\end{tabular}

\newpage
\begin{figure}[t]
\centering
\begin{subfigure}[b]{0.48\textwidth}
    \includegraphics[width=\textwidth]{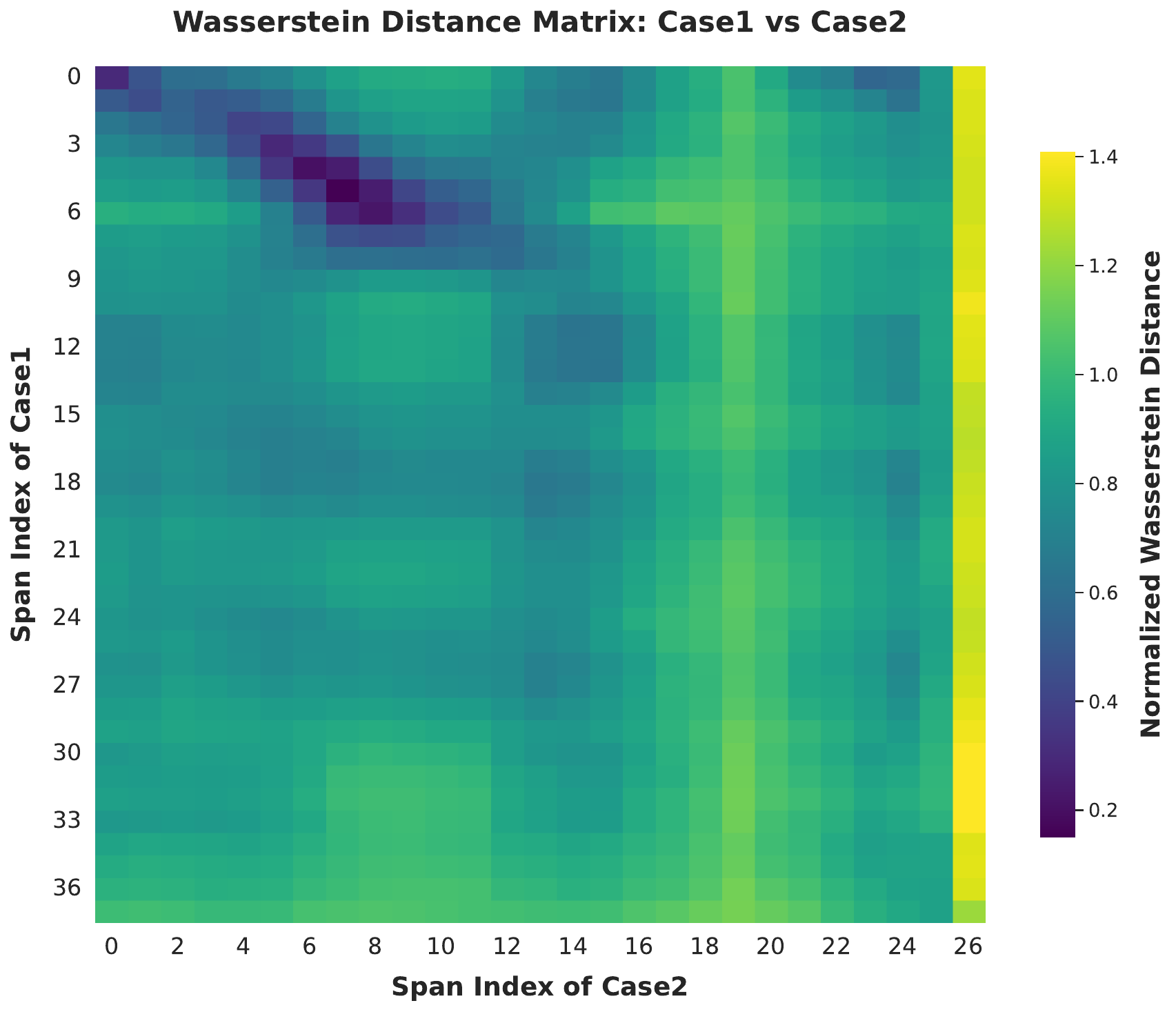}
    \label{fig:case1_case2}
\end{subfigure}
\hfill
\begin{subfigure}[b]{0.48\textwidth}
    \includegraphics[width=\textwidth]{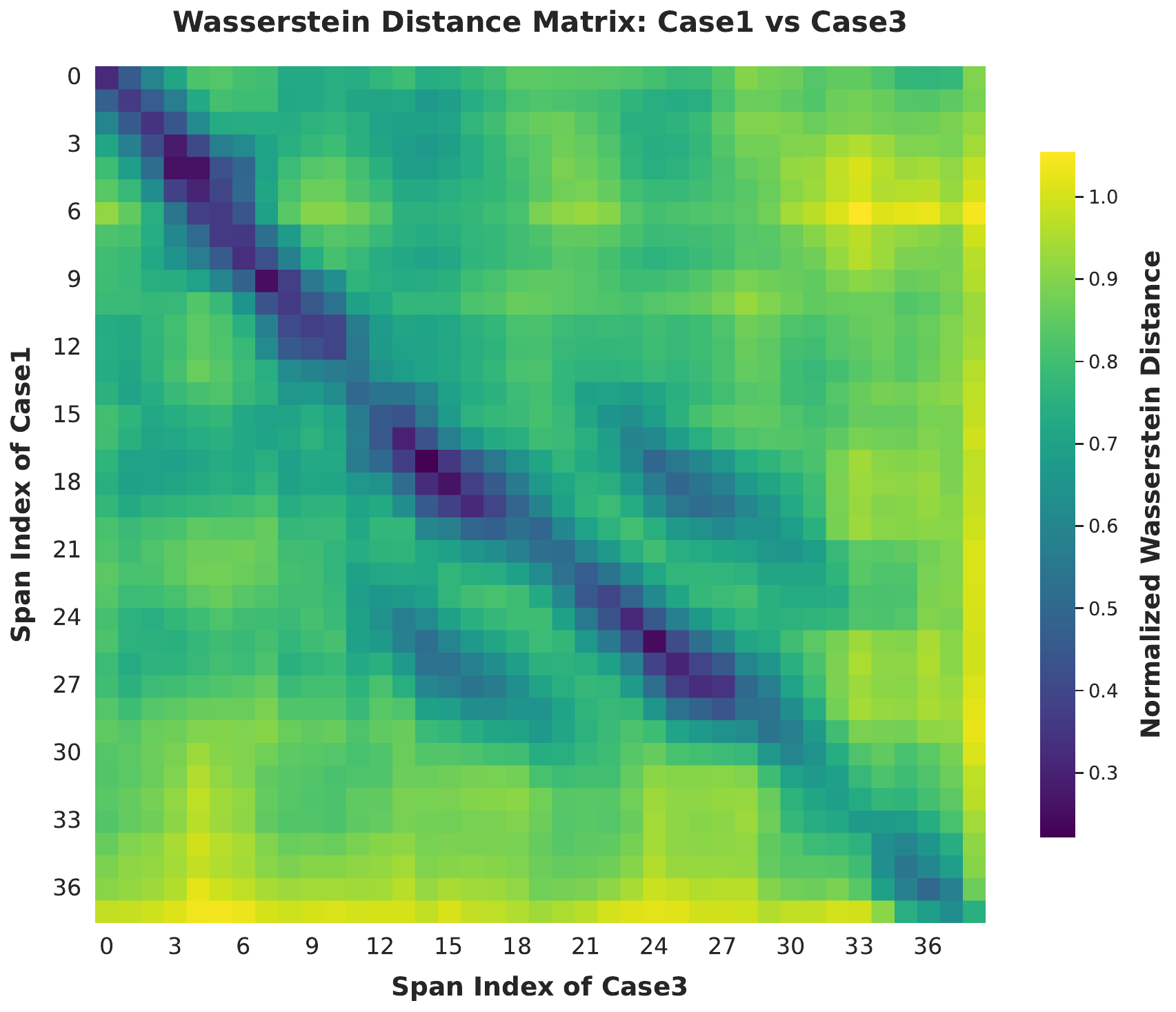}
    \label{fig:case1_case3}
\end{subfigure}
\caption{\textbf{Normalized Wasserstein Distance Heatmap of Case.}}
\label{fig:case_heatmap}
\end{figure}

\newpage

\bibliography{references}{}
\bibliographystyle{plain}

\end{document}